\newtheorem{definition}{Definition}
\newtheorem{proposition}{Proposition}
\newtheorem{lemma}{Lemma}
\newtheorem{corollary}{Corollary}
\newtheorem{remark}{Remark}
\newtheorem{example}{Example}
\DeclareSymbolFont{tipa}{T3}{cmr}{m}{n}
\DeclareMathAccent{\invbreve}{\mathalpha}{tipa}{16}
\newcommand{\intr}{\mathrm{int}}
\newcommand{\dom}{\mathrm{dom}}
\newcommand{\sign}{\mathrm{sign}}
\newcommand{\supp}{\mathrm{supp}}
\newcommand{\R}{\mathbb{R}}
\newcommand{\M}{\mathcal{M}}
\newcommand{\G}{\mathcal{G}}
\newcommand{\1}{\mathds{1}}
\newcommand{\colcst}[1]{\substack{#1}}
\DeclareMathOperator*{\argmax}{arg\,max}
\DeclareMathOperator*{\argmin}{arg\,min}
\begin{document}

%
\runningtitle{Convex relaxations of combinatorial penalties.}

%

\twocolumn[

\aistatstitle{Combinatorial Penalties: \\ Which structures are preserved by convex relaxations?}

%

\aistatsauthor{ Marwa El Halabi \And Francis Bach \And  Volkan Cevher }

\aistatsaddress{LIONS, EPFL \And   INRIA - ENS - PSL Research University \And LIONS, EPFL} 
]

\begin{abstract}
We consider the \emph{homogeneous} and the \emph{non-homogeneous} convex relaxations for combinatorial penalty functions defined on support sets. 
Our study identifies key differences in the tightness of the resulting relaxations through the notion of the \emph{lower combinatorial envelope} of a set-function along with new necessary conditions for support identification. We then propose a general adaptive estimator for convex monotone regularizers, and derive new sufficient conditions for support recovery in the  asymptotic setting. 
\end{abstract}

\section{Introduction}

Over the last years, \emph{sparsity} has been a key model in machine learning, signal processing, and statistics. While sparsity modelling is powerful, 
\emph{structured sparsity} models further exploit domain knowledge by characterizing the interdependency between the non-zero coefficients of an unknown parameter vector $w$. 
For example, in certain applications domain knowledge may dictate that we should favor non-zero patterns corresponding to:
 unions of groups  \cite{obozinski2011group} in cancer prognosis from gene expression data; or complements of union of groups  \cite{jacob2009group} in neuroimaging and background substraction, or rooted connected trees \cite{jenatton2011proximal, zhao2006grouped} in natural image processing. 
Incorporating such key prior information beyond just sparsity leads to significant improvements in estimation performance,  noise robustness, interpretability and sample complexity \cite{baraniuk2010model}.

Structured sparsity models are naturally encoded by combinatorial functions. However, direct combinatorial treatments often lead to intractable learning problems. Hence, we often use either non-convex greedy methods  or continuous convex relaxations, where the combinatorial penalty is replaced by a tractable convex surrogate; \textit{cf.},  \cite{baraniuk2010model, huang2011learning,bach2011learning}. 

In this paper, we adopt the convex approach because it benefits from a mature set of efficient numerical algorithms as well as strong analysis tools that rely on convex geometry in order to establish statistical efficiency. Convex formulations are also robust to model mis-specifications. Moreover, there is a rich set of 
convex penalties with structure-inducing properties already studied in the literature
\cite{yuan2006model, jacob2009group, jenatton2011structured, jenatton2011proximal, zhao2006grouped, obozinski2011group}. 
For an overview, we refer the reader to \cite{bach2011learning} and references therein.


For choosing a convex relaxation, a systematic approach, already adopted in \cite{bach2010structured, chandrasekaran2012convex, obozinski2012convex, halabi2015totally}, considers the \emph{tightest} convex relaxation of combinatorial penalties expressing the desired structure.  For instance, \cite{bach2010structured} shows that computing the tightest convex relaxation over the unit $\ell_\infty$-ball is tractable for the ensemble of \emph{monotone submodular functions}. 
Similarly, the authors in \cite{halabi2015totally} demonstrates the tractability of such relaxation for combinatorial penalties that can be described via \emph{totally unimodular} constraints.

A different principled approach in convex relaxations is proposed by \cite{obozinski2012convex}, where the authors considered the tightest \emph{homogeneous} convex relaxation of general set functions regularized by an $\ell_p$-norm. The authors show, for instance, the resulting norm takes the form of a generalized latent group Lasso norm \cite{obozinski2011group}. The homogeneity  imposed in \cite{obozinski2012convex}  naturally ensures the invariance of the regularizer to rescaling of the data. However, such requirement may cost a loss of structure as was observed in an example in \cite{halabi2015totally}. 
This observation begs the question: 

\begin{center}
\begin{minipage}{.75\linewidth}
When do the \emph{homogeneous} and \emph{non-homogeneous} convex relaxations differ and which structures can be encoded by each? 
\end{minipage}
\end{center}

In order to answer this question, we rigorously identify which combinatorial structures are preserved by the non-homogeneous relaxation in a manner similar to  \cite{obozinski2012convex} for the homogeneous one. We
further study the statistical properties of both relaxations. In particular, we consider the problem of support recovery in the context of regularized learning problems by these relaxed convex penalties, which was only investigated so far
in special cases, e.g., for norms associated with submodular functions \cite{bach2010structured}, or for the latent group Lasso norm \cite{obozinski2011group}.

To this end, this paper makes the following contributions:
\begin{itemize} \setlength\itemsep{0.2em}
\item We derive formulations of the non-homogeneous tightest convex relaxation of general $\ell_p$-regularized combinatorial penalties (Section \ref{sect:ConvRels}). We show that any \emph{monotone} set function is preserved by such relaxation, while the homogeneous relaxation only preserves a smaller subset of set-functions (Section \ref{sect:LCE}). 
\item We identify necessary conditions for support recovery in learning problems regularized by general convex penalties (Section \ref{sect:NecCond}).
\item We propose an adaptive weight estimation scheme 
and provide sufficient conditions for support recovery under the asymptotic regime (Section \ref{sect:SuffCond}). This scheme does not require any irrepresentability condition and is applicable to general monotone convex regularizers.
\item We identify sufficient conditions with respect to combinatorial penalties which ensure that the sufficient support recovery conditions hold with respect to the associated convex relaxations (Section~\ref{Sect:Disct}).
\item  We illustrate numerically the effect on support recovery of the choice of the relaxation 
as well as the adaptive weights scheme (Section~ \ref{sect:Simul}).
\end{itemize}
In the sequel, we defer all proofs to the Appendix.


\paragraph{Notation.}

Let $V = \{1,\dots,d\}$ be the ground set and $2^V = \{A | A \subseteq V\}$ be its power-set.
Given $w \in \R^d$ and a set $J \subseteq V$, $w_J$ denotes the vector in $\R^d$ s.t., $[w_J]_i = w_i,  i \in J$ and $[w_J]_i = 0,  i \not \in J$. $Q_{JJ}$ is defined similarly for a matrix $Q \in \R^{d \times d}$.
Accordingly, we let $\1_J$ be the indicator vector of the set $J$. 
We drop the subscript for $J = V$, so that $\1_V = \1$ denotes the vector of all ones. The notation $J^c$ denotes the set  complement of $J$ with respect to $V$.

The operations $|w|, w \geq w'$ and $w \circ v$ are applied element-wise. 
For $p > 0$, the $\ell_p$-(quasi) norm is given by $\| w \|_p = (\sum_{i=1}^d |w_i|^{p})^{1/p}$, and $\| w \|_\infty = \max_i |w_i|$.
For $p \in [1,\infty]$, we define the conjugate $q \in [1,\infty]$ via $\frac{1}{p} + \frac{1}{q} = 1$.

We call the set of non-zero elements of a vector $w$ the support, denoted by $\supp(w) = \{i : w_i \not = 0\} $. 
We use the notation from submodular analysis, $w(A) = \sum_{i \in A} w_i$.
We write $\overline{\R}_+$ for $\R_+ \cup \{+\infty\}$.  For a function $f : \R^d \rightarrow \overline{\R} =  \R  \cup \{+\infty\}$, we will denote by $f^\ast$ its Fenchel-Legendre conjugate.
We will denote by $\iota_S(w)$ the indicator function of the set $S$, taking value $0$ on the set $S$ and $+\infty$ outside it. 

\section{Combinatorial penalties and convex relaxations}
 We  consider positive-valued set functions of the form $F:2^V \rightarrow \overline{\R}_+$ such that $F(\varnothing) = 0, F(A)>0, \forall A \subseteq V$ to encode structured sparsity models. 
 For generality, we do not assume a priori that $F$ is \emph{monotone} (i.e., $F(A) \leq F(B), \forall A \subseteq B$). However, as we argue in the sequel,  convex relaxations of non-monotone set functions is  hopeless.

The domain of $F$ is defined as $\mathcal{D} := \{A: F(A) < +\infty\}$. We assume that it covers $V$, i.e., $\cup_{A \in \mathcal{D}} A = V$, which is equivalent to assuming that $F$ is finite at singletons if $F$ is monotone.
A finite-valued set function
$F$ is 
\emph{submodular} if and only if for all $A\subseteq B$ and $i \in B^c$, $F(B \cup \{i\}) - F(B) \leqslant F(A \cup \{i\}) - F(A)$~(see, e.g., \cite{fujishige2005submodular,bach2011learning}). Unless explicitly stated, we do not assume that $F$ is submodular.

We consider the same model in \cite{obozinski2012convex}, parametrized by $w \in \R^d$,  with general $\ell_p$-regularized combinatorial penalties:
$$F_p(w) = \frac{1}{q} F(\supp(w)) + \frac{1}{p} \| w \|_p^p$$ 
for $p \geq 1$, where the set function $F$ controls the structure of the model in terms of allowed/favored non-zero patterns and the $\ell_p$-norm serves to control the magnitude of the coefficients. Allowing $F$ to take infinite values let us
enforce hard constraints. 
For $p = \infty$, $F_p$ reduces to $F_\infty(w) = F(\supp(w)) + \iota_{\| w \|_\infty \leq 1}(w)$.
Considering the case $p \not = \infty$ is appealing to avoid the clustering artifacts of the values of
the learned vector induced by the $\ell_\infty$-norm.

Since such combinatorial regularizers lead to computationally intractable problems, we seek convex surrogate penalties that capture the encoded structure as much as possible. A natural candidate for a convex surrogate of $F_p$ is then its \emph{convex envelope} (largest convex lower bound), given by the biconjugate (the Fenchel conjugate of the Fenchel conjugate) $F_p^{\ast \ast}$. Two general approaches are proposed in the literature to achieve just this; one requires the surrogate to also be positively homogeneous \cite{obozinski2012convex} and thus considers the convex envelope of the positively homogeneous envelope of $F_p$, given by $F(\supp(w))^{1/q} \| w \|_p$, 
which we denote by $\Omega_p$, the other computes instead the convex envelope of $F_p$ directly \cite{halabi2015totally, bach2010structured}, which we denote by $\Theta_p$. Note that from the definition of convex envelope, it holds that $\Theta_p \geq \Omega_p$.

\subsection{Homogeneous and non-homogeneous convex envelopes} \label{sect:ConvRels}

In \cite{obozinski2012convex}, the homogeneous convex envelope $\Omega_p$ of $F_p$ was shown to correspond to the \emph{latent group Lasso norm} \cite{obozinski2011group} with groups set to all elements of the power set $2^V$. We recall this form of $\Omega_\infty$ in Lemma \ref{lem:HomEnv} as well as a variational form of $\Omega_p$ which highlights the relation between the two. Other variational forms can be found in the Appendix.

\begin{lemma}[\cite{obozinski2012convex}] \label{lem:HomEnv}
The homogeneous convex envelope $\Omega_p$ of $F_p$ is given by
{\small
\begin{align}
\Omega_p(w) 
&=\inf_{\eta \in \R^d_+}  \frac{1}{p} \sum_{j=1}^d \frac{|w_j|^{p}}{\eta_j^{p -1}} + \frac{1}{q} \Omega_\infty(\eta), \label{eq:VarLpLinf}  \\
\Omega_\infty(w) &=  \min_{\alpha \geq 0} \Big\{ \sum_{S \subseteq V} \alpha_S F(S) :  \sum_{S \subseteq V} \alpha_S \1_S \geq |w| \Big\}.
\label{eq:ConvCover}
\end{align}
}
\end{lemma}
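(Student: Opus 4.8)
The plan is to work throughout with Fenchel conjugates. Recall from the discussion preceding the lemma that $\Omega_p = g_p^{\ast\ast}$, where $g_p(w) := F(\supp(w))^{1/q}\|w\|_p$ is the positively homogeneous envelope of $F_p$. Since $g_p$ is positively homogeneous of degree one, a scaling argument gives $g_p^{\ast} = \iota_{C_p}$ with $C_p := \{s\in\R^d : \langle s,w\rangle\le g_p(w)\ \text{for all } w\}$, hence $\Omega_p = \iota_{C_p}^{\ast} = \sup_{s\in C_p}\langle s,\cdot\rangle$ is the support function of $C_p$. The first task is to make $C_p$ explicit: testing the defining inequality on vectors $w$ supported inside a fixed $S$ and using $\sup\{\langle s_S,w\rangle/\|w\|_p : \supp(w)\subseteq S\} = \|s_S\|_q$ yields
\[
C_p = \{\,s\in\R^d : \|s_S\|_q\le F(S)^{1/q}\ \text{ for all } S\subseteq V\,\},
\]
constraints with $F(S)=+\infty$ being vacuous; for $p=\infty$ ($q=1$) this is $C_\infty = \{s : \|s_S\|_1\le F(S)\ \forall S\}$, and for $p\in(1,\infty)$ one has the key equivalence $s\in C_p\iff(|s_i|^q)_{i\in V}\in C_\infty$.

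The case $p=\infty$ now yields \eqref{eq:ConvCover}. By sign-symmetry of $g_\infty$ we may take $w\ge0$ and restrict to $s\ge0$, so $\Omega_\infty(w)$ equals the optimum of the linear program $\max_{s\ge0}\{\langle s,w\rangle : s(S)\le F(S)\ \forall S\in\D\}$. It is feasible ($s=0$) and bounded — because the assumption $\cup_{A\in\D}A=V$ forces $C_\infty$ to be bounded — so strong LP duality applies; its dual, with multipliers $\alpha_S\ge0$, is $\min_{\alpha\ge0}\{\sum_S\alpha_S F(S) : \sum_{S\ni i}\alpha_S\ge w_i\ \forall i\}$, which is exactly \eqref{eq:ConvCover} since $\sum_{S\ni i}\alpha_S = [\sum_S\alpha_S\1_S]_i$. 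In particular $\Omega_\infty$ is the support function of $C_\infty$.

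For $1<p<\infty$, let $h(w)$ denote the right-hand side of \eqref{eq:VarLpLinf}, so $h = \inf_{\eta\ge0}\phi(\cdot,\eta)$ with $\phi(w,\eta) = \tfrac1p\sum_j|w_j|^p\eta_j^{-(p-1)} + \tfrac1q\Omega_\infty(\eta)$. Using $-\inf_\eta\phi = \sup_\eta(-\phi)$ (no minimax interchange is needed), $h^{\ast}(s) = \sup_{w,\,\eta\ge0}[\langle s,w\rangle-\phi(w,\eta)]$, which separates over coordinates: the inner supremum over each $w_j$ is the conjugate of the scaled power $w_j\mapsto\tfrac1p|w_j|^p\eta_j^{-(p-1)}$, equal to $\tfrac1q\eta_j|s_j|^q$ (using $(p-1)q=p$). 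Hence $h^{\ast}(s) = \tfrac1q\sup_{\eta\ge0}[\langle(|s_j|^q)_j,\eta\rangle-\Omega_\infty(\eta)] = \tfrac1q\Omega_\infty^{\ast}\big((|s_j|^q)_j\big)$, and since (by the previous paragraph) $\Omega_\infty = \sup_{u\in C_\infty}\langle u,\cdot\rangle$ with $C_\infty$ closed convex, $\Omega_\infty^{\ast}=\iota_{C_\infty}$; therefore $h^{\ast}(s) = \iota_{\{s\,:\,(|s_j|^q)_j\in C_\infty\}}(s) = \iota_{C_p}(s) = g_p^{\ast}(s)$, so $h^{\ast\ast} = g_p^{\ast\ast} = \Omega_p$. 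It remains to verify that $h$ is closed, so that $h=h^{\ast\ast}$: $\Omega_\infty$ is coercive (since $C_\infty$, containing a small cube by the argument above, has nonempty interior), and for $j\notin\supp(w)$ one may take $\eta_j=0$ by monotonicity of $\Omega_\infty$ in $|\eta|$; thus $\eta\mapsto\phi(w,\eta)$ is coercive on the relevant face, the infimum is attained, and the standard partial-minimization theorem gives that $h$ is closed. Hence $h = h^{\ast\ast} = \Omega_p$, which is \eqref{eq:VarLpLinf}; the endpoints $p=1$ and $p=\infty$ follow under the usual conventions ($1/q=0$ resp.\ $1/p=0$, and $\|\cdot\|_q$ read as $\|\cdot\|_\infty$ resp.\ $\|\cdot\|_1$), the latter being \eqref{eq:ConvCover}.

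The step I expect to be the main obstacle is the explicit evaluation of $C_p$ in the first paragraph — specifically, justifying $\sup\{\langle s_S,w\rangle/\|w\|_p:\supp(w)\subseteq S\}=\|s_S\|_q$ when some entries of $s_S$ vanish (a short limiting argument, since the maximizing direction then lies on a smaller face) and the reduction to $\supp(s)\subseteq\supp(w)$ with matching signs (from the sign-symmetry of $g_p$ and of $\|\cdot\|_p$). The other delicate point is strong LP duality in the second paragraph: finiteness and attainment of the primal optimum, which the whole reduction rests on, is exactly where $\cup_{A\in\D}A=V$ enters, and the same hypothesis underlies the coercivity of $\Omega_\infty$ used in the third paragraph. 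Everything else is routine convex calculus — conjugates of perspective and power functions and closedness of partial minimizations.
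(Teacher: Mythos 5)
Your proof is correct, but note that the paper never proves this lemma at all: it is imported verbatim from Obozinski and Bach (reference \cite{obozinski2012convex}), and the appendix only re-lists the homogeneous variational forms before proving their \emph{non-homogeneous} analogues. So the comparison is really with the paper's treatment of $\Theta_p$, and there your route is genuinely different. The paper computes $F_{\mu\lambda}^{\ast}$ by brute force — splitting the supremum over support patterns $\eta\in\{0,1\}^d$, relaxing to the convex closure $f^-$ on $[0,1]^d$, and then biconjugating with a Sion minimax interchange — whereas you go through the polar description $g_p^{\ast}=\iota_{C_p}$, identify $C_p=\{s:\|s_S\|_q\le F(S)^{1/q}\ \forall S\}$, and reduce everything to $C_\infty$ via $s\mapsto(|s_i|^q)_i$; your $C_p$ is precisely the constraint set in the dual form \eqref{eq:SupportForm} of the appendix, so your argument derives that representation and \eqref{eq:VarLpLinf} simultaneously, and it avoids any minimax step (you only use $\sup_w\inf_\eta\le\sup_{w,\eta}$ in the harmless direction). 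Two points deserve the care you already flag: first, since $F$ is \emph{not} assumed monotone and $g_p(w)$ involves $F(\supp(w))$ for the exact support, the inclusion $C_p\subseteq\{s:\|s_S\|_q\le F(S)^{1/q}\}$ for a general $S$ really does need the limiting argument with $w$ supported exactly on $S$ (testing only on the face $\supp(s)\cap S$ would give the constraint for the wrong set); second, in evaluating $h^{\ast}$ you should record that $\sup_{\eta\ge0}\left[\langle|s|^q,\eta\rangle-\Omega_\infty(\eta)\right]$ equals the unconstrained conjugate $\Omega_\infty^{\ast}(|s|^q)$ because $\Omega_\infty$ is absolute and $|s|^q\ge0$. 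With those two sentences added, the argument is complete and, if anything, cleaner than reproving the lemma by the paper's biconjugation template.
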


The non-homogeneous convex envelope $\Theta_p$ of $F_p$ is only considered thus far in the case where $p = \infty$. 
\cite{halabi2015totally}  shows that $\Theta_\infty(w) = \inf_{\eta \in [0,1]^d} \{ f(\eta) : \eta \geq |w|\}$ where $f$ is any proper $(\dom(f) \not = \emptyset)$, lower semi-continuous (l.s.c.) convex \emph{extension} of $F$, i.e., $f(\1_A) = F(A), \forall A\subseteq V$ (\textit{cf.}, Lemma 1 in \cite{halabi2015totally}). A natural choice for $f$ is the \emph{convex closure} of $F$, which corresponds to the \emph{tightest} convex extension of $F$ on $[0,1]^d$  (\textit{cf.}, Appendix for a more rigorous treatment).

Lemma \ref{lem:NonHomEnv} below presents this choice, deriving a new form of $\Theta_\infty$ that parallels \eqref{eq:ConvCover}.
 We also derive the non-homogeneous convex envelope $\Theta_p$ of $F_p$ for any $p \geq 1$ and present the variational form relating it to $\Theta_\infty$ in Lemma \ref{lem:NonHomEnv}.
 For simplicity, the variational form~\eqref{eq:VarLpLinfNonHomo} presented below holds only for monotone functions $F$; the general form and other variational forms that parallel the ones known for the homogeneous envelope are presented in the Appendix.

\begin{lemma}\label{lem:NonHomEnv}
The non-homogeneous convex envelope $\Theta_p$ of $F_p$, for monotone functions $F$, is given by
{\small
\begin{align}
\Theta_p(w) 
&=\inf_{\eta \in [0,1]^d}  \frac{1}{p} \sum_{j=1}^d \frac{|w_j|^{p}}{\eta_j^{p -1}} + \frac{1}{q} \Theta_\infty(\eta),
\label{eq:VarLpLinfNonHomo}  \\ 
 \Theta_\infty(w)  &= \min_{\alpha \geq 0} \Big\{ \sum_{S \subseteq V} \alpha_S F(S) :  \sum_{S \subseteq V} \alpha_S \1_S \geq |w|, \sum_{S \subseteq V} \alpha_S =1 \Big\}. 
 \label{eq:ConvCoverNonHomo}
\end{align}
}
\end{lemma}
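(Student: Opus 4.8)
The plan is to establish \eqref{eq:ConvCoverNonHomo} first and then to bootstrap \eqref{eq:VarLpLinfNonHomo} from it by a Fenchel--conjugate computation; monotonicity of $F$ enters only through the characterization of $\Theta_\infty$ we quote from \cite{halabi2015totally} (the Appendix records the adjustments needed without it). For the $\ell_\infty$ case \eqref{eq:ConvCoverNonHomo}, recall from \cite{halabi2015totally} that $\Theta_\infty(w)=\inf_{\eta\in[0,1]^d}\{f(\eta):\eta\ge|w|\}$ with $f$ the convex closure of $F$, i.e.\ the tightest convex extension of $F$ to $[0,1]^d$. First I would put $f$ in linear-programming form: the epigraph of the convex closure is, by definition, the convex hull of $\{(\1_S,t):S\in\D,\ t\ge F(S)\}$ (sets outside $\D$ contribute nothing); this hull is already closed (finitely many rays over a compact simplex of $\1_S$'s, with finite heights $F(S)$), so one reads off $f(\eta)=\min_{\alpha\ge0}\{\sum_S\alpha_S F(S):\sum_S\alpha_S\1_S=\eta,\ \sum_S\alpha_S=1\}$ for $\eta\in[0,1]^d$, the minimum attained; it is a genuine extension because $\sum_S\alpha_S\1_S=\1_T$, $\sum_S\alpha_S=1$, $\alpha\ge0$ force $\alpha$ to be concentrated on $\{T\}$. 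Substituting this form and eliminating $\eta$ yields \eqref{eq:ConvCoverNonHomo}: the bound $\eta\le\1$ is automatic since $\alpha\ge0$ and $\sum_S\alpha_S=1$ already force $\sum_S\alpha_S\1_S\in[0,1]^d$, while the remaining constraints read $\sum_S\alpha_S\1_S=\eta\ge|w|$. (Feasibility for $w$ of small sup-norm uses that $\D$ covers $V$; an infeasible program simply returns $+\infty$.)

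For general $p$, fix $p\in(1,\infty)$ (the endpoint $p=\infty$ is \eqref{eq:ConvCoverNonHomo} itself, and $p=1$ is direct since then $\tfrac1q=0$), and let $R(w)$ denote the right-hand side of \eqref{eq:VarLpLinfNonHomo}. Each map $(w_j,\eta_j)\mapsto|w_j|^p/\eta_j^{p-1}$ is the lower-semicontinuous closure of the perspective of $u\mapsto|u|^p$, hence jointly convex and l.s.c.\ on $\R\times\R_+$; adding $\tfrac1q\Theta_\infty(\eta)+\iota_{[0,1]^d}(\eta)$ preserves joint convexity and lower semicontinuity, so $R$, being the infimum of this function over the \emph{compact} box $\eta\in[0,1]^d$, is convex and l.s.c., and $0\le R\le F_p$ (the upper bound by taking $\eta=\1_{\supp(w)}$ and using $\Theta_\infty(\1_A)\le F_\infty(\1_A)=F(A)$), in particular $R$ is proper. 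Hence $R=R^{\ast\ast}$, and it suffices to prove $R^\ast=F_p^\ast$, which then gives $R=R^{\ast\ast}=F_p^{\ast\ast}=\Theta_p$.

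For the conjugate computation: splitting $F_p$ over $A=\supp(w)$ and using $(\tfrac1p\|\cdot\|_p^p)^\ast=\tfrac1q\|\cdot\|_q^q$ on each support block gives $F_p^\ast(s)=\tfrac1q\sup_{A\subseteq V}[\|s_A\|_q^q-F(A)]$; the same computation for $F_\infty(w)=F(\supp(w))+\iota_{\|\cdot\|_\infty\le1}(w)$, together with $\Theta_\infty^\ast=(F_\infty^{\ast\ast})^\ast=F_\infty^\ast$, gives $\Theta_\infty^\ast(r)=\sup_{A\subseteq V}[\|r_A\|_1-F(A)]$, so that $F_p^\ast(s)=\tfrac1q\Theta_\infty^\ast(|s|^{\circ q})$, where $|s|^{\circ q}$ denotes the vector with entries $|s_i|^q$. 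For $R^\ast$, conjugation turns the inner minimization over $\eta$ into a joint supremum: $R^\ast(s)=\sup_{\eta\in[0,1]^d}[\psi_\eta(s)-\tfrac1q\Theta_\infty(\eta)]$ with $\psi_\eta(s)=\sup_w\{\langle s,w\rangle-\tfrac1p\sum_j|w_j|^p/\eta_j^{p-1}\}$; this supremum separates over coordinates, and using that the conjugate of $u\mapsto\tfrac1p c\,|u|^p$ is $v\mapsto\tfrac1q c^{-(q-1)}|v|^q$ together with the identity $(p-1)(q-1)=1$ (i.e.\ $p+q=pq$), one finds $\psi_\eta(s)=\tfrac1q\langle\eta,|s|^{\circ q}\rangle$, the boundary $\eta_j=0$ being consistent with $0/0=0$. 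Thus $R^\ast(s)=\tfrac1q\sup_{\eta\in[0,1]^d}[\langle\eta,|s|^{\circ q}\rangle-\Theta_\infty(\eta)]$, and since $|s|^{\circ q}\ge0$, $\dom\Theta_\infty\subseteq[-1,1]^d$, and $\Theta_\infty(\eta)=\Theta_\infty(|\eta|)$ ($F_\infty$, hence $\Theta_\infty$, being even in each coordinate), the restricted supremum equals the full conjugate $\Theta_\infty^\ast(|s|^{\circ q})$; hence $R^\ast=F_p^\ast$, as required.

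I expect the main obstacles to be the following two points; the perspective/convexity bookkeeping and the sign reductions are routine by comparison. First, checking that the linear program in the $\ell_\infty$ step really is the convex closure and a valid extension --- the ``only the trivial representation of $\1_T$'' argument, together with handling $F(S)=+\infty$ by restricting all sums to $\D$ and invoking the domain-covering assumption for feasibility. Second, in the conjugate computation, keeping the box $\eta\in[0,1]^d$ and $\dom\Theta_\infty\subseteq[-1,1]^d$ consistent when replacing the restricted supremum by $\Theta_\infty^\ast$, and treating the boundary $\eta_j=0$ of the perspective terms correctly. Verifying that $R$ is l.s.c.\ (infimum of a jointly l.s.c.\ function over a compact set), which is what licenses $R=R^{\ast\ast}$, is the one remaining place where one must argue rather than merely compute.
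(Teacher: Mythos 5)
Your argument is correct. For the $\ell_\infty$ formula \eqref{eq:ConvCoverNonHomo} you follow essentially the paper's route: quote the characterization $\Theta_\infty(w)=\inf\{f(\eta):\eta\ge|w|\}$ from \cite{halabi2015totally}, instantiate $f$ as the convex closure, identify the latter with the linear program over convex combinations of indicator vectors (the paper's Definition of the convex closure), and check the extension property by the same ``only the trivial representation of $\1_T$'' argument. For the general-$p$ formula \eqref{eq:VarLpLinfNonHomo}, however, you take a genuinely different path. The paper computes $F_p^\ast$ directly, passes from $\sup_{\eta\in\{0,1\}^d}$ to $\sup_{\eta\in[0,1]^d}$ with $F$ replaced by $f^-$, and then obtains $F_p^{\ast\ast}$ by swapping $\sup_s$ and $\inf_\eta$ via Sion's minimax theorem, reading off the variational form from the inner maximization. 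You instead take the candidate right-hand side $R$, verify it is proper, convex and l.s.c. (joint convexity and lower semicontinuity of the perspective terms, partial minimization over the compact box), and then show $R^\ast=F_p^\ast$ by a coordinatewise conjugate computation, concluding $R=R^{\ast\ast}=F_p^{\ast\ast}$. The two approaches are of comparable length; yours trades the minimax swap for the l.s.c. verification, which is arguably the cleaner division of labor since it isolates exactly where an analytic (rather than computational) argument is needed, and your identity $F_p^\ast(s)=\tfrac1q\Theta_\infty^\ast(|s|^{\circ q})$ makes transparent why $\Theta_p$ depends on $F$ only through $\Theta_\infty$. One small remark: your chain never visibly uses monotonicity of $F$, and indeed it establishes the variational form with $\Theta_\infty(\eta)$ in place of the convex closure $f^-(\eta)$ for arbitrary $F$; this is consistent with the paper's general statement because the first term of the objective is nonincreasing in $\eta$, so replacing $f^-$ by its monotonization $\Theta_\infty$ does not change the infimum, and for monotone $F$ the two coincide on $[0,1]^d$ anyway.
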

The infima in \eqref{eq:VarLpLinf} and \eqref{eq:VarLpLinfNonHomo}, for $w \in \dom(\Theta_p)$, can be replaced by a minimization, if we extend $b \rightarrow \frac{a}{b}$ by continuity in zero with $\frac{a}{0} = \infty$ if $a \not = 0$ and $0$ otherwise, as suggested in \cite{jenatton2010structured} and \cite{bach2012optimization}.
Note that, for $p = 1$, both relaxations reduce to $\Omega_1 =\Theta_1 = \| \cdot \|_1$.  Hence, the $\ell_1$-relaxations essentially lose the combinatorial structure encoded in $F$. We thus follow up on the case $p > 1$.

In order to decide when to employ $\Omega_p$ or $\Theta_p$, it is of interest to study the respective properties of these two relaxations and to identify when they coincide. Remark \ref{rmk:SubNonHomoEnv} shows that the
homogeneous and non-homogeneous envelopes are identical, for $p=\infty$, for monotone submodular functions.

\begin{remark}\label{rmk:SubNonHomoEnv}
If $F$ is a monotone submodular function then $\Theta_\infty(w) =  \Omega_\infty(w) = f_L(|w|), \forall w \in [-1,1]^d$, where $f_L$ denotes the Lov\'asz extension of $F$ \cite{lovasz1983submodular}. 
\end{remark}


\begin{figure*}\label{fig:l0l2fct}
\vspace{-10pt}
\centering
\begin{tabular}{c c c}
\includegraphics[trim=53 200 20 200, clip, scale=.22]{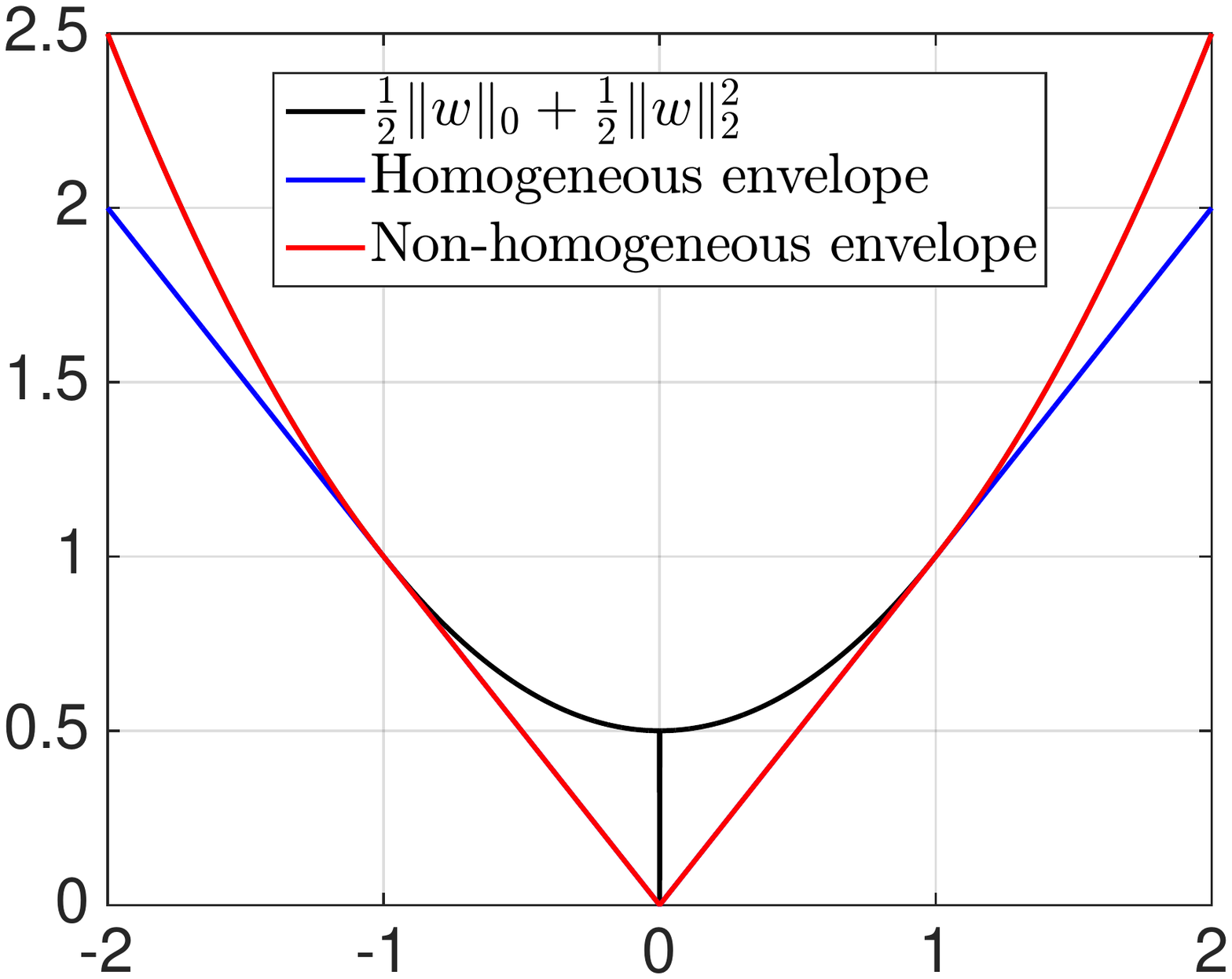} &
\includegraphics[scale=.25]{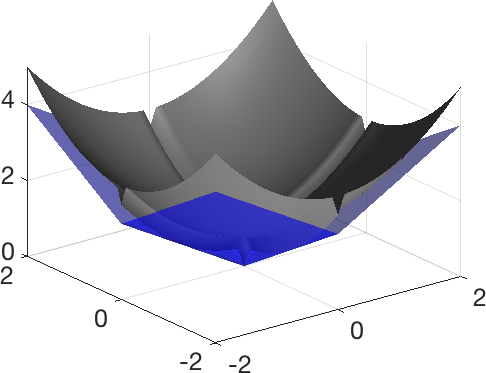} &
\includegraphics[scale = .25]{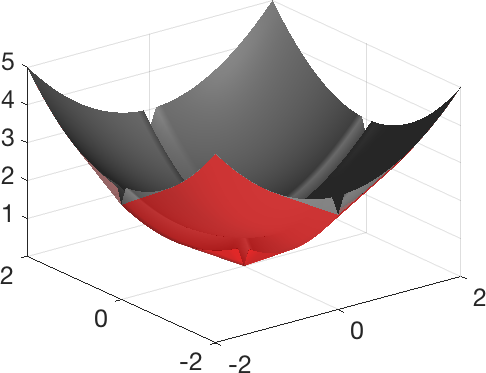} 
\end{tabular}
\caption{$\ell_2$-regularized cardinality example in one dimension (left) and two dimensions (middle: homogeneous, right: non-homogeneous).  }
\end{figure*}

The two relaxations do not coincide in general: Note the added constraints $\eta \in [0,1]^d$ in (\ref{eq:VarLpLinfNonHomo}) and the sum constraint on $\alpha$ in (\ref{eq:ConvCoverNonHomo}).
Another clear difference to note is that $\Omega_p$ are norms that belong to the broad family of H-norms \cite{micchelli2013regularizers, bach2012optimization}, as shown in \cite{obozinski2012convex}. 
On the other hand, by virtue of being non-homogeneous, $\Theta_p$ are not norms in general. 
We illustrate below two interesting examples where $\Omega_p$ and $\Theta_p$ differ.

\begin{example}[Berhu penalty] Since the cardinality function $F(S) = |S|$ is a monotone submodular function, $\Theta_\infty(w) = \Omega_\infty(w) = \| w \|_1$. However, this is not the case for $p\not = \infty$. In particular, we consider the $\ell_2$-regularized cardinality function $F^{card}_2(w) = \frac{1}{2}\| w \|_0 + \frac{1}{2} \| w \|_2^2$. 
Figure \ref{fig:l0l2fct} shows that the non-homogeneous envelope is \emph{tighter} than the homogeneous one in this case. Indeed, $\Omega^{card}_2$ is simply the $\ell_1$-norm, while $\Theta^{card}_2$  is given by $[\Theta^{card}_2(w)]_i = |w_i|$ if $|w_i| \leq 1$ and $[\Theta^{card}_2(w)]_i = \frac{1}{2} |w_i|^2 + \frac{1}{2}$ otherwise. This penalty, called ``Berhu,'' is introduced in \cite{owen2007robust} to produce a robust ridge regression estimator and is shown to be the convex envelope of  $F^{card}_2$ in~\cite{jojic2011convex}.
\end{example}

This kind of behavior, where the non-homogeneous relaxation $\Theta_p$ acts as an $\ell_1$-norm on the small coefficients and as $\frac{1}{p} \| w\|_p^p$ for large ones, is not limited to the Berhu penalty, but holds for general set functions. However the point where the penalty moves from one mode to the other depends on the structure of $F$ and is different along each coordinate.
This is easier to see via the second variational form of  $\Theta_p$ presented in the Appendix. We further illustrate in the following example.

\begin{example}[Range penalty] Consider the range function defined as $\text{range}(A) = \max(A) - \min(A) + 1$ where $\max(A)$ ($\min(A)$) denotes maximal (minimal) element in $A$. This penalty allow us to favor the selection of interval non-zero patterns on a chain or rectangular patterns on grids. It was shown in \cite{obozinski2012convex} that $\Omega_p(w) = \| w \|_1$ for any $p \geq 1$. On the other hand, $\Theta_p$ has no closed form solution, but is different from $\ell_1$-norm. Figure \ref{fig:rangeBalls} illustrates the balls of different radii of $\Theta_\infty$ and $\Theta_2$. We can see how the penalty morphs from $\ell_1$-norm to $\ell_\infty$ and squared $\ell_2$-norm respectively, with different ``speed'' along each coordinate. Looking carefully for example on the ball $\Theta_2(w) \leq 2$, we can see that the penalty acts as an $\ell_1$-norm along the $(x,z)$-plane and as a squared $\ell_2$-norm along the $(y,z)$-plane.
\end{example}

We highlight other ways in which the two relaxations differ and their implications in the sequel. 

In terms of computational efficiency, note that even though
the formulations (\ref{eq:VarLpLinf}) and (\ref{eq:VarLpLinfNonHomo}) are jointly convex in $(w,\eta)$, 
$\Omega_p$ and $\Theta_p$ can still be intractable to compute and to optimize. 

However, for certain classes of functions, they are tractable. For example, since for monotone submodular functions, $\Omega_\infty = \Theta_\infty$ is the Lov\'asz extension of $F$, as stated in Remark \ref{rmk:SubNonHomoEnv}, then they can be efficiently computed by the greedy algorithm~\cite{bach2011learning}. Moreover, efficient algorithms to compute $\Omega_p$, the associated proximal operator and to solve learning problems regularized with $\Omega_p$ is proposed in \cite{obozinski2012convex}. 
Similarly, if $F$ can be expressed by integer programs over totally unimodular constraints as in \cite{halabi2015totally}, then $\Omega_\infty$, $\Theta_\infty$ and their associated Fenchel-type operators can be computed efficiently by linear programs. Hence, we can use conditional gradient algorithms for numerical solutions. 

\begin{figure}\label{fig:rangeBalls}
\centering 
\begin{tabular}{c c c}
\hspace{-15pt}
\includegraphics[scale=.25]{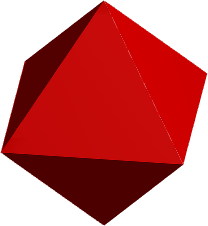} &
\includegraphics[scale=.33]{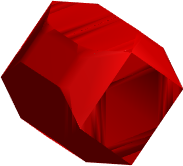} &
\includegraphics[scale=.15]{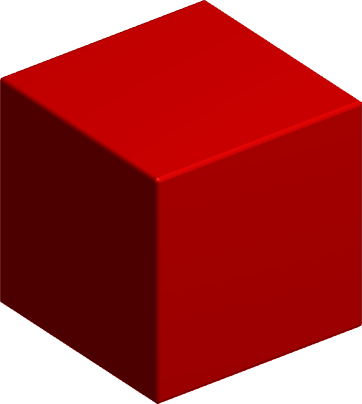} \\
\hspace{-15pt}
\includegraphics[scale=.25]{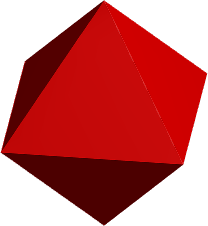} &
\includegraphics[scale=.28]{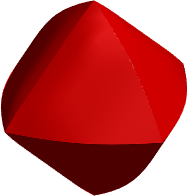} &
\includegraphics[scale=.36]{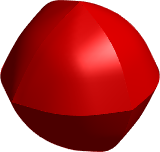} \\
\end{tabular}
\caption{Balls of different radii of the non-homogeneous $\ell_\infty$-convex envelope of the range function (top): $\Theta_\infty(w) \leq 1$ (left), $\Theta_\infty(w) \leq 2$ (middle), $\Theta_\infty(w) \leq 3$ (right)  and of its $\ell_2$-convex envelope (bottom): $\Theta_2(w) \leq 1$ (left), $\Theta_2(w) \leq 2$ (middle), $\Theta_2(w) \leq 4$ (right).}
\vspace{-12pt}
\end{figure}
\subsection{Lower combinatorial envelopes} \label{sect:LCE}

In this section, we are interested in analyzing which combinatorial structures are preserved by each relaxation.
To that end, we generalize
the notion of \emph{lower combinatorial envelope} (LCE)  \cite{obozinski2012convex}.
The homogeneous LCE $F_-$  of $F$ is defined as the set function which agrees with the $\ell_\infty$-homogeneous convex relaxation of $F$ at the vertices of the unit hypercube, i.e.,  $F_-(A) = \Omega_\infty(\1_A), \forall A \subseteq V$. 

For the non-homogeneous relaxation, we define the non-homogeneous LCE similarly as $\tilde{F}_-(A) = \Theta_\infty(\1_A)$.
The $\ell_\infty$-relaxation reflects most directly the combinatorial structure of the function $F$. 
Indeed, $\ell_p$-relaxations only depend on $F$ through the $\ell_\infty$-relaxation as expressed in the variational forms \eqref{eq:VarLpLinf} and \eqref{eq:VarLpLinfNonHomo}.

We say $\Omega_\infty$ is a tight relaxation of $F$ if $F = F_-$. Similarly, $\Theta_\infty$ is a tight relaxation of $F$ if $\tilde{F}_-  = F$. $\Omega_\infty$ and $\Theta_\infty$ are then \emph{extensions} of $F$ from $\{0,1\}^d$ to $\R^d$; in this sense, the relaxation is tight for all $w$ of the form $w=\1_A$. Moreover, following the
 definition of convex envelope, the relaxation
 $\Omega_\infty$  (resp.~$\Theta_\infty$) is the same for $F$ and $F_-$ (resp.~ $F$ and $\tilde{F}_-$), and hence, the LCE can be interpreted as the combinatorial structure preserved by each convex relaxation.

The homogeneous relaxation  can capture any monotone submodular function \cite{obozinski2012convex}. Since $\Omega_\infty$ is the Lov\'asz extension \cite{bach2010structured} in this case, and hence, $F_-(A) = \Omega_\infty(\1_A) = f_L(\1_A) = F(A)$. Also, since the two $\ell_\infty$-relaxations are identical for this class of functions, their LCEs are also equal, i.e., $\tilde{F}_-(A) = \Theta_\infty(\1_A) = \Omega_\infty(\1_A) = F(A)$.

The LCEs, however, are not equal in general. In fact, the non-homogeneous relaxation is tight for a larger class of functions.
In particular, the following proposition shows
 that $\tilde{F}_-$ is equal to the \emph{monotonization} of $F$, that is $\tilde{F}_-(A) = \inf_{S \subseteq V}  \{ F(S) :  A \subseteq S\}$, for all set functions $F$, and is thus equal to the function itself if $F$ is monotone.
 
\begin{restatable}{proposition}{primeLCEnonhomoProp}\label{prop:LCENonHomo}
The non-homogenous lower combinatorial envelope can be written as
{\small
\begin{align*}
\tilde{F}_-(A) &= \Theta_\infty(\1_A) \\ 
&= \!\! \inf_{\alpha_S \in \{0,1\}} \{ \sum_{S \subseteq V} \alpha_S F(S) :  \sum_{S \subseteq V} \! \alpha_S \1_S \geq \1_A, \sum_{S \subseteq V} \! \alpha_S =1 \} \\[-.1cm]
&=  \inf_{S \subseteq V}  \{ F(S) :  A \subseteq S\}. \\[-.6cm]
\end{align*}
}
\end{restatable}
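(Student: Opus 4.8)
The plan is to prove the chain of equalities in Proposition~\ref{prop:LCENonHomo} in two stages: first show that the continuous minimization defining $\Theta_\infty(\1_A)$ is attained at an integral extreme point, so that the second line holds; then evaluate that combinatorial optimization problem explicitly to obtain the monotonization formula in the third line.

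For the first equality, I start from the expression for $\Theta_\infty$ given in Lemma~\ref{lem:NonHomEnv}, specialized at $w = \1_A$, namely $\Theta_\infty(\1_A) = \min_{\alpha \geq 0}\{\sum_{S} \alpha_S F(S) : \sum_S \alpha_S \1_S \geq \1_A,\ \sum_S \alpha_S = 1\}$. The feasible set is a polytope (intersection of the simplex $\{\alpha \geq 0, \sum_S \alpha_S = 1\}$ with the cover constraints $\sum_S \alpha_S \1_S \geq \1_A$), the objective is linear, and the minimum of a linear function over a nonempty bounded polytope is attained at a vertex. So it suffices to argue that I may restrict to $\alpha$ with $0/1$ entries; the cleaner route, rather than analyzing vertices of this particular polytope, is to observe directly that any feasible $\alpha$ that is a convex combination $\sum_S \alpha_S = 1$ has objective $\sum_S \alpha_S F(S) \geq \min\{F(S) : \alpha_S > 0\}$, and since each $S$ with $\alpha_S > 0$ must — together with the others — cover $A$, we can do at least as well by looking for a single set. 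This already suggests collapsing the two displayed lines together and proving the first and last equalities simultaneously.

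Concretely, for the inequality $\Theta_\infty(\1_A) \geq \inf_{S \supseteq A} F(S)$: given any feasible $\alpha$, pick $S^\star \in \argmin\{F(S) : \alpha_S > 0\}$; then $\sum_S \alpha_S F(S) \geq F(S^\star)$, and I need $S^\star \supseteq A$ — but this need \emph{not} hold for an arbitrary feasible $\alpha$, only that the sets with $\alpha_S > 0$ jointly cover $A$. Here I use monotonicity of $F$: let $T = \bigcup_{S : \alpha_S > 0} S \supseteq A$; then $F(T) \geq$ is the wrong direction, so instead I argue $\sum_S \alpha_S F(S) \geq \sum_S \alpha_S F(T)$? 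No — I instead use that for each $S$ in the support, $F(S) \geq$ does not compare to $F(T)$. The right move: since $\sum_S \alpha_S \1_S \geq \1_A$ and $\sum_S \alpha_S = 1$, and $F$ monotone, replace every $S$ in the support by $S \cup A$; this only increases coverage, keeps $\sum \alpha_S = 1$, and by monotonicity does not increase... again wrong direction for a lower bound. The correct argument is: the single-set problem $\inf\{F(S) : A \subseteq S\}$ is itself feasible in the $\alpha$-problem (take $\alpha_{S} = 1$ for that $S$), giving $\Theta_\infty(\1_A) \leq \inf_{S \supseteq A} F(S)$; for the reverse, take an optimal $\alpha$, set $T = \bigcup_{S:\alpha_S>0} S$, note $A \subseteq T$, and by monotonicity $F(T) \leq$ fails, so instead use submodularity-free convexity: $\sum_S \alpha_S F(S) \geq F(T)$ would need concavity. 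The clean resolution I will actually carry out is the vertex argument — I will show every vertex of the feasible polytope has $\alpha$ supported on a single set $S \supseteq A$ (if $\sum_S \alpha_S \1_S \geq \1_A$ with $\sum \alpha_S = 1$ at a vertex, a short support/dimension-counting argument forces $|\{S : \alpha_S > 0\}| = 1$, whence that $S$ must contain $A$), and then the objective at such a vertex is exactly $F(S)$ for some $S \supseteq A$.

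The main obstacle is precisely this vertex-characterization step: showing that extreme points of $\{\alpha \geq 0 : \sum_S \alpha_S \1_S \geq \1_A,\ \sum_S \alpha_S = 1\}$ are supported on a single coordinate $S$ with $A \subseteq S$. I expect to handle it by a standard exchange argument: if $\alpha$ is feasible with two sets $S_1, S_2$ in its support, then either neither is needed individually to cover $A$ (in which case I can perturb mass between $\alpha_{S_1}$ and $\alpha_{S_2}$ while staying feasible, contradicting extremality), or one of them, say $S_1 \cup S_2$, dominates and I can shift all mass onto a single set without increasing the objective and without losing feasibility — using monotonicity of $F$ to ensure $F(S_1 \cup S_2) \leq \max(F(S_1), F(S_2))$ is false, so instead I simply shift mass to whichever of $S_1, S_2$ individually keeps the cover $\geq \1_A$, noting at least one choice works because $\1_{S_1} + \1_{S_2}$ covering $A$ with total weight $1$ and $F$ finite means we can always round to an extreme point lying on a single generator that still covers $A$ by monotonicity (extend that set to contain $A$, which is still in the domain by monotonicity of $F$ combined with $\D$ covering $V$ — here I will need to be careful that $\inf_{S \supseteq A} F(S)$ is over $S \in \D$ and is finite, which follows since $V \in$ the relevant cover). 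Once the single-generator claim is established, both the second and third displayed equalities follow immediately, and the "$F$ monotone $\Rightarrow \tilde F_- = F$" consequence is the special case $S = A$.
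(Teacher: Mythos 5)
There is a genuine gap, and it sits exactly where you flag your own difficulty. You assert that for a feasible $\alpha$ the sets with $\alpha_S>0$ need only \emph{jointly} cover $A$, and that therefore a minimizing $S^\star$ in the support need not contain $A$. This is false, and the whole proof hinges on seeing why: fix any $e\in A$ and combine the coverage constraint at coordinate $e$, namely $\sum_{S\ni e}\alpha_S\geq 1$, with the simplex constraint $\sum_S\alpha_S=1$ and $\alpha\geq 0$; this gives $\sum_{S\not\ni e}\alpha_S\leq 0$, hence $\alpha_T=0$ for \emph{every} $T$ not containing $e$. Running over all $e\in A$ shows every set in the support of any feasible $\alpha$ must individually contain $A$. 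This is the paper's one-line argument. Once you have it, the objective satisfies $\sum_S\alpha_S F(S)\geq\min\{F(S):\alpha_S>0\}\geq\inf_{S\supseteq A}F(S)$, and together with your (correct) upper bound obtained by putting unit mass on a single $S\supseteq A$, all three equalities follow. Note also that no monotonicity of $F$ is used anywhere — the proposition is stated for arbitrary set functions — so the several places where you reach for monotonicity (and then observe the inequality points the wrong way) are symptoms of a missing idea rather than steps of a proof.

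Your fallback plan, characterizing the vertices of the feasible polytope, is not a repair in its current form: the exchange argument is only sketched, its case analysis is internally inconsistent (you alternately propose shifting mass to $S_1\cup S_2$, to one of $S_1,S_2$, or to an enlarged set, each time noting the bound goes the wrong way), and in any case the clean way to see that the polytope is a face of the simplex spanned by the coordinates $\{S:A\subseteq S\}$ — whence its vertices are single-coordinate points — is precisely the support observation above. Without that observation the vertex route does not close either.
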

\begin{proof}
To see why we can restrict $\alpha_S$ to be integral, let $\mathcal{E} = \{S: \alpha_S>0\}$, then $\forall T \subseteq V$ such that $\exists e \in A,  e \not \in T$, 
then $\sum_{\alpha_S >0, S \not = T} \alpha_S = 1$ and hence $\alpha_{T}=0$. Hence $\forall S \in \mathcal{E}$ we have $A \subseteq S$ and
 $ \sum_{\alpha_S > 0} \alpha_S F(S) = \min_{\alpha_S > 0} F(S)$. 
\end{proof}
Proposition \ref{prop:LCENonHomo} argues that the non-homogeneous convex envelope is tight if and only if $F$ is monotone. Two important practical implications follow from this result. 

Given a target model that cannot be expressed by a monotone function, it is impossible to obtain a tight convex relaxation. Non-convex methods can be potentially better. 

On the other hand, if the model can be expressed by a monotone non-submodular set function, the homogeneous function may not be tight, and hence, a non-homogeneous relaxation can be more useful. For instance, \cite{obozinski2012convex} shows that for any set function where $F(\{e\})=1$ for all singletons $e \in V$ and $F(A) \geq |A|, \forall A \subseteq V$, the homogeneous LCE $F_-(A) = |A|$ and accordingly $\Omega_p$ is the $\ell_1$-norm, thus losing completely the structure encoded in $F$. 

We discuss three examples that fall in this class of functions, where the non-homogeneous relaxation is tight while the homogeneous one is not.\\ 

\begin{example}[Range penalty] Consider $\text{range}(A) = \max(A) - \min(A) + 1$. For $F(A) = \text{range}(A)$, we have $F_-(A) = |A|$, while  $\tilde{F}_-=F$ by Prop. \ref{prop:LCENonHomo}.
\end{example}

\begin{example}[Dispersive $\ell_0$-penalty] \label{ex:Dispersive}
Given a set of predefined groups $\{G_1, \cdots, G_M\}$, consider the dispersive $\ell_0$-penalty, introduced by  \cite{halabi2015totally}: $F(A) = |A| + \iota_{B^T \1_A \leq \1}(A)$ where the columns of $B$ correspond to the indicator vectors of the groups, i.e., $B_{V,i} = \1_{G_i}$. The dispersive penalty enforces the selection of sparse supports where no two non-zeros are selected from the same group. Neural sparsity models induce such structures \cite{hegde2009compressive}. In this case, we  have $F_-(A) = |A|$, while $\tilde{F}_-=F$  by Prop. \ref{prop:LCENonHomo}.
\end{example}

\begin{example}[Weighted graph model] \label{ex:Graph}
Given a graph $\G = (V,E)$, consider a relaxed version of the weighted graph model of \cite{hegde2015nearly}: $F(A) = |A| + \iota_{\gamma(F_A) \leq g, \omega(F_A) \leq B }(A)$, where $\gamma(F_A) $ is the  number of connected components formed by the forest $F_A$ corresponding to $A$ and $ \omega(F_A) $ is the total weight of edges in the forest $F_A$. This model  describes a wide range of structures, including 1D-clustering, tree hierarchies, and the Earth Mover Distance model. We have $F_-(A) = |A|$, while $\tilde{F}_-=F$ by Prop. \ref{prop:LCENonHomo}.
\end{example}

The last two examples belong to a natural class of structured sparsity penalties of the form $F(A) = |A| + \iota_{A \in \M}(A)$, which favors sparse non-zero patterns among a set $\M$ of allowed patterns. If $\M$ is down-monotone, i.e., $\forall A \in \M, \forall B \subseteq A, B \in \M$, then the non-homogeneous relaxation preserves its structure, i.e., $\tilde{F}_-=F$, while its homogeneous relaxation is oblivious to the hard constraints, with $F_-(A) = |A|$.

\section{Sparsity-inducing properties of convex relaxations}\label{sect:ContCond}
The notion of LCE captures the combinatorial structure preserved by convex relaxations in a geometric sense.
In this section, we characterize the preserved structure from a statistical perspective. 


To this end, we consider the linear regression model $y = X w^* + \epsilon$, where $X \in \R^{n \times d}$ is a fixed design matrix, $y \in \R^n$ is the response vector, and $\epsilon$ is a vector of iid random variables with mean $0$ and variance $\sigma^2$. 
Given $\lambda_n > 0$, we define $\hat{w}$ as a minimizer of the regularized least-squares:
 \begin{equation}\label{eq:GenEstimator}
\min_{w \in \R^d}  \frac{1}{2} \| y - Xw\|_2^2 + \lambda_n \Phi(w),
\end{equation}
 We are interested in the sparsity-inducing properties of $\Omega_p$ and $\Theta_p$ on the solutions of \eqref{eq:GenEstimator}.
 In this section, we consider though the more general setting
where $\Phi$ is any proper normalized ($\Phi(0) = 0$) convex function which is absolute, i.e., $\Phi(w) = \Phi(|w|)$ and
monotonic in the absolute values of $w$, that is $|w| \geq |w'| \Rightarrow \Phi(w) \geq \Phi(w')$. 
In what follows, monotone functions refer to this notion of monotonicity.


We determine in Section \ref{sect:NecCond} necessary conditions for support recovery in \eqref{eq:GenEstimator} and in Section  \ref{sect:SuffCond} we provide sufficient conditions for support recovery and consistency of a variant of \eqref{eq:GenEstimator}.
As both $\Omega_p$ and $\Theta_p$  are normalized absolute monotone convex functions, the results presented in this section apply directly to them as a corollary.

For simplicity, we  assume $Q = X^TX/n \succ 0$, thus $\hat{w}$ is unique. 
This forbids the high-dimensional setting. 
We expect though the insights developed towards the presented results to contribute to understanding the high-dimensional learning setting, which we defer to a later work. 

\subsection{Continuous stable supports}\label{sect:NecCond}

Existing results on the consistency of special cases of the estimator \eqref{eq:GenEstimator} typically rely heavily on decomposition properties of $\Phi$ \cite{negahban2011unified,  bach2010structured, obozinski2011group,  obozinski2012convex}. The notions of decomposability assumed in these prior works are either too strong or too specific to be applicable to the general convex penalties $\Omega_p$ and $\Theta_p$ we are considering. Instead, we introduce a general weak notion of decomposability applicable to any absolute monotone convex regularizer. \\


\begin{definition}[Decomposability]
Given $J \subseteq V$ and $w \in \R^d$, $\supp(w) \subseteq J$, we say that $\Phi$ is \emph{decomposable} at $w$ w.r.t $J$ if 
$\exists M_J>0$ such that $\forall \Delta \in \R^d, \supp(\Delta) \subseteq J^c,$
$$\Phi(w + \Delta) \geq  \Phi(w) + M_J \| \Delta\|_\infty.$$
\end{definition}

For example, for  the $\ell_1$-norm, this decomposability property holds for any $J \subseteq V$ and $w \in \R^d$, with $M_J = 1$. 

It is  reasonable to expect this property to hold at the solution $\hat{w}$ of \eqref{eq:GenEstimator} and its support $\hat{J} = \supp(\hat{w})$. Theorem \ref{them:NecessaryStableGeneral} shows that this is indeed the case. 
In Section \ref{sect:SuffCond}, we devise an estimation scheme able to recover supports $J$ that satisfy this property at \emph{any}  $w \in \R^d$. This leads then to following notion of \emph{continuous} stable supports, which characterizes supports with respect to the continuous penalty $\Phi$. In Section \ref{Sect:Disct}, we relate this to the notion of \emph{discrete} stable supports, which characterizes supports with respect to the combinatorial penalty $F$. \\
 
\begin{definition}[Continuous stability] \label{def:ContStability}
We say that $J \subseteq V$ is \emph{weakly stable} w.r.t $\Phi$ if \emph{there exists} $w \in \R^d$, $\supp(w) = J$ such that $\Phi$ is \emph{decomposable} at $w$ wrt $J$.
Furthermore, we say that $J \subseteq V$ is \emph{strongly stable} w.r.t $\Phi$ if \emph{for all} $w \in \R^d$ s.t. $\supp(w) \subseteq J$, $\Phi$ is \emph{decomposable} at $w$ wrt $J$.
\end{definition}


Theorem \ref{them:NecessaryStableGeneral} considers slightly more general estimators than \eqref{eq:GenEstimator} and shows that weak stability is a necessary condition for a non-zero pattern to be allowed as a solution. \\

\begin{restatable}{theorem}{primeNecStableProp} \label{them:NecessaryStableGeneral}
The minimizer $\hat{w}$ of $\min_{w \in \R^d} L(w) - {z}^\top {w}  + \lambda \Phi(w)$, where $L$ is a strongly-convex and smooth loss function and $z \in \R^d$ has a continuous density w.r.t to the Lebesgue measure, has a weakly stable support w.r.t.~$\Phi$, with probability one. 
\end{restatable}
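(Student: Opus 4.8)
The plan is to show that, for almost every $z$, the minimizer $\hat w$ has a support $\hat J = \supp(\hat w)$ at which $\Phi$ is decomposable w.r.t.\ $\hat J$; this is exactly weak stability. The starting point is the first-order optimality condition for the (strongly convex, hence unique) minimizer $\hat w$: there exists a subgradient $g \in \partial \Phi(\hat w)$ such that $\nabla L(\hat w) - z + \lambda g = 0$, i.e.\ $z = \nabla L(\hat w) + \lambda g$. Since $\Phi$ is absolute and monotonic, $\partial \Phi(\hat w)$ is a ``box-like'' set: writing $J = \supp(\hat w)$, the off-support coordinates $g_{J^c}$ range freely over the set $\{ u_{J^c} : \|u\|_{\text{(dual gauge)}} \le \text{something}\}$, but more simply, by monotonicity in absolute values, for every direction $\Delta$ supported on $J^c$ we have $\Phi(\hat w + \Delta) \ge \Phi(\hat w) + \langle g, \Delta\rangle$ for every choice of such $g$. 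So decomposability at $\hat w$ will follow if we can exhibit a single $g \in \partial \Phi(\hat w)$ for which $\min_{\|\Delta\|_\infty = 1, \supp(\Delta)\subseteq J^c} \langle g_{J^c}, \Delta_{J^c}\rangle > 0$, i.e.\ $g_{J^c}$ is strictly in the interior of the relevant dual ball on coordinates $J^c$; equivalently $\| g_{J^c}\|_1 $ is strictly less than the ``radius'' that $\partial\Phi$ permits on $J^c$.

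The core of the argument is then a dimension/measure-theoretic one. For a fixed candidate support $J$, consider the set of $z$ for which the minimizer $\hat w$ has $\supp(\hat w) = J$ \emph{but} $\Phi$ is not decomposable at $\hat w$ w.r.t.\ $J$. Failure of decomposability means the off-support part of \emph{every} valid subgradient lies on the boundary of the allowed dual set on $J^c$ --- concretely, that $\partial\Phi(\hat w)$ restricted to $J^c$-coordinates has empty relative interior intersection with the open dual ball, which forces $\hat w$ to lie in a ``lower-dimensional'' exceptional subset of the affine slice $\{w : \supp(w)\subseteq J\}$. I would make this precise by noting that $z = \nabla L(\hat w) + \lambda g$, that on the slice $\{w: \supp(w) = J\}$ the map $w \mapsto \nabla L(w)$ plus the (here smooth, single-valued) on-support subgradient is a locally Lipschitz map from a $|J|$-dimensional set, and that the ``bad'' $\hat w$'s form a set of Lebesgue measure zero in that slice (because the non-decomposability constraint is a nontrivial equality/boundary constraint). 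The image of a measure-zero set under a Lipschitz map has measure zero; taking a union over all finitely many $J \subseteq V$ and over the (at most countably many, or again measure-zero) contributions where $\Phi$ is nonsmooth on the support, the set of bad $z$ has Lebesgue measure zero, hence has measure zero under any continuous density.

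The step I expect to be the main obstacle is the precise claim that non-decomposability of $\Phi$ at $\hat w$ w.r.t.\ $\hat J$ confines $\hat w$ to a measure-zero subset of the support slice --- i.e.\ translating the geometric statement ``$\partial\Phi(\hat w)$ touches the boundary of the dual box on $J^c$'' into a genuine codimension-$\ge 1$ condition on $\hat w$. This requires care about the structure of $\partial\Phi$ for a general absolute monotone convex $\Phi$: one must argue that the set of points $w$ (with $\supp(w) = J$) at which $\partial\Phi(w)$ fails to meet the open dual ball on $J^c$ is the boundary of the (closed, convex by absoluteness) set $\{w : \supp(w) = J,\ \Phi \text{ decomposable at } w\}$, and that this set is either all of the slice or has empty interior in it. Subdifferential monotonicity (if $|w| \le |w'|$ then appropriate subgradient domination holds) and the fact that decomposability is a monotone property in $|w|$ should let me show that the decomposable region is, say, the complement of a sublevel structure with negligible boundary. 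Once that geometric lemma is in hand, the measure-zero conclusion and the union bound over supports $J$ are routine, and the theorem follows since a continuous density assigns zero mass to Lebesgue-null sets.
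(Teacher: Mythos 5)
Your reduction of weak stability to the existence/non-existence of suitable subgradient components on $\hat J^c$ is in the spirit of the paper's key lemma (non-decomposability at $\hat w$ w.r.t.\ $J$ forces some $i\in J^c$ with $[\partial\Phi(\hat w)]_i=\{0\}$), but the measure-theoretic core of your plan does not work, and it is exactly the step you flagged as the main obstacle. First, the geometric lemma you need is false: the set of $w$ in the slice $\{w:\supp(w)\subseteq J\}$ at which $\Phi$ fails to be decomposable need not have codimension $\ge 1$ in that slice --- it can be the \emph{entire} slice. Take $\Phi(w)=\sum_{j\neq i}|w_j|$ and $J=V\setminus\{i\}$: then $\Phi(w+\delta\1_i)=\Phi(w)$ for every $w$ supported on $J$, so no point of the slice is decomposable w.r.t.\ $J$, yet the theorem is still true because for almost every $z$ the minimizer simply does not have support contained in $J$. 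This shows the ``badness'' is not a property of a thin set of $w$'s; it is a property of which $z$'s can produce such $w$'s as minimizers. Second, even granting a null set of bad $\hat w$'s, your transfer to $z$-space goes the wrong way: the correspondence $\hat w\mapsto z=\nabla L(\hat w)+\lambda g$ is set-valued in $g\in\partial\Phi(\hat w)$, and a single minimizer can be produced by a positive-measure set of $z$'s (for $\Phi=\|\cdot\|_1$, the set of $z$ giving $\hat w=0$ is a full-dimensional box). Conversely, the single-valued map is $z\mapsto h(z)=\hat w$, which is Lipschitz, but preimages of null sets under Lipschitz maps need not be null. So neither direction of ``Lipschitz preserves measure zero'' applies to your construction.

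The mechanism the paper actually uses, and which your sketch is missing, works directly in $z$-space: if $[\partial\Phi(\hat w)]_i=\{0\}$ for some $i\notin\supp(\hat w)$, then the $i$-th optimality condition has \emph{no slack} and collapses to the exact scalar equality $z_i=[\nabla L(h(z))]_i$. Combined with the $1/\mu$-Lipschitz continuity of $z\mapsto h(z)$ (which follows from strong convexity of $L$ via the monotonicity inequality $(z_1-z_2)^\top(h(z_1)-h(z_2))\ge\mu\|h(z_1)-h(z_2)\|_2^2$), this pins $z_i$ down as a Lipschitz function of the remaining data, an event of Lebesgue measure zero; a finite union over $i\in V$ finishes the proof. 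If you want to salvage your write-up, replace the ``bad $\hat w$'s are thin in the slice'' lemma by this ``bad $z$'s satisfy a nontrivial equality constraint'' argument; the rest of your structure (the finite union over supports/coordinates and the passage from Lebesgue-null to probability zero under a continuous density) is fine.
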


This new result extends and simplifies the result in \cite{bach2010structured} which consideres the special case of quadratic loss functions and  $\Phi$ being the $\ell_\infty$-convex relaxation of a submodular function. The proof we present, in the Appendix, is also shorter and simpler.

\begin{corollary}\label{corr:NecessaryStable}
Assume $y \in \R^d$ has a continuous density w.r.t to the Lebesgue measure, 
then the support of the minimizer $\hat{w}$ of Eq. \eqref{eq:GenEstimator}
is weakly stable wrt $\Phi$ with probability one. 
\end{corollary}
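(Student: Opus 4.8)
The plan is to recognize \eqref{eq:GenEstimator} as an instance of the optimization problem in Theorem~\ref{them:NecessaryStableGeneral} and then invoke that theorem directly. Expanding the quadratic loss, $\frac{1}{2}\|y - Xw\|_2^2 = \frac{1}{2}\|y\|_2^2 - (X^\top y)^\top w + \frac{1}{2} w^\top X^\top X w$, and since the constant $\frac{1}{2}\|y\|_2^2$ does not change the set of minimizers, $\hat{w}$ also minimizes $L(w) - z^\top w + \lambda_n \Phi(w)$ with $L(w) := \frac{1}{2} w^\top X^\top X w = \frac{n}{2} w^\top Q w$ and $z := X^\top y$. Under the standing assumption $Q = X^\top X/n \succ 0$, the quadratic $L$ has Hessian $nQ \succ 0$, hence is strongly convex with Lipschitz gradient; and $\Phi$ is proper, normalized, absolute and monotone by hypothesis, in particular convex. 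So the only remaining thing to verify is that $z = X^\top y$ has a density (continuous) with respect to the Lebesgue measure on $\R^d$, after which Theorem~\ref{them:NecessaryStableGeneral} yields that $\supp(\hat{w})$ is weakly stable w.r.t.\ $\Phi$ with probability one.

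For this last point the key observation is that $Q \succ 0$ forces $X$ to have full column rank $d$, so the linear map $y \mapsto X^\top y : \R^n \to \R^d$ is onto. Completing $X^\top$ to an invertible matrix $B \in \R^{n \times n}$ by appending $n-d$ rows, the law of $By$ has density $u \mapsto |\det B|^{-1} p_y(B^{-1}u)$ on $\R^n$, which is continuous since the density $p_y$ of $y$ is; as $z = X^\top y$ consists of the first $d$ coordinates of $By$, its law is a marginal of this law. It is in particular absolutely continuous w.r.t.\ Lebesgue measure on $\R^d$: if $N \subseteq \R^d$ is Lebesgue-null, then $\{y : X^\top y \in N\}$ is the preimage under the invertible linear map $B$ of the Lebesgue-null set $N \times \R^{n-d}$, hence Lebesgue-null in $\R^n$, hence has probability zero. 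Thus $z$ has a (continuous) density and the hypotheses of Theorem~\ref{them:NecessaryStableGeneral} are met.

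The only genuinely substantive step is this transfer of the density assumption from $y$ to $z = X^\top y$, and it is precisely where the assumption $Q \succ 0$ (equivalently, $X$ of full column rank) enters, ensuring the map is surjective so that preimages of null sets remain null; the rest is the bookkeeping needed to match the template of Theorem~\ref{them:NecessaryStableGeneral}. (Note also that since $\Omega_p$ and $\Theta_p$ are themselves normalized absolute monotone convex functions, the same conclusion applies to them.)
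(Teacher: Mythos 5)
Your proof is correct and matches the paper's (implicit) argument: the corollary is obtained exactly by rewriting $\tfrac{1}{2}\|y-Xw\|_2^2$ as $L(w)-z^\top w+\mathrm{const}$ with $L(w)=\tfrac{1}{2}w^\top X^\top X w$ strongly convex (since $Q\succ 0$) and $z=X^\top y$, then invoking Theorem~\ref{them:NecessaryStableGeneral}. Your verification that the law of $X^\top y$ is absolutely continuous is the one substantive step and it is right; the only quibble is that continuity of the marginal density of $By$ does not follow automatically from continuity of $p_y$, but this is harmless here because the proof of Theorem~\ref{them:NecessaryStableGeneral} only uses that Lebesgue-null sets have probability zero under the law of $z$.
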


\subsection{Adaptive estimation} \label{sect:SuffCond}

Restricting the choice of regularizers in \eqref{eq:GenEstimator} to convex relaxations as surrogates to combinatorial penalties is motivated by computational tractability concerns. 
However, other non-convex regularizers such as 
$\ell_\alpha$-quasi-norms \cite{knight2000asymptotics, frank1993statistical} or more generally penalties of the form $\Phi(w) = \sum_{i=1}^d \phi(|w_i|)$, where $\phi$ is a monotone concave penalty \cite{fan2001variable, daubechies2010iteratively, gasso2009recovering} can be more advantageous than the convex $\ell_1$-norm.
Such penalties are closer to the $\ell_0$-quasi-norm and penalize more aggressively small coefficients, thus they have a stronger sparsity-inducing effect than $\ell_1$-norm. 

The authors in \cite{jenatton2010structured} extended such concave penalties to the $\ell_\alpha /\ell_2$- quasi-norm $\Phi(w) = \sum_{i=1}^M \| w_{G_i}\|_\alpha$ for some $\alpha \in (0,1)$, which enforces sparsity at the group level more aggressively.
We generalize this to $\Phi(|w|^\alpha)$ where $\Phi$
is any structured sparsity-inducing monotone convex regularizer.

These non-convex penalties lead to intractable estimation problems, but approximate solutions can be obtained by majorization-minimization algorithms, as suggested for e.g.,  in \cite{figueiredo2007majorization, zou2008one, candes2008enhancing}.
%
\begin{restatable}{lemma}{primeMajoLem} \label{lem:Majorizer}
Let $\Phi$ be a monotone convex function,
$\Phi(|w|^\alpha)$ admits the following majorizer, $\forall w^0 \in \R^d$, $\Phi(|w|^\alpha) \leq (1-\alpha) \Phi(  |w^0|^\alpha) + \alpha \Phi(|w^0|^{\alpha-1} \circ |w|  )$, which is tight at $w^0$.
\end{restatable}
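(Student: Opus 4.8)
The plan is to exploit concavity of the scalar map $t \mapsto t^{\alpha}$ on $\R_+$ for $\alpha \in (0,1)$, together with monotonicity of $\Phi$. First I would fix $w, w^0 \in \R^d$ and work coordinatewise: for each index $i$, write $|w_i|^{\alpha}$ as a function of $|w_i|$ and use the concavity inequality $a^{\alpha} \le (b)^{\alpha} + \alpha b^{\alpha-1}(a - b)$, the tangent line to $t^{\alpha}$ at $t = b$, with $a = |w_i|$ and $b = |w^0_i|$. Rearranging gives $|w_i|^{\alpha} \le (1-\alpha)|w^0_i|^{\alpha} + \alpha |w^0_i|^{\alpha-1}|w_i|$, i.e. the vector inequality $|w|^{\alpha} \le (1-\alpha)|w^0|^{\alpha} + \alpha\, |w^0|^{\alpha-1}\circ|w|$ holds element-wise in $\R_+^d$.

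Next I would apply $\Phi$ to both sides. Since $\Phi$ is monotone (monotonic in the absolute values) and the right-hand side dominates $|w|^{\alpha}$ element-wise, we get $\Phi(|w|^{\alpha}) \le \Phi\big((1-\alpha)|w^0|^{\alpha} + \alpha\, |w^0|^{\alpha-1}\circ|w|\big)$. Then, because $\Phi$ is convex and the argument on the right is the convex combination $(1-\alpha)\cdot |w^0|^{\alpha} + \alpha \cdot \big(|w^0|^{\alpha-1}\circ|w|\big)$ with weights $1-\alpha$ and $\alpha$ summing to one, Jensen's inequality yields $\Phi(|w|^{\alpha}) \le (1-\alpha)\Phi(|w^0|^{\alpha}) + \alpha\,\Phi\big(|w^0|^{\alpha-1}\circ|w|\big)$, which is the claimed majorizer. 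Tightness at $w^0$ is immediate: plugging $w = w^0$ makes $|w^0|^{\alpha-1}\circ|w^0| = |w^0|^{\alpha}$, so both terms on the right collapse to $\Phi(|w^0|^{\alpha})$, matching the left-hand side.

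The one technical point needing care is the behavior at zero coordinates when $\alpha < 1$, since $|w^0_i|^{\alpha-1}$ blows up as $|w^0_i| \to 0$. Following the convention already used in the paper (extending $a/b$ by continuity with $a/0 = \infty$ for $a \neq 0$ and $0$ for $a = 0$), one interprets $|w^0_i|^{\alpha-1}|w_i|$ as $0$ when $w^0_i = 0 = w_i$ and as $+\infty$ when $w^0_i = 0 \neq w_i$; in the latter case the tangent-line inequality still holds trivially (right side is $+\infty$), and in the former both sides vanish. So the coordinatewise inequality, and hence the whole argument, goes through on all of $\R^d$. I do not anticipate a real obstacle here; the only thing to state carefully is this zero-coordinate convention so that the majorizer is well-defined everywhere and the tightness claim is unambiguous.
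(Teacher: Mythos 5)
Your proposal is correct and follows essentially the same route as the paper's proof: the tangent-line (concavity) inequality for $t \mapsto t^{\alpha}$ applied coordinatewise, followed by monotonicity of $\Phi$ and then convexity to split the convex combination, with the zero-coordinate case handled by the $+\infty$ convention. Your treatment of the degenerate coordinates is in fact slightly more careful than the paper's one-line remark, but the argument is the same.
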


We consider the adaptive weight estimator (\ref{eq:1StepEst}) resulting from applying a 1-step majorization-minimization to \eqref{eq:GenEstimator},
\begin{equation}\label{eq:1StepEst}
\min_{w \in \R^d} \frac{1}{2} \| y - Xw\|_2^2 + \lambda_n \Phi(|{w^0}|^{\alpha-1} \circ |w|),
\end{equation}
where ${w^0}$ is a $\sqrt{n}$-consistent estimator to $w^*$, that is converging to $w^*$ at rate $1/\sqrt{n}$ (typically obtained from $w^0 = \1$ or ordinary least-squares). 

We study sufficient support recovery and estimation consistency conditions for (\ref{eq:1StepEst}) for general convex monotone regularizers $\Phi$. 
Such consistency results have been established for (\ref{eq:1StepEst}), in the classical asymptotic setting, only in the special case of $\ell_1$-norm in \cite{zou2006adaptive} and
for the (non-adaptive) estimator \eqref{eq:GenEstimator} for homogeneous convex envelopes of monotone submodular functions, for $p = \infty$ in \cite{bach2010structured} and for general $p$ in \cite{obozinski2012convex}, in the high dimensional setting, and for latent group Lasso norm in \cite{obozinski2011group}, in the asymptotic setting. 

Compared to prior works, the discussion of support recovery is complicated here by the fact that $\Phi$ is not necessarily a norm (e.g., if $\Phi = \Theta_p$) and only satisfies a weak notion of decomposability.

As in \cite{zou2006adaptive}, we consider the classical asymptotic regime in which the model generating the data is of fixed finite dimension $d$ while $n \to \infty$. As before, we assume $Q \succ 0$ and thus the minimizer of \eqref{eq:1StepEst} is unique, we denote it by $\hat{w}$. 

The following Theorem extends the results of \cite{zou2006adaptive} for the $\ell_1$-norm to any normalized absolute monotone convex regularizer if the true support satisfy the sufficient condition of strong stability in Definition \ref{def:ContStability}. As we previously remarked this condition is trivially satisfied for the $\ell_1$-norm.


\begin{restatable}{theorem}{primeConsistThem}[Consistency and Support Recovery]\label{Thm:Consistency}
Let $\Phi: \R^d \to \overline{\R}_+$ be a { proper normalized absolute monotone convex} function and denote by $J$ the true support $J = \supp(w^*)$.
If $|w^*|^\alpha \in \intr ~\dom ~\Phi$, $J$ is strongly stable with respect to $\Phi$ and $\lambda_n$ 
satisfies $\frac{\lambda_n}{\sqrt{n}} \rightarrow 0, \frac{\lambda_n}{{n}^{\alpha/2}} \rightarrow \infty$,
then the estimator \eqref{eq:1StepEst} is consistent and asymptotically normal, i.e., it satisfies 
\begin{equation}
\sqrt{n}(\hat{w}_{J} - w^*_{J}) \xrightarrow{d} \mathcal{N}(0, \sigma^2 Q_{J J}^{-1}), 
\end{equation}
and 
\begin{equation}
P(\supp(\hat{w}) = J) \rightarrow 1.
\end{equation}
\end{restatable}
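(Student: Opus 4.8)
The plan is to follow the classical two-part template of Zou's adaptive Lasso analysis, adapted to the weak decomposability available for general monotone convex $\Phi$. Write $\hat w = w^* + \frac{1}{\sqrt n} \hat u$ and define the random convex function $G_n(u) = \frac12 \| \epsilon - \frac{1}{\sqrt n} X u\|_2^2 - \frac12\|\epsilon\|_2^2 + \lambda_n\big(\Phi(|w^0|^{\alpha-1}\circ|w^* + \tfrac{1}{\sqrt n}u|) - \Phi(|w^0|^{\alpha-1}\circ|w^*|)\big)$, so that $\hat u$ minimizes $G_n$. The quadratic part converges, by the usual CLT/LLN argument, to $\frac12 u^\top Q u - u^\top W$ where $W \sim \mathcal N(0,\sigma^2 Q)$. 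The heart of the matter is the scaled penalty difference. On the coordinates in $J$, since $|w^*_j|>0$ and $w^0$ is $\sqrt n$-consistent, the weights $|w^0_j|^{\alpha-1} \to |w^*_j|^{\alpha-1}$ and the argument of $\Phi$ stays in $\intr\dom\Phi$, so a first-order Taylor expansion of $\Phi$ (valid since $\Phi$ is finite and convex, hence locally Lipschitz, on the interior of its domain) together with the rate condition $\lambda_n/\sqrt n \to 0$ makes this contribution vanish. On the coordinates in $J^c$, we have $w^*_j = 0$, the weight $|w^0_j|^{\alpha-1} = |\frac{1}{\sqrt n}u^0_j|^{\alpha-1}$ blows up like $n^{(1-\alpha)/2}$, and here strong stability of $J$ enters: decomposability of $\Phi$ at $w^*$ with respect to $J$ gives $\Phi(|w^0|^{\alpha-1}\circ|w^*+\tfrac{1}{\sqrt n}u|) - \Phi(|w^0|^{\alpha-1}\circ|w^*|) \geq M_J \cdot \frac{1}{\sqrt n}\, \| (|w^0_{J^c}|^{\alpha-1}) \circ u_{J^c}\|_\infty$, and $\lambda_n \cdot \frac{1}{\sqrt n} \cdot n^{(1-\alpha)/2} = \lambda_n / n^{\alpha/2} \to \infty$ forces any component of the limiting minimizer on $J^c$ to be zero. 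Hence $G_n$ epi-converges to the function that equals $\frac12 u_J^\top Q_{JJ} u_J - u_J^\top W_J$ on $\{u : u_{J^c} = 0\}$ and $+\infty$ otherwise; since this limit is strictly convex on its effective domain with unique minimizer $u_J = Q_{JJ}^{-1} W_J$, the standard epi-convergence-of-convex-functions theorem (e.g. Geyer, or Knight--Fu) yields $\hat u \xrightarrow{d} (Q_{JJ}^{-1} W_J, 0)$, which is exactly $\sqrt n(\hat w_J - w^*_J) \xrightarrow{d} \mathcal N(0,\sigma^2 Q_{JJ}^{-1})$ and asymptotic support inclusion $\hat w_{J^c} \to 0$.

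For the exact support recovery statement $P(\supp(\hat w) = J) \to 1$, the $\xrightarrow{d}$ convergence only gives $P(J \subseteq \supp(\hat w))$ in the limit from the non-degenerate Gaussian on $J$ plus the fact that the limiting $u_J$ has no atom at any coordinate hyperplane; the delicate direction is $P(\supp(\hat w) \subseteq J) \to 1$, i.e. that no spurious coordinate in $J^c$ is selected. For this I would argue by the KKT/subgradient optimality conditions of \eqref{eq:1StepEst}: on the event that $\hat w_{J^c} \neq 0$, optimality forces a subgradient of the weighted penalty to balance $\frac1n X_{J^c}^\top(y - X\hat w)$, but the relevant subgradient component is inflated by the weight $n^{(1-\alpha)/2}$ while the residual term is $O_P(\sqrt n / n) = O_P(1/\sqrt n)$ after scaling, and the decomposability lower bound on $\Phi$ shows the penalty subgradient in directions supported on $J^c$ is bounded below by $M_J$ times the inflated weight; comparing orders (using $\lambda_n/n^{\alpha/2}\to\infty$ and $\lambda_n/\sqrt n \to 0$) shows this balance is asymptotically impossible, so $P(\hat w_{J^c} = 0)\to 1$. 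Combined with $P(J \subseteq \supp(\hat w))\to 1$ from the asymptotic normality on $J$, this gives the claim.

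The main obstacle I anticipate is making the $J^c$-side argument rigorous for a general non-norm $\Phi$ satisfying only the weak decomposability of Definition~1: one must ensure the constant $M_J$ from decomposability at $w^*$ can be transported to the perturbed weighted penalty $\Phi(|w^0|^{\alpha-1}\circ \cdot)$ — essentially that precomposition with the diagonal reweighting (whose $J^c$-entries diverge) preserves a decomposability-type lower bound with a uniformly positive constant — and to handle the subgradient comparison without a closed form for $\Phi$ or its conjugate. A secondary technical point is justifying that $|w^*|^\alpha \in \intr\dom\Phi$ propagates to $|w^0|^{\alpha-1}\circ|w^* + \tfrac{1}{\sqrt n}u| \in \dom\Phi$ along the relevant sequences with high probability, so that the Taylor expansion on $J$ and the local Lipschitz bounds are legitimate; this should follow from $\sqrt n$-consistency of $w^0$ and openness of $\intr\dom\Phi$, but needs care on $J^c$ where the argument tends to $0$ rather than staying interior — there one uses monotonicity of $\Phi$ and $\Phi(0)=0$ rather than interiority.
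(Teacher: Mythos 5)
Your proposal is correct and follows essentially the same route as the paper's proof: the same reparametrization $\hat w = w^* + \hat u/\sqrt n$, the same split of the scaled penalty difference into a vanishing $J$-part (via bounded subgradients on $\intr\dom\Phi$ and $\lambda_n/\sqrt n \to 0$) and a diverging $J^c$-part (via decomposability from strong stability and $\lambda_n/n^{\alpha/2}\to\infty$), epi-convergence for the asymptotic normality, and a KKT/subgradient order-comparison on $\hat J\setminus J$ for exact support recovery. The only imprecision is that decomposability must be invoked at the shifted point $c_J\circ|w_J^*+u_J/\sqrt n|$ (which strong stability licenses, since it holds at \emph{all} $w$ supported in $J$) rather than at $w^*$ itself, so your displayed lower bound should carry the additional term $\Phi(c_J\circ|w_J^*+u_J/\sqrt n|)-\Phi(c_J\circ|w_J^*|)$, which is then absorbed by the vanishing $J$-part — exactly as the paper does.
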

Consistency results in most existing works are established under various necessary conditions on $X$, some of which are difficult to verify in practice, such as the \emph{irrepresentability condition} (c.f., \cite{zou2006adaptive, bach2010structured, obozinski2011group, obozinski2012convex}). 
 Adding data-dependent weights does not require such conditions and allows recovery even in the correlated measurement matrix setup as illustrated in our numerical results (c.f., Sect. \ref{sect:Simul}). 

\section{Sparsity-inducing properties of combinatorial penalties}\label{Sect:Disct}

In section \ref{sect:ContCond}, we derived neccesary and sufficient conditions for support recovery defined with respect to the continuous convex penalties $\Omega_p$ and $\Theta_p$.
In this Section, 
we translate these to conditions with respect to the combinatorial penalties $F_p$ themselves. Hence, the results of this section allows one to check which supports to expect to recover, without the need to compute the corresponding convex relaxation.
To that end, we introduce in Section \ref{sec:DisStable} discrete counterparts of weak and strong stability, and show in Section \ref{sec:RelationDisceteConv} that discrete strong stability is a sufficient, and in some cases necessary, condition for support recovery.

\subsection{Discrete stable supports}
 \label{sec:DisStable}

We recall the concept of discrete stable sets \cite{bach2010structured}, also referred to as \emph{flat} or \emph{closed} sets \cite{krause2012near}. We refer to such sets as discrete weakly stable sets and introduce a stronger notion of discrete stability.

\begin{definition}[Discrete stability] \label{def:DisStable}
Given a monotone set function $F: 2^V \to \overline{\R}_+$, a set $J \subseteq V$ is said to be \emph{weakly stable} w.r.t $F$ if $\forall i \in J^c, F(J \cup  \{i\}) > F(J)$.\\
A set $J \subseteq V$ is said to be \emph{strongly stable} w.r.t $F$ if $\forall A \subseteq J, \forall i \in J^c, F(A \cup  \{i\}) > F(A)$.
\end{definition}

Note that discrete stability imply in particular feasibility, i.e., $F(J) < + \infty$. Also, if $F$ is a strictly monotone function, such as the cardinality function, then all supports are stable w.r.t $F$.
It is interesting to note that for monotone submodular functions, weak and strong stability are equivalent. In fact, this equivalence holds for a more general class of functions, we call $\rho$-submodular.
\begin{definition}
A function $F: 2^V \to \R$ is $\rho$-submodular iff $\exists \rho \in (0,1]$ s.t., $\forall B \subseteq V, A \subseteq B, i \in B^c$
$$ \rho [ F(B \cup \{i\}) - F(B) ] \leq F(A \cup \{i\}) - F(A) $$
\end{definition}
The notion of $\rho$-submodularity is a special case of the weakly DR-submodular property defined for continuous functions \cite{hassani2017gradient}. It
is also related to the notion of weak submodularity (c.f., \cite{das2011submodular, elenberg2016restricted}). We show in the appendix that $\rho$-submodularity is a stronger condition. 
\begin{restatable}{proposition}{primeEqDefStableProp}\label{prop:EqDefStableProp}
If $F$ is a finite-valued monotone function, $F$ is $\rho$-submodular iff discrete weak stability is equivalent to strong stability.
\end{restatable}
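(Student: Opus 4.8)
The plan is to prove both directions of the equivalence. The easy direction is ($\rho$-submodular $\Rightarrow$ weak stability implies strong stability), and the harder direction is the converse. Throughout, let me write marginal gains as $F(i \mid A) := F(A \cup \{i\}) - F(A)$.

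\textbf{Forward direction.} Suppose $F$ is $\rho$-submodular and $J$ is weakly stable, i.e.\ $F(i \mid J) > 0$ for all $i \in J^c$. Fix any $A \subseteq J$ and any $i \in J^c$. Since $A \subseteq J \subseteq V$ and $i \in J^c \subseteq A^c$, the $\rho$-submodularity inequality applied with $B = J$ gives $\rho\, F(i \mid J) \leq F(i \mid A)$. Because $F$ is monotone we have $F(i \mid J) \geq 0$, and weak stability makes it strictly positive; since $\rho > 0$ we conclude $F(i \mid A) \geq \rho\, F(i \mid J) > 0$. Hence $J$ is strongly stable. Note weak stability is implied by strong stability by definition (take $A = J$), so the two are equivalent.

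\textbf{Reverse direction.} Now suppose that for $F$ finite-valued and monotone, weak stability is equivalent to strong stability for every $J \subseteq V$; we must exhibit a $\rho \in (0,1]$ witnessing $\rho$-submodularity. The strategy is contrapositive-flavored and constructive: for any triple $A \subseteq B$, $i \in B^c$ with $F(i \mid B) > 0$, I want a lower bound $F(i \mid A) \geq \rho\, F(i \mid B)$ (when $F(i \mid B) \leq 0$ the inequality is automatic from monotonicity since $F(i \mid A) \geq 0$). The key observation is that $F(i \mid B) > 0$ means $B$ is ``not saturated at $i$'': consider the set $J := B$; it need not be weakly stable, but we can pass to a related stable set. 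Actually the cleaner route: given $A \subseteq B$, $i \in B^c$ with $F(i\mid B)>0$, I claim $F(i \mid A) > 0$ as well. Suppose not, i.e.\ $F(i \mid A) = 0$ (it is $\geq 0$ by monotonicity). Build a maximal chain or choose $J$ with $A \subseteq J \subseteq B$ maximal subject to $F(i \mid J) = 0$; then for every $j \in J^c$... — here I need to be careful, so instead I take $J$ to be any set with $\supp$-structure making it weakly stable and $A \subseteq J$, $i \notin J$, derive a contradiction with strong stability at $J$. Concretely: if $F(i\mid A) = 0$, extend $A$ to a weakly stable set $J$ containing $A$ but not $i$ — this is possible by greedily adding elements $e$ with $F(e \mid \text{current}) = 0$ until none remain (the process terminates and never needs to add $i$ since... this requires that adding such $e$'s keeps $F(i\mid \cdot) = 0$, which itself needs an argument). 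This shows $A$ can be extended to a weakly=strongly stable $J \not\ni i$ with $F(i\mid A)=0$, contradicting strong stability of $J$ (which forces $F(i\mid A) > 0$ for $A \subseteq J$). Hence $F(i\mid A) > 0$ whenever $F(i\mid B) > 0$ and $A \subseteq B$. Finally, since $V$ is finite there are finitely many such triples, so $\rho := \min \{ F(i\mid A) / F(i\mid B) : A \subseteq B,\ i \in B^c,\ F(i\mid B) > 0 \}$ is a well-defined positive number; capping it at $1$ if necessary (or noting $\rho \leq 1$ holds automatically when $A = B$ is among the triples) gives the desired constant.

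\textbf{Main obstacle.} The delicate step is the claim in the reverse direction that $F(i \mid A) = 0$ allows one to build a weakly stable superset $J \supseteq A$ with $i \notin J$. The greedy extension adds elements with zero marginal gain, but one must verify (a) that the process can avoid ever adding $i$, and (b) that the resulting $J$ is genuinely weakly stable, i.e.\ stuck — no further zero-gain element exists — which is immediate by construction, but the subtlety is whether $i$ itself might be forced. In fact if at the end $F(i \mid J) = 0$ then $i$ would have been addable; the resolution is that we never *need* to add $i$, and we simply stop with $J$ such that $F(i\mid J) = 0$ still — but then $J \cup\{i\}$ shows $J$ is *not* weakly stable. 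So the correct formulation is: pick $J \supseteq A$ with $i \notin J$, $F(i \mid J) = 0$, and $J$ maximal with these two properties; then $J$ *is* weakly stable (any $j \in J^c$, $j \ne i$, with $F(j\mid J)=0$ would contradict maximality, and $F(i\mid J) = 0$... is a problem). This shows the naive maximality argument needs repair — likely one should instead directly prove the contrapositive of $\rho$-submodularity implies existence of a non-strongly-stable weakly stable set, and I expect the paper handles this with a cleaner induction on $|B \setminus A|$. I would reduce to the case $|B\setminus A| = 1$ first (a ``one-step'' submodularity suffices for the definition by telescoping) and then the obstruction becomes a single-element statement that is easier to contradict against stability.
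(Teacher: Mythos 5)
Your forward direction is correct and is exactly the paper's argument. The genuine gap is where you say it is: in the reverse direction you need that $F(A\cup\{i\})-F(A)>0$ whenever $A\subseteq B$, $i\in B^c$ and $F(B\cup\{i\})-F(B)>0$ (so that the finite minimum of ratios defining $\rho$ is positive), and none of your attempted constructions delivers this. The obstruction is structural: any set $J\supseteq A$ with $i\notin J$ and $F(J\cup\{i\})=F(J)$ is automatically \emph{not} weakly stable, because $i$ itself witnesses the failure, so you can never apply the hypothesis ``weakly stable $\Rightarrow$ strongly stable'' to such a $J$; and reducing to $|B\setminus A|=1$ does not help, since the one-step case already contains the whole difficulty. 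For comparison, the paper's own proof leaps over exactly this point: it asserts that $F(B\cup\{i\})-F(B)>0$ means ``$B$ is weakly stable,'' which is not justified as written, since weak stability requires a positive marginal for \emph{every} element of $B^c$, not just for the particular $i$ in the violating triple.

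The step can be closed by growing the \emph{large} set rather than shrinking the small one. Suppose $F(A\cup\{i\})=F(A)$ and, for contradiction, that some $C\supseteq A$ with $i\notin C$ has $F(C\cup\{i\})>F(C)$. Any such $C$ fails strong stability (witnessed by $A$ and $i$), hence by hypothesis fails weak stability, so some $k\in C^c$ has $F(C\cup\{k\})=F(C)$; since $F(C\cup\{i\})>F(C)$, necessarily $k\neq i$. Passing to $C\cup\{k\}$ preserves the defining property, because $F(C\cup\{k\}\cup\{i\})\geq F(C\cup\{i\})>F(C)=F(C\cup\{k\})$ by monotonicity. Iterating strictly enlarges $C$ without ever adding $i$, so after finitely many steps $C=V\setminus\{i\}$; its only outside element is $i$, whose marginal is strictly positive, so this $C$ is weakly stable yet not strongly stable --- a contradiction. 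This proves that $F(A\cup\{i\})-F(A)>0$ for every relevant triple, and then your definition of $\rho$ as the minimum ratio over the finitely many such triples finishes the argument.
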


\begin{example}
The range function $\text{range}(A) = \max(A) - \min(A) + 1$ is $\rho$-submodular with $\rho = \frac{1}{d-1}$. 
\end{example}

\subsection{Relation between discrete and continuous stability} \label{sec:RelationDisceteConv}

This section provides several technical results relating the discrete and continuous notions of stability. It thus provides us with the necessary tools to characterize which supports can be correctly estimated w.r.t the combinatorial penalty itself, without going through its relaxations. 

\begin{restatable}{proposition}{primeDSCSProp}
\label{prop:DS-CS}
Given any {monotone} set function $F$, all sets $J \subseteq V$ {strongly stable w.r.t to $F$} are also strongly stable w.r.t $\Omega_p$ and $\Theta_p$.
\end{restatable}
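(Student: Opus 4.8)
The plan is to show that strong stability with respect to $F$ implies the decomposability property of Definition 1 for $\Omega_p$ and $\Theta_p$ at every $w$ with $\supp(w) \subseteq J$, which is exactly continuous strong stability. The natural strategy is to work first at the level of the $\ell_\infty$-envelopes, since the variational forms \eqref{eq:VarLpLinf} and \eqref{eq:VarLpLinfNonHomo} reduce everything to $\Omega_\infty$ and $\Theta_\infty$; I expect the combinatorial hypothesis to translate cleanly into a statement about $\Theta_\infty$ and $\Omega_\infty$, and then a separate argument to lift it through the $\ell_p$-variational form.

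First I would establish the $\ell_\infty$ case. Fix $w$ with $\supp(w) \subseteq J$ and $\Delta$ with $\supp(\Delta) \subseteq J^c$; I want $\Omega_\infty(w + \Delta) \geq \Omega_\infty(w) + M_J \|\Delta\|_\infty$ (and likewise for $\Theta_\infty$). Take an optimal cover $\alpha \geq 0$ for $w + \Delta$ in \eqref{eq:ConvCover} (resp.\ \eqref{eq:ConvCoverNonHomo}), so $\sum_S \alpha_S \1_S \geq |w| + |\Delta|$ (the supports of $w$ and $\Delta$ are disjoint). Let $i \in J^c$ achieve $|\Delta_i| = \|\Delta\|_\infty$. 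Split the active sets $\{S : \alpha_S > 0\}$ according to whether $i \in S$; the sets containing $i$ must carry total weight at least $\|\Delta\|_\infty$ on coordinate $i$. The idea is to ``shrink'' each such $S$ to $S \setminus \{i\}$ (or more carefully to $S \cap (\text{something} \subseteq J)$) and use that, by strong stability of $J$, adding $i$ to any $A \subseteq J$ strictly increases $F$; quantifying this strict increase gives a uniform gap $\gamma_J := \min_{A \subseteq J, i \in J^c} [F(A \cup \{i\}) - F(A)] > 0$ (finite ground set, so the min is attained and positive). Replacing the sets through $i$ by their restrictions then produces a feasible cover for $w$ whose cost is at most $\Omega_\infty(w+\Delta) - \gamma_J \|\Delta\|_\infty$ — modulo the care needed to ensure the restricted sets still lie inside $J$ and still cover $|w|$, which is where monotonicity of $F$ and the disjoint-support structure are used. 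This yields the $\ell_\infty$ decomposability with $M_J = \gamma_J$ (for $\Theta_\infty$ the extra constraint $\sum_S \alpha_S = 1$ is preserved under the restriction operation, so the same argument applies).

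Next I would lift to general $p \in (1,\infty)$. Using \eqref{eq:VarLpLinfNonHomo}, write $\Theta_p(w+\Delta) = \frac{1}{p}\sum_j \frac{|w_j + \Delta_j|^p}{\eta_j^{p-1}} + \frac{1}{q}\Theta_\infty(\eta)$ for the optimal $\eta \in [0,1]^d$; since the supports are disjoint, $\frac{1}{p}\sum_j \frac{|w_j + \Delta_j|^p}{\eta_j^{p-1}} = \frac{1}{p}\sum_{j \in J}\frac{|w_j|^p}{\eta_j^{p-1}} + \frac{1}{p}\sum_{j \in J^c}\frac{|\Delta_j|^p}{\eta_j^{p-1}}$. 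Restricting $\eta$ to $\eta_J$ is feasible for $\Theta_p(w)$ and, combined with the just-proved $\ell_\infty$-decomposability applied to $\eta$ (whose support splits across $J$ and $J^c$), one controls $\frac{1}{q}\Theta_\infty(\eta) \geq \frac{1}{q}\Theta_\infty(\eta_J) + \frac{M_J}{q}\|\eta_{J^c}\|_\infty$. Then a short case analysis on the size of $\|\Delta\|_\infty$ versus the $\eta_j$'s — when $\|\Delta\|_\infty$ is small, $\Theta_p$ behaves like an $\ell_1$-norm on $\Delta$ (cf.\ the Berhu discussion) giving a linear-in-$\|\Delta\|_\infty$ lower bound directly; when it is large, the term $\frac{1}{p}\sum_{j\in J^c}|\Delta_j|^p/\eta_j^{p-1}$ together with $\frac{M_J}{q}\|\eta_{J^c}\|_\infty$ is minimized over $\eta_{J^c} \in [0,1]$ and one checks the minimum is bounded below by a constant times $\|\Delta\|_\infty$ — produces the desired $M_J' \|\Delta\|_\infty$ bound. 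The same computation with $[0,1]^d$ replaced by $\R_+^d$ handles $\Omega_p$, and $p = \infty$ is the base case already done.

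The main obstacle I anticipate is the ``shrinking the cover'' step in the $\ell_\infty$ argument: one must verify that restricting the active sets of an optimal fractional cover to subsets of $J$ both preserves feasibility for $|w|$ and produces a genuine decrease of exactly (at least) $\gamma_J \|\Delta\|_\infty$ in cost, uniformly — this requires handling overlapping sets in the cover carefully and may need an exchange/rounding argument (possibly it is cleanest to first reduce to integral $\alpha$ as in the proof of Proposition \ref{prop:LCENonHomo}, or to argue coordinatewise on the coordinate $i$ achieving the max). The secondary difficulty is the small-vs-large case split in the $\ell_p$ lift, where getting the constants to come out uniformly (independent of $w$) needs the optimal $\eta$ to be bounded away from degenerate configurations, which should follow from $\eta_j \geq |w_j|$ on $J$ and $\eta_j \in [0,1]$ but warrants care.
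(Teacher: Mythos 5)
Your argument is correct in substance but follows a genuinely different route from the paper's. The paper works entirely on the dual side: it first reduces decomposability to single-coordinate perturbations $\delta \1_i$ via an auxiliary lemma (Lemma \ref{lem:eqDecompDef}, using monotonicity of $\Phi$), then takes an optimal multiplier $\kappa^*$ in the support-function forms \eqref{eq:SupportForm} and \eqref{eq:SupportFormNonHomo} and augments it with $\kappa'_i = \min_{A \subseteq J} F(A \cup \{i\}) - F(A) > 0$; feasibility of the augmented $\kappa'$ is exactly where strong stability and monotonicity of $F$ enter, and the constant $M_J = (\kappa'_i)^{1/q}$ falls out for every $p$ at once. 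You instead work on the primal side at $p=\infty$ (shrinking an optimal cover of $w+\Delta$ in \eqref{eq:ConvCover} and \eqref{eq:ConvCoverNonHomo}) and then lift through the $\eta$-variational forms \eqref{eq:VarLpLinf} and \eqref{eq:VarLpLinfNonHomo}; both steps do go through. For the shrinking step, the clean resolution of the obstacle you flag is to replace \emph{every} active set $S$ by $S \cap J$: feasibility for $|w|$ is then immediate, and for each $S \ni i$ the chain $F(S) \geq F((S\cap J)\cup\{i\}) \geq F(S\cap J) + \gamma_J$ (monotonicity, then strong stability) yields a per-unit cost drop of $\gamma_J$ on total mass $\sum_{S \ni i}\alpha_S \geq \|\Delta\|_\infty$ --- no rounding to integral $\alpha$ is needed, and the constraint $\sum_S \alpha_S = 1$ is preserved. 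For the lift, your anticipated case analysis collapses to the single one-dimensional minimization $\min_{t>0} \frac{1}{p}\frac{\|\Delta\|_\infty^p}{t^{p-1}} + \frac{\gamma_J}{q} t = \gamma_J^{1/q}\|\Delta\|_\infty$, which reproduces the paper's constant; restricting $t \in (0,1]$ for $\Theta_p$ only increases the bound, so the degeneracy you worry about does not arise. The paper's dual route buys uniformity in $p$ and shorter bookkeeping; your primal route buys a more transparent combinatorial picture of where the stability gap $\gamma_J$ enters, at the cost of having to lift from $\ell_\infty$ to $\ell_p$ by hand.
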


It follows then by Theorem \ref{Thm:Consistency} that discrete strong stability is a sufficient condition for correct estimation.

\begin{corollary}\label{cor:Consistent}
If $\Phi$ is equal to $\Omega_p$ or $\Theta_p$ for $p \in (1,\infty)$ and $\supp(w^*) = J$ is strongly stable w.r.t $F$, then Theorem \ref{Thm:Consistency} holds, i.e., the adaptive estimator \eqref{eq:1StepEst} is consistent and correctly recovers the support. This also holds for $p = \infty$ if we further assume that $\| w^* \|_\infty <1$.
\end{corollary}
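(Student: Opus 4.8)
The plan is to verify, for $\Phi$ equal to $\Omega_p$ or $\Theta_p$, each hypothesis of Theorem~\ref{Thm:Consistency} and then invoke it verbatim. That $\Omega_p$ and $\Theta_p$ are proper, normalized, absolute and monotone convex functions was already recorded in Section~\ref{sect:ContCond}, and it is also immediate from Lemmas~\ref{lem:HomEnv} and~\ref{lem:NonHomEnv}: both vanish at $0$, depend on $w$ only through $|w|$, and are nondecreasing in $|w|$ since $F$ is monotone and the $\ell_p$-terms are. For the strong-stability hypothesis I would simply apply Proposition~\ref{prop:DS-CS}: since $J=\supp(w^*)$ is strongly stable with respect to the monotone set function $F$, it is strongly stable with respect to both $\Omega_p$ and $\Theta_p$, for every $p\in(1,\infty]$. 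This is where the real content sits, and it is already established. What remains is purely the domain condition $|w^*|^\alpha\in\intr\dom\Phi$.

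For $p\in(1,\infty)$ I claim $\dom\Omega_p=\dom\Theta_p=\R^d$, which makes the condition vacuous. Indeed, the covering assumption $\bigcup_{A\in\D}A=V$, together with the monotonicity of $F$ (which makes $\D$ downward closed), produces an $F$-finite cover of $V$; feeding it into \eqref{eq:ConvCover} and \eqref{eq:ConvCoverNonHomo} with, say, $\alpha_S=1/k$ on the $k$ cover sets shows $\Omega_\infty(\varepsilon\1)<\infty$ and $\Theta_\infty(\varepsilon\1)<\infty$ for $\varepsilon$ small. Plugging $\eta=\varepsilon\1\in(0,1]^d$ into the variational forms \eqref{eq:VarLpLinf} and \eqref{eq:VarLpLinfNonHomo} then gives $\Omega_p(w),\Theta_p(w)<\infty$ for every $w\in\R^d$, since the $\ell_p$-term is finite for any positive $\eta$. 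Hence $\intr\dom\Omega_p=\intr\dom\Theta_p=\R^d\ni|w^*|^\alpha$.

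For $p=\infty$ the argument splits. Because \eqref{eq:ConvCover} carries no normalization constraint on the cover, the same covering argument gives $\dom\Omega_\infty=\R^d$, so again nothing to check. For $\Theta_\infty$, however, \eqref{eq:ConvCoverNonHomo} forces $\dom\Theta_\infty\subseteq[-1,1]^d$, and this is exactly where the extra hypothesis $\|w^*\|_\infty<1$ enters: it yields $|w^*_i|^\alpha<1$ for all $i$. Strong stability implies feasibility $F(J)<\infty$, hence $J\in\D$ and $\Theta_\infty(\1_J)\le F(J)<\infty$; together with $\Theta_\infty(\varepsilon\1)<\infty$ and convexity of $\Theta_\infty$, any $\bar\eta=(1-\mu)\1_J+\mu\varepsilon\1$ with $\mu>0$ small lies in $[0,1]^d\cap\dom\Theta_\infty$ and satisfies $\bar\eta_i>|w^*_i|^\alpha\ge0$ for every $i$. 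Since $\dom\Theta_\infty$ is downward closed in $|\cdot|$ (if $\sum_S\alpha_S\1_S\ge|\eta|\ge|\eta'|$ the same cover $(\alpha_S)$ certifies $\eta'$) and symmetric, it contains the box $[-\bar\eta,\bar\eta]$, of which $|w^*|^\alpha$ is an interior point. Thus $|w^*|^\alpha\in\intr\dom\Theta_\infty$.

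With every hypothesis of Theorem~\ref{Thm:Consistency} in force, it applies directly to $\Phi\in\{\Omega_p,\Theta_p\}$ and delivers $\sqrt{n}(\hat{w}_{J}-w^*_{J})\xrightarrow{d}\N(0,\sigma^2 Q_{JJ}^{-1})$ and $P(\supp(\hat{w})=J)\to1$ for the adaptive estimator~\eqref{eq:1StepEst}. The only step requiring genuine care, beyond citing Proposition~\ref{prop:DS-CS}, is the domain-interior check in the $\Theta_\infty$ case: $\dom\Theta_\infty$ is a strict subset of $[-1,1]^d$, so one must argue it has nonempty interior around $|w^*|^\alpha$, and that is precisely where $\|w^*\|_\infty<1$, the feasibility $J\in\D$, and $\bigcup_{A\in\D}A=V$ are all used.
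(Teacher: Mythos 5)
Your proposal is correct and follows the same route the paper intends: Proposition~\ref{prop:DS-CS} transfers discrete strong stability of $J$ to continuous strong stability w.r.t.\ $\Omega_p$ and $\Theta_p$, and Theorem~\ref{Thm:Consistency} then applies since both relaxations are proper normalized absolute monotone convex functions. Your explicit verification of the hypothesis $|w^*|^\alpha\in\intr\dom\Phi$ — vacuous for $p\in(1,\infty)$ via the covering assumption, and requiring $\|w^*\|_\infty<1$ together with $F(J)<\infty$ and the downward-closedness of $\dom\Theta_\infty$ for $p=\infty$ — is a correct filling-in of a step the paper leaves implicit, and it correctly identifies why the extra condition is needed only in the $\Theta_\infty$ case.
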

Furthermore, if $F$ is $\rho$-submodular, then by Proposition \ref{prop:EqDefStableProp}, it is enough for $\supp(w^*) = J$ to be weakly stable w.r.t $F$ for Corollary \ref{cor:Consistent} to hold. Conversely, Proposition \ref{prop:CS-DSpinf} below shows that discrete strong stability is also a necessary condition for continuous strong stability, in the case where $p = \infty$ and $F$ is equal to its LCE. \\
 
\begin{restatable}{proposition}{primeCSDSpinfProp} \label{prop:CS-DSpinf}
If $F = {F}_-$ and $J$ is strongly stable w.r.t $\Omega_\infty$, then $J$ is strongly stable w.r.t $F$.
Similarly, for any monotone $F$, if $J$ is strongly stable w.r.t $\Theta_\infty$, then $J$ is strongly stable w.r.t $F$.
\end{restatable}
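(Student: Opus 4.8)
The plan is to prove the contrapositive in both cases: assuming $J$ is *not* strongly stable w.r.t.\ $F$, I will exhibit a vector $w$ with $\supp(w)\subseteq J$ at which the decomposability inequality fails, i.e.\ for every $M>0$ there is a perturbation $\Delta$ supported on $J^c$ with $\Phi(w+\Delta) < \Phi(w) + M\|\Delta\|_\infty$. Negation of discrete strong stability gives a set $A\subseteq J$ and an index $i\in J^c$ with $F(A\cup\{i\}) \le F(A)$; by monotonicity this is an equality, $F(A\cup\{i\}) = F(A)$. The natural candidate perturbation is $\Delta = t\,\1_{\{i\}}$ for small $t>0$, and the natural candidate base point is something close to $\1_A$ — e.g.\ $w = \1_A$ itself for the $\Theta_\infty$ case (which is feasible since $F(A)<\infty$ and $w\in[-1,1]^d$ so $w\in\dom\Theta_\infty$), or a suitable scaled/perturbed version for the $\Omega_\infty$ case.

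For the $\Theta_\infty$ statement I would use the cover formula \eqref{eq:ConvCoverNonHomo}. Evaluating $\Theta_\infty(\1_A) = \tilde F_-(A) = F(A)$ by Proposition \ref{prop:LCENonHomo} (using monotonicity of $F$). Then I bound $\Theta_\infty(\1_A + t\1_{\{i\}}) = \Theta_\infty(\1_{A\cup\{i\}} \text{ with entry } i \text{ at level } 1+t)$: take the feasible cover using the single set $S = A\cup\{i\}$ with weight $\alpha_S = 1$ when $t\le 0$... more carefully, for $0<t\le ?$ one needs $\sum\alpha_S\1_S \ge \1_A + t\1_{\{i\}}$ with $\sum\alpha_S=1$; the cover $\alpha_{A\cup\{i\}}=1$ works whenever the $i$-coordinate requirement $1\cdot\1\ge t$, i.e.\ $t\le 1$, and the $A$-coordinates are covered since $1\ge 1$. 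This gives $\Theta_\infty(\1_A+t\1_{\{i\}}) \le F(A\cup\{i\}) = F(A) = \Theta_\infty(\1_A)$ for all $t\in(0,1]$. Hence $\Theta_\infty(\1_A + t\1_{\{i\}}) - \Theta_\infty(\1_A) \le 0 < M t = M\|\Delta\|_\infty$ for every $M>0$, so $\Theta_\infty$ is not decomposable at $w=\1_A$ w.r.t.\ $J$ (since $\supp(\1_A)\subseteq J$ and $\Delta$ is supported on $\{i\}\subseteq J^c$), contradicting strong stability of $J$ w.r.t.\ $\Theta_\infty$.

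For the $\Omega_\infty$ statement the argument is parallel using \eqref{eq:ConvCover} and the hypothesis $F = F_-$, which gives $\Omega_\infty(\1_A) = F_-(A) = F(A)$ and $\Omega_\infty(\1_{A\cup\{i\}}) = F(A\cup\{i\}) = F(A)$. Using the single-set cover $\alpha_{A\cup\{i\}} = 1$ (no sum constraint now) one gets, for all $t\in(0,1]$, $\Omega_\infty(\1_A + t\1_{\{i\}}) \le F(A\cup\{i\}) = F(A) = \Omega_\infty(\1_A)$, and the same contradiction follows. The one subtlety to get right is the admissible range of $t$ and the exact form of the base point so that $\Delta$ is genuinely supported off $J$ while $w$ is supported on $J$; since $A\subseteq J$ and $i\in J^c$, $w=\1_A$ and $\Delta=t\1_{\{i\}}$ satisfy this automatically, so no perturbation of the base point is actually needed — I expect the main (minor) obstacle to be verifying feasibility of the chosen covers and that $\1_A\in\dom\Omega_\infty\cap\dom\Theta_\infty$, which follows from $F(A)<\infty$.
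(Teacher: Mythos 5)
Your proof is correct and is essentially the contrapositive of the paper's one-line argument: both rest on evaluating the relaxations at the indicator vectors $\1_A$ and $\1_{A\cup\{i\}}$, where they coincide with $F$ (via $F=F_-$, resp.\ $\tilde{F}_-=F$ for monotone $F$), and comparing with the decomposability inequality for the perturbation $\Delta = t\1_{\{i\}}$ at the base point $w=\1_A$. The explicit single-set covers you use to upper-bound $\Theta_\infty(\1_A+t\1_{\{i\}})$ and $\Omega_\infty(\1_A+t\1_{\{i\}})$ by $F(A\cup\{i\})$ are just a concrete way of writing the identity $\Theta_\infty(\1_{A\cup\{i\}})=\Omega_\infty(\1_{A\cup\{i\}})=F(A\cup\{i\})$ that the paper invokes directly, so no genuinely different ingredient is involved.
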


Finally, in the special case of monotone submodular function, the following Corollary \ref{corr:SubStEq}, along with Proposition \ref{prop:DS-CS} demonstrates that all definitions of stability become equivalent. 
We thus recover the result in \cite{bach2010structured} showing that discrete weakly stable supports correspond to the set of allowed sparsity patterns for monotone submodular functions. 

\begin{restatable}{corollary}{primeSubStEqCorr} \label{corr:SubStEq}
If $F$ is monotone submodular and $J$ is weakly stable w.r.t $\Omega_\infty = \Theta_\infty$ then $J$ is weakly stable w.r.t $F$. 
\end{restatable}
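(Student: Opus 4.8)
The plan is to prove Corollary~\ref{corr:SubStEq} by combining Proposition~\ref{prop:CS-DSpinf} with the equivalence of weak and strong stability in the submodular case. First I would observe that for a monotone submodular $F$, Remark~\ref{rmk:SubNonHomoEnv} gives $\Omega_\infty = \Theta_\infty = f_L$ on $[-1,1]^d$, and moreover $F = F_- = \tilde F_-$ since the Lov\'asz extension agrees with $F$ at vertices of the hypercube. So the hypothesis ``$F = F_-$'' of Proposition~\ref{prop:CS-DSpinf} is automatically satisfied, and likewise $F$ is monotone, so both halves of that proposition apply to $\Omega_\infty = \Theta_\infty$.

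Next, the key reduction: I want to pass from \emph{weak} stability w.r.t.\ $\Omega_\infty$ to \emph{strong} stability w.r.t.\ $\Omega_\infty$, so that Proposition~\ref{prop:CS-DSpinf} (which is stated for strong stability) can be invoked. This is where I would lean on the fact that $\Omega_\infty = \Theta_\infty = f_L$ is the Lov\'asz extension of a submodular function, hence itself a "submodular-type" convex function; more concretely, submodular $F$ is in particular $\rho$-submodular with $\rho = 1$, so by Proposition~\ref{prop:EqDefStableProp}, discrete weak and strong stability w.r.t.\ $F$ coincide. The cleanest route is therefore: (1) $J$ weakly stable w.r.t.\ $\Omega_\infty$; I need to show this forces $J$ weakly stable w.r.t.\ $F$ directly, i.e.\ $F(J \cup \{i\}) > F(J)$ for all $i \in J^c$. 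For this I would use that weak stability w.r.t.\ a convex penalty means decomposability holds at \emph{some} $w$ with $\supp(w) = J$, evaluate the decomposability inequality $\Omega_\infty(w + \Delta) \geq \Omega_\infty(w) + M_J\|\Delta\|_\infty$ for $\Delta = t\mathbf{1}_{\{i\}}$ with small $t > 0$, and translate this through the explicit Lov\'asz-extension form of $\Omega_\infty$ into a strict-increase statement about $F$. Alternatively, and more in the spirit of the paper, once weak stability w.r.t.\ $F$ is in hand, Proposition~\ref{prop:EqDefStableProp} upgrades it to strong stability w.r.t.\ $F$, and Proposition~\ref{prop:DS-CS} pushes that back to strong (hence weak) stability w.r.t.\ $\Omega_\infty = \Theta_\infty$, closing the loop and showing all four notions coincide.

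So the ordered steps are: (i) invoke Remark~\ref{rmk:SubNonHomoEnv} to identify $\Omega_\infty = \Theta_\infty = f_L$ and note $F = F_- = \tilde F_-$; (ii) show $J$ weakly stable w.r.t.\ $\Omega_\infty$ implies $F(J \cup \{i\}) > F(J)$ for all $i \in J^c$, i.e.\ $J$ weakly stable w.r.t.\ $F$, by specializing the decomposability inequality to rank-one perturbations $\Delta = t\mathbf{1}_{\{i\}}$ and using the monotonicity/form of $f_L$; (iii) conclude. Optionally, record that together with Proposition~\ref{prop:EqDefStableProp} (submodular $\Rightarrow$ weak $=$ strong discrete stability) and Proposition~\ref{prop:DS-CS}, all notions of stability — discrete weak, discrete strong, continuous weak, continuous strong w.r.t.\ $\Omega_\infty = \Theta_\infty$ — are equivalent, recovering the characterization of \cite{bach2010structured}.

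I expect the main obstacle to be step (ii): carefully turning the $\|\Delta\|_\infty$-decomposability inequality at an \emph{arbitrary} witness point $w$ (with $\supp(w) = J$, but whose entries may be small or unequal) into the \emph{discrete} strict inequality $F(J \cup \{i\}) > F(J)$. The subtlety is that $f_L(w + t\mathbf{1}_{\{i\}}) - f_L(w)$ need not equal $t\,(F(J\cup\{i\}) - F(J))$ for general $w$; one must choose the perturbation direction and use the piecewise-linear structure of the Lov\'asz extension (its derivative in a generic direction is a marginal gain $F(S \cup \{i\}) - F(S)$ for an appropriate $S \subseteq J$) so that a \emph{strictly positive} lower bound $M_J\|\Delta\|_\infty$ forces the relevant marginal gain to be strictly positive; submodularity (diminishing returns) then propagates positivity of the marginal at $S$ to $S = J$. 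This argument is essentially the same bookkeeping that appears in the proof of Proposition~\ref{prop:CS-DSpinf}, so in the write-up I would factor through that proposition rather than redo it, keeping the corollary's proof short.
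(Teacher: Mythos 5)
Your core argument --- specialize the decomposability inequality at a witness $w$ with $\supp(w)=J$ to a single-coordinate perturbation $\Delta = t\1_i$ with $t$ small, and read off a marginal gain of $F$ from the Lov\'asz extension --- is exactly the paper's proof; the paper just runs it contrapositively (if $F(J\cup\{i\})=F(J)$ for some $i\in J^c$, then $f_L(|w|+\delta\1_i)=f_L(|w|)$ for all $0<\delta<\min_{j\in J}|w_j|$, contradicting decomposability at every $w$ with support $J$). Two points in your write-up need fixing, though. First, your proposed resolution of the ``obstacle'' in step (ii) --- that submodularity ``propagates positivity of the marginal at $S$ to $S=J$'' --- goes the wrong way: diminishing returns gives $F(J\cup\{i\})-F(J)\le F(S\cup\{i\})-F(S)$ for $S\subseteq J$, so a strictly positive marginal at a proper subset $S$ does \emph{not} force a strictly positive marginal at $J$. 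The correct fix is simply to take $t<\min_{j\in J}|w_j|$, so that $i$ is ordered last among the nonzero coordinates and $f_L(|w|+t\1_i)-f_L(|w|)=t\,\bigl(F(J\cup\{i\})-F(J)\bigr)$ exactly; no propagation is needed, and this is precisely the paper's choice of $\delta$. Second, you cannot ``factor through Proposition~\ref{prop:CS-DSpinf}'': that proposition assumes \emph{strong} continuous stability, which supplies decomposability at $\1_A$ for every $A\subseteq J$, whereas weak stability only provides one arbitrary (generally non-indicator) witness point $w$ --- which is exactly why the Lov\'asz-extension computation at a general $w$ is unavoidable here. With those two corrections, your step (ii) is the paper's argument.
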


\subsection{Examples}\label{sect:ex}

We highlight in this section what are the supports recovered by the adaptive estimator (AE) \eqref{eq:1StepEst} with the homogeneous convex relaxation $\Omega_p$ and non-homogeneous convex relaxation $\Theta_p$ of some examples of structure priors. For simplicity, we will focus on the case $p=\infty$. Also in all the examples we consider below, weak and strong discrete stability are equivalent, so we omit the weak/strong specification. 
Note that it is desirable that the regularizer used enforces the recovery of only the non-zero patterns satisfying the desired structure.

\textbf{Monotone submodular functions:}  As discussed above, for this class of functions, all stability definitions are equivalent and $\Omega_\infty = \Theta_\infty = f_L$. As a result, AE recovers any discrete stable non-zero pattern. This includes the following examples (c.f., \cite{obozinski2012convex} for further examples).
\begin{itemize}
\item \textbf{Cardinality:} As a strictly monotone function,  all supports are stable w.r.t to it.  Thus AE recovers \emph{all} non-zero patterns with $\Omega_\infty$ and $\Theta_\infty$, given by the $\ell_1$-norm.
\item \textbf{Overlap count function:} $F_\cap(A) = \sum_{G \in \G, G \cap A \not = \emptyset} d_G $ where $\G$ is a collection of predefined groups $G$ and $d_G$ their associated weights. $\Omega_\infty$ and $\Theta_\infty$ are given by the $\ell_1 / \ell_\infty$-group Lasso norm, and stable patterns are complements of union of groups. For example, for hierarchical groups (i.e., groups consisting of each node and its descendants on a tree), AE recovers rooted connected tree supports.
\item \textbf{Modified range function:} The range function can be transformed into a submodular function, if scaled by a constant as suggested in \cite{bach2010structured}, yielding the monotone submodular function $F^{\text{mr}}(A) = d - 1 + \text{range}(A), \forall A \not = \emptyset$ and $F^{\text{mr}}(\varnothing) = 0$. This can actually be written as an instance of $F_\cap$ with groups defined as $\G = \{[1,k] : 1 \leq k \leq d\} \cup \{[k,d] : 1 \leq k \leq d\}$. 
This norm was proposed to induce interval patterns by \cite{jenatton2011structured}, and indeed its stable patterns are interval supports. We will compare this function in the experiments with the direct convex relaxations of the range function.  

\end{itemize}

\textbf{Range function:}  The range function is $\frac{1}{d-1}$-submodular, thus its discrete strongly and weakly stable supports are identical and they correspond to interval supports. As a result, AE recovers interval supports with $\Theta_\infty$. On the other hand, since the homogeneous LCE of the range function is the cardinality,  AE recovers all supports with $\Omega_\infty$.

\textbf{Down monotone structures:} Functions of the form $F(A) = |A| + \iota_{A \in \M}(A)$, where $\M$ is down-monotone, also have their discrete strongly and weakly stable supports identical and given by the feasible set $\M$. 
These structures include the dispersive and graph models discussed in examples \ref{ex:Dispersive} and \ref{ex:Graph}. 
Since their homogeneous LCE is also the cardinality, then AE recovers all supports with $\Omega_\infty$, and only feasible supports with $\Theta_\infty$.

\section{Numerical Illustration}\label{sect:Simul}

\begin{figure}\label{fig:toyexample}
\vspace{-20pt} 
\centering
\begin{tabular}{c c}\hspace{-15pt}
\includegraphics[trim=50 180 50 150, clip, scale=.24]{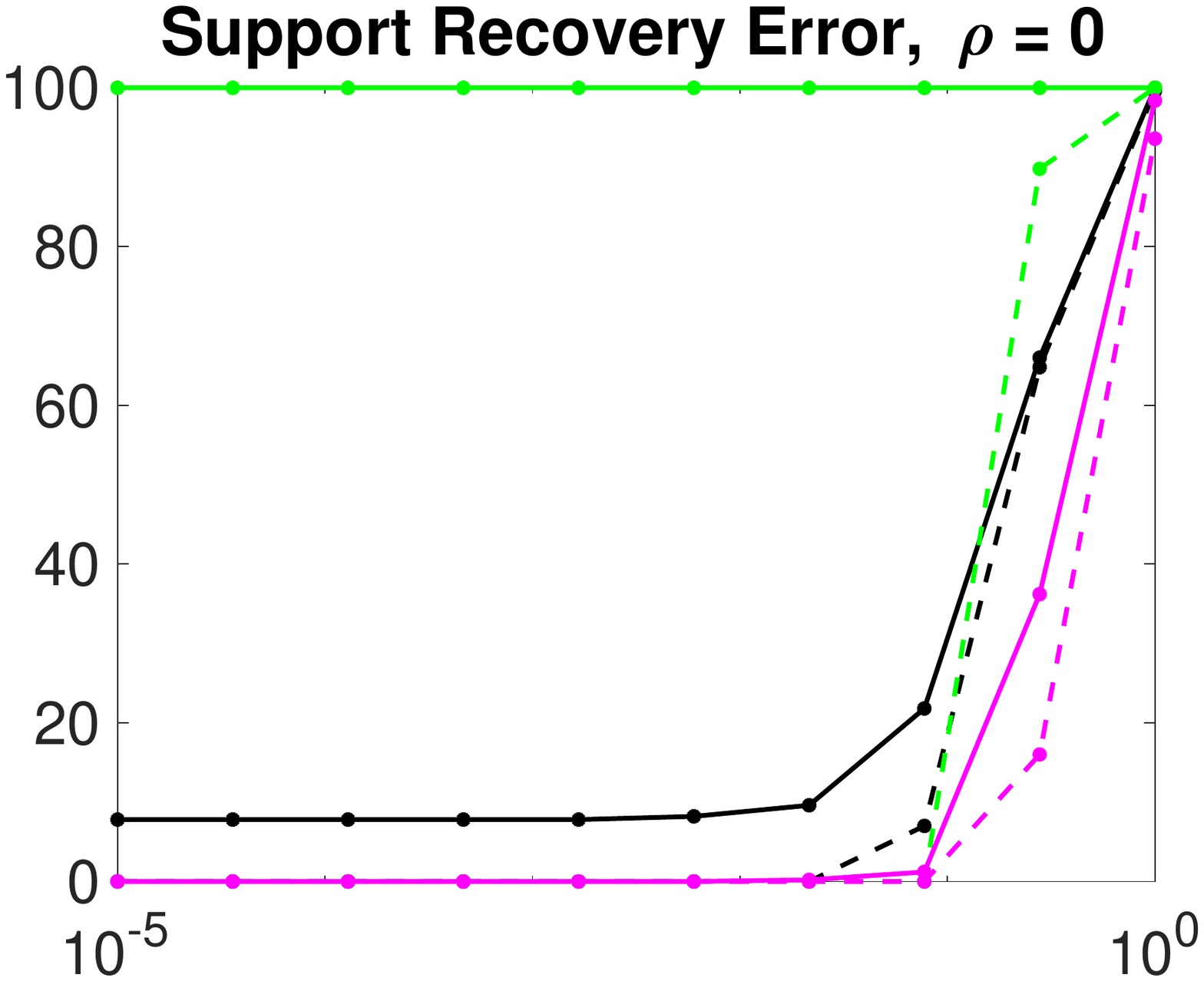} & \hspace{-15pt}
\includegraphics[trim=40 180 50 150, clip, scale=.24]{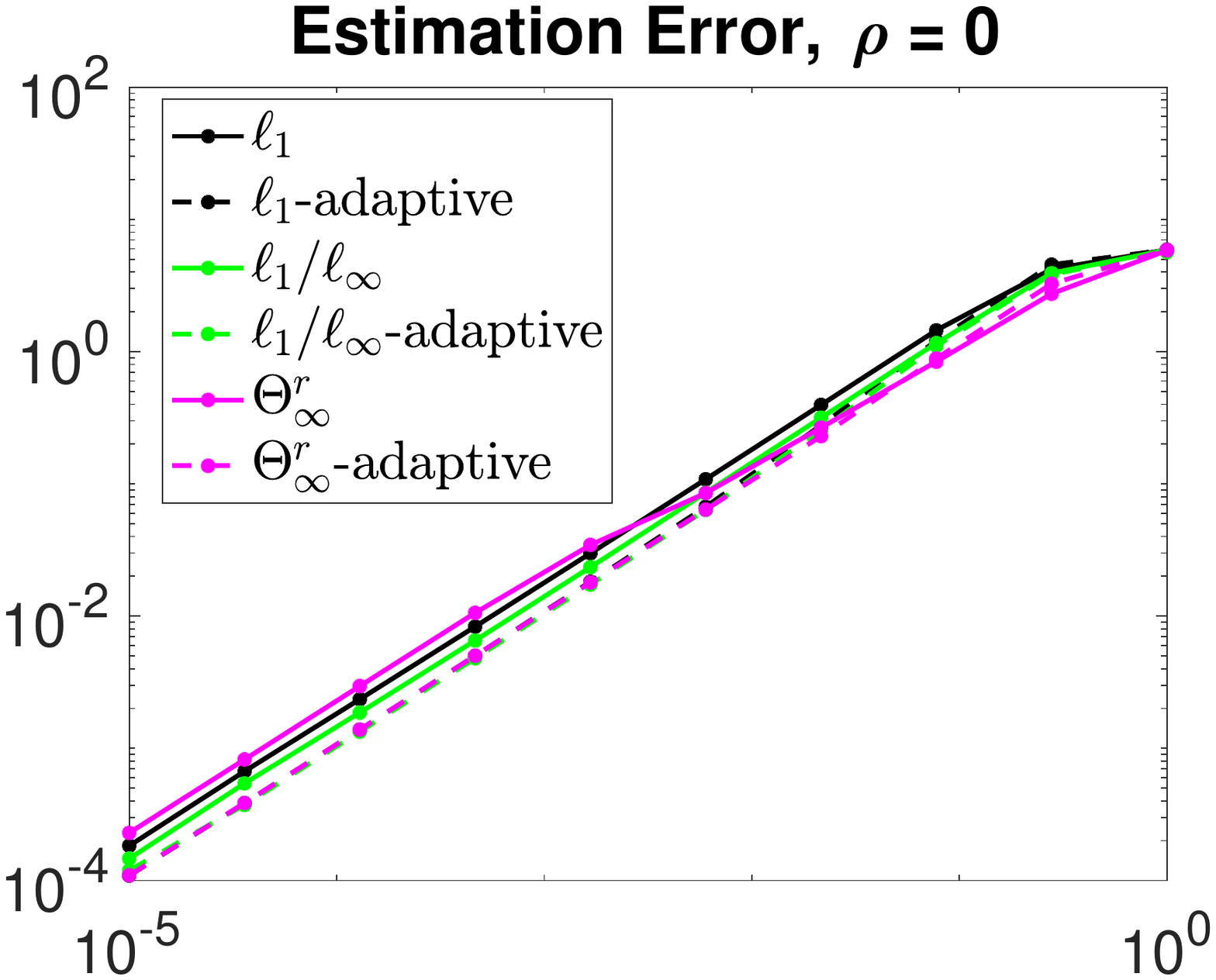} \\
\hspace{-15pt}
\includegraphics[trim=50 180 50 150, clip, scale=.24]{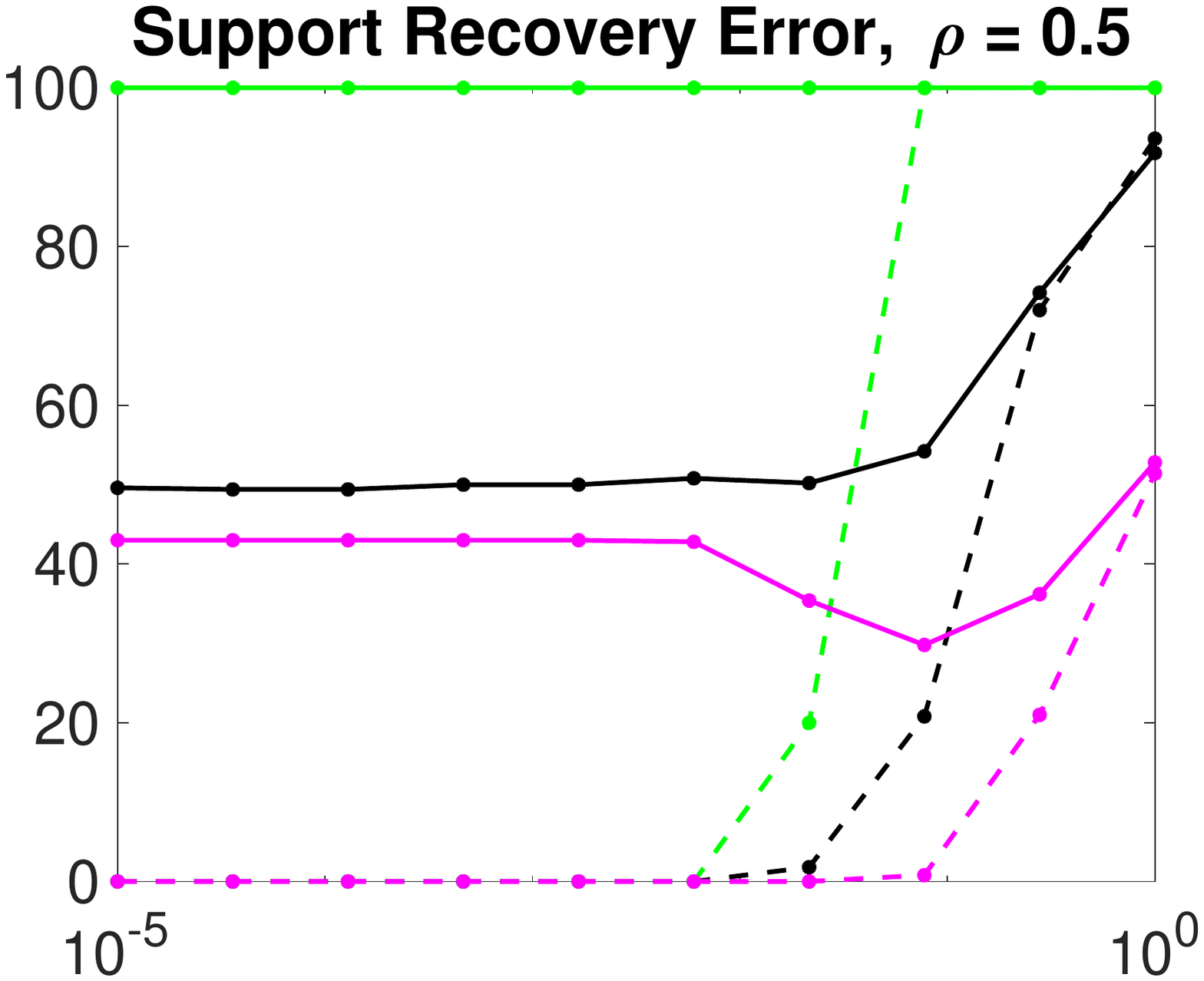} & \hspace{-15pt}
\includegraphics[trim=40 180 50 150, clip, scale=.24]{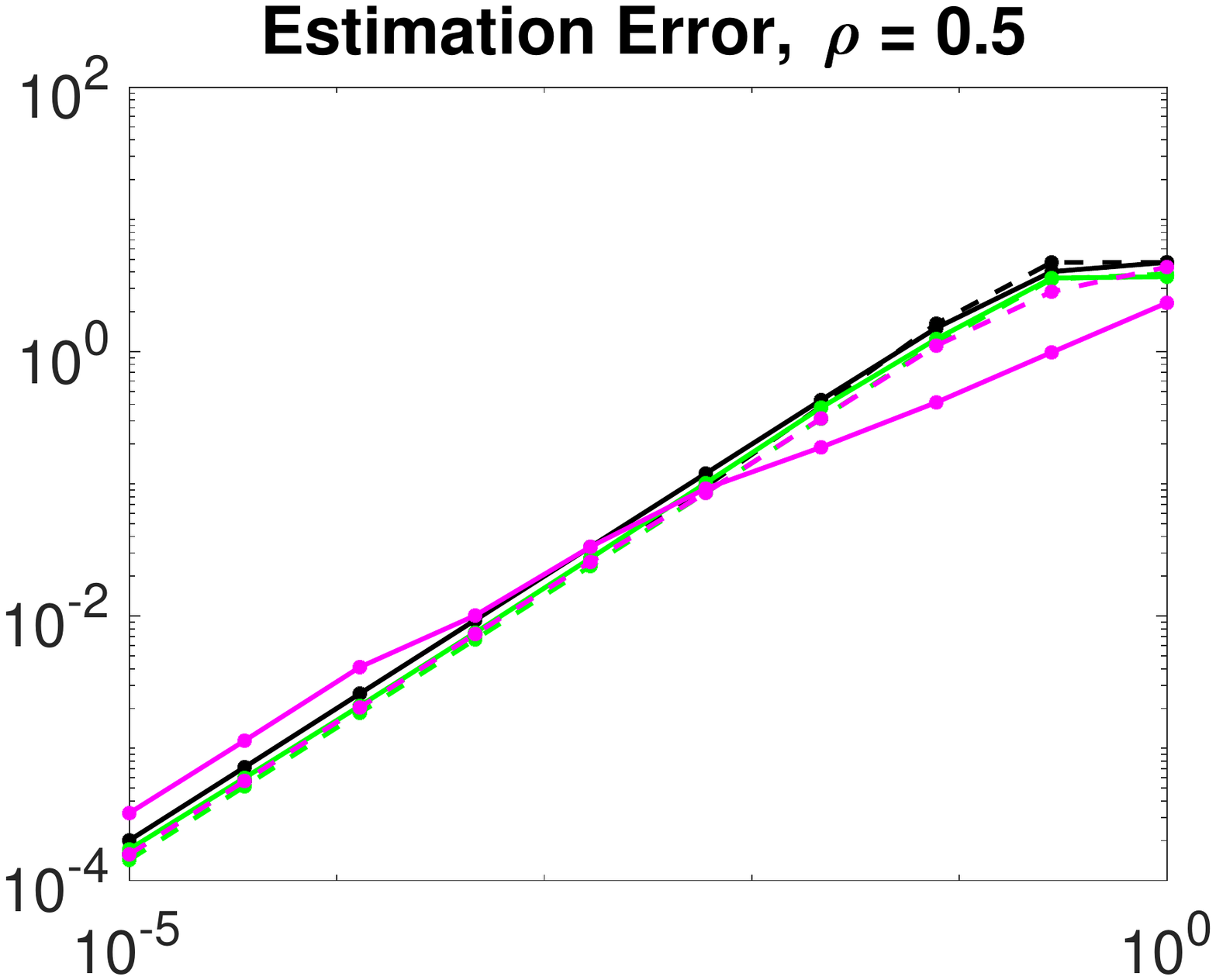} 
\end{tabular}
\caption{ (Left column) Best Hamming distance and (Right column) best least square error to the true vector $w^*$, along the regularization path, averaged over 5 runs.  }
\end{figure}

To illustrate the results presented in this paper, we consider the problem of estimating the support of a parameter vector $w^* \in \R^d$ whose support is an interval. It is natural then to choose as combinatorial penalty the range function whose stable supports are intervals. We aim to study the effect of adaptive weights, as well as the effect of the choice of homogeneous vs. non-homogeneous convex relaxation for regularization, on the quality of support recovery.

As discussed in Section \ref{sect:ex}, the $\ell_\infty$-homogeneous convex envelope of the range is simply the $\ell_1$-norm. 
Its $\ell_\infty$-non-homogeneous convex envelope $\Theta^{r}_\infty$ can be computed using the formulation \eqref{eq:VarLpLinfNonHomo}, where only interval sets need to be considered in the constraints, leading to a quadratic number of constraints.
We also consider the $\ell_1 / \ell_\infty$-norm that corresponds to the convex relaxation of the modified range function $F^{\text{mr}}$.

We consider a simple regression setting in which $w^* \in \R^d$ is a constant signal whose support is an interval. The choice of $p = \infty$ is well suited for constant valued signals.
The design matrix $X \in \R^{d \times n}$ is either drawn as  (1) an i.i.d Gaussian matrix with normalized columns, or  (2) a correlated Gaussian matrix with normalized columns, with the off-diagonal values of the covariance matrix set to a value $\rho = 0.5$. We observe noisy linear measurements $y = X w^* + \epsilon$, where the noise vector is i.i.d.~with variance $\sigma^2$, where $\sigma$ is varied between $10^{-5}$ and $1$. We solve problem \eqref{eq:1StepEst} with and without adaptive weights $|w^0|^{\alpha - 1}$, where $w^0$ is taken to be the least squares solution and $\alpha = 0.3$.

We assess the estimators obtained through the different regularizers both in terms of support recovery and in terms of estimation error. Figure \ref{fig:toyexample} plots (in logscale) these two criteria against the noise level~$\sigma$. We plot the best achieved error on the regularization path, where the regularization parameter $\lambda$ was varied between $10^{-6}$ and $10^3$. We set the parameters to $d = 250, k = 100, n = 500$.

We observe that the adaptive weight scheme helps in support recovery, especially in the correlated design setting. Indeed, Lasso is only guaranteed to recover the support under an ``irrepresentability condition" \cite{zou2006adaptive}. This is satisfied with high probability only in the non-correlated design. On the other hand, adaptive weights allow us to recover any strongly stable support, without any additional condition, as shown in Theorem \ref{Thm:Consistency}.
The $\ell_1 / \ell_\infty$-norm performs poorly in this setup. In fact, the modified range function $F^{\text{mr}}$, introduced a gap of $d$ between non-empty sets and the empty set. This leads to the undersirable behavior, already documented in \cite{bach2010structured, jenatton2011structured} of adding all the variables in one step, as opposed to gradually.
Adaptive weights seem to correct for this effect, as seen by the significant improvement in performance. 
Finally, note that choosing the ``tighter" non-homogeneous convex relaxation leads to better support recovery. Indeed, $\Theta^{r}_\infty$ performs better than $\ell_1$-norm in all setups.

\section{Conclusion}

We presented an analysis of homogeneous and non-homogeneous convex relaxations of $\ell_p$-regularized combinatorial penalties. Our results show that structure encoded by submodular priors can be equally well expressed by both relaxations, while the non-homogeneous relaxation is able to express the structure of more general monotone set functions. We also identified necessary and sufficient stability conditions on the supports to be correctly recovered. We proposed an adaptive weight scheme that is guaranteed to recover supports that satisfy the sufficient stability conditions, in the asymptotic setting, even under correlated design matrix.

\subsubsection*{Acknowledgements}
We thank Ya-Ping Hsieh for helpful discussions.
This work was supported in part by the European Commission under ERC Future Proof, SNF 200021-146750,
SNF CRSII2-147633, NCCR Marvel. Francis Bach acknowledges support from the chaire Economie des nouvelles donn\'ees with the data science joint research initiative with the fonds AXA pour la recherche, and the Initiative de Recherche ``Machine Learning for Large-Scale Insurance'' from the Institut Louis Bachelier.

{ 
\bibliographystyle{plain}
\bibliography{biblio}

\begin{thebibliography}{10}

\bibitem{bach2010structured}
F.~Bach.
\newblock Structured sparsity-inducing norms through submodular functions.
\newblock In {\em NIPS}, pages 118--126, 2010.

\bibitem{bach2011learning}
F.~Bach.
\newblock Learning with submodular functions: A convex optimization
  perspective.
\newblock {\em arXiv preprint arXiv:1111.6453}, 2011.

\bibitem{bach2012optimization}
Francis Bach, Rodolphe Jenatton, Julien Mairal, Guillaume Obozinski, et~al.
\newblock Optimization with sparsity-inducing penalties.
\newblock {\em Foundations and Trends{\textregistered} in Machine Learning},
  4(1):1--106, 2012.

\bibitem{baraniuk2010model}
R.G. Baraniuk, V.~Cevher, M.F. Duarte, and C.~Hegde.
\newblock Model-based compressive sensing.
\newblock {\em Information Theory, IEEE Transactions on}, 56(4):1982--2001,
  2010.

\bibitem{candes2008enhancing}
Emmanuel~J Candes, Michael~B Wakin, and Stephen~P Boyd.
\newblock Enhancing sparsity by reweighted minimization.
\newblock {\em Journal of Fourier analysis and applications}, 14(5):877--905,
  2008.

\bibitem{chandrasekaran2012convex}
V.~Chandrasekaran, B.~Recht, P.A. Parrilo, and A.S. Willsky.
\newblock The convex geometry of linear inverse problems.
\newblock {\em Foundations of Computational Mathematics}, 12:805--849, 2012.

\bibitem{das2011submodular}
Abhimanyu Das and David Kempe.
\newblock Submodular meets spectral: Greedy algorithms for subset selection,
  sparse approximation and dictionary selection.
\newblock {\em arXiv preprint arXiv:1102.3975}, 2011.

\bibitem{daubechies2010iteratively}
Ingrid Daubechies, Ronald DeVore, Massimo Fornasier, and C~Sinan
  G{\"u}nt{\"u}rk.
\newblock Iteratively reweighted least squares minimization for sparse
  recovery.
\newblock {\em Communications on Pure and Applied Mathematics}, 63(1):1--38,
  2010.

\bibitem{dughmi2009submodular}
Shaddin Dughmi.
\newblock Submodular functions: Extensions, distributions, and algorithms. a
  survey.
\newblock {\em arXiv preprint arXiv:0912.0322}, 2009.

\bibitem{halabi2015totally}
M.~El~Halabi and V.~Cevher.
\newblock A totally unimodular view of structured sparsity.
\newblock {\em Proceedings of the Eighteenth International Conference on
  Artificial Intelligence and Statistics, pp. 223–231}, 2015.

\bibitem{elenberg2016restricted}
Ethan~R Elenberg, Rajiv Khanna, Alexandros~G Dimakis, and Sahand Negahban.
\newblock Restricted strong convexity implies weak submodularity.
\newblock {\em arXiv preprint arXiv:1612.00804}, 2016.

\bibitem{fan2001variable}
Jianqing Fan and Runze Li.
\newblock Variable selection via nonconcave penalized likelihood and its oracle
  properties.
\newblock {\em Journal of the American statistical Association},
  96(456):1348--1360, 2001.

\bibitem{figueiredo2007majorization}
M{\'a}rio~AT Figueiredo, Jos{\'e}~M Bioucas-Dias, and Robert~D Nowak.
\newblock Majorization--minimization algorithms for wavelet-based image
  restoration.
\newblock {\em IEEE Transactions on Image processing}, 16(12):2980--2991, 2007.

\bibitem{frank1993statistical}
LLdiko~E Frank and Jerome~H Friedman.
\newblock A statistical view of some chemometrics regression tools.
\newblock {\em Technometrics}, 35(2):109--135, 1993.

\bibitem{fujishige2005submodular}
S.~Fujishige.
\newblock {\em Submodular Functions and Optimization}.
\newblock Elsevier, 2005.

\bibitem{gasso2009recovering}
Gilles Gasso, Alain Rakotomamonjy, and St{\'e}phane Canu.
\newblock Recovering sparse signals with a certain family of nonconvex
  penalties and dc programming.
\newblock {\em IEEE Transactions on Signal Processing}, 57(12):4686--4698,
  2009.

\bibitem{hassani2017gradient}
Hamed Hassani, Mahdi Soltanolkotabi, and Amin Karbasi.
\newblock Gradient methods for submodular maximization.
\newblock In {\em Advances in Neural Information Processing Systems}, pages
  5843--5853, 2017.

\bibitem{hegde2009compressive}
Chinmay Hegde, Marco~F Duarte, and Volkan Cevher.
\newblock Compressive sensing recovery of spike trains using a structured
  sparsity model.
\newblock In {\em SPARS'09-Signal Processing with Adaptive Sparse Structured
  Representations}, 2009.

\bibitem{hegde2015nearly}
Chinmay Hegde, Piotr Indyk, and Ludwig Schmidt.
\newblock A nearly-linear time framework for graph-structured sparsity.
\newblock In {\em Proceedings of the 32nd International Conference on Machine
  Learning (ICML-15)}, pages 928--937, 2015.

\bibitem{huang2011learning}
J.~Huang, T.~Zhang, and D.~Metaxas.
\newblock Learning with structured sparsity.
\newblock {\em The Journal of Machine Learning Research}, 12:3371--3412, 2011.

\bibitem{jacob2009group}
Laurent Jacob, Guillaume Obozinski, and Jean-Philippe Vert.
\newblock Group lasso with overlap and graph lasso.
\newblock In {\em Proceedings of the 26th annual international conference on
  machine learning}, pages 433--440. ACM, 2009.

\bibitem{jenatton2011proximal}
R.~Jenatton, J.~Mairal, G.~Obozinski, and F.~Bach.
\newblock Proximal methods for hierarchical sparse coding.
\newblock {\em Journal of Machine Learning Reasearch}, 12:2297--2334, 2011.

\bibitem{jenatton2011structured}
Rodolphe Jenatton, Jean-Yves Audibert, and Francis Bach.
\newblock Structured variable selection with sparsity-inducing norms.
\newblock {\em Journal of Machine Learning Research}, 12(Oct):2777--2824, 2011.

\bibitem{jenatton2010structured}
Rodolphe Jenatton, Guillaume Obozinski, and Francis Bach.
\newblock Structured sparse principal component analysis.
\newblock In {\em Proceedings of the Thirteenth International Conference on
  Artificial Intelligence and Statistics}, pages 366--373, 2010.

\bibitem{jojic2011convex}
Vladimir Jojic, Suchi Saria, and Daphne Koller.
\newblock Convex envelopes of complexity controlling penalties: the case
  against premature envelopment.
\newblock In {\em International Conference on Artificial Intelligence and
  Statistics}, pages 399--406, 2011.

\bibitem{knight2000asymptotics}
Keith Knight and Wenjiang Fu.
\newblock Asymptotics for lasso-type estimators.
\newblock {\em Annals of statistics}, pages 1356--1378, 2000.

\bibitem{krause2012near}
Andreas Krause and Carlos~E Guestrin.
\newblock Near-optimal nonmyopic value of information in graphical models.
\newblock {\em arXiv preprint arXiv:1207.1394}, 2012.

\bibitem{lovasz1983submodular}
L.~Lov{\'a}sz.
\newblock Submodular functions and convexity.
\newblock In {\em Mathematical Programming The State of the Art}, pages
  235--257. Springer, 1983.

\bibitem{micchelli2013regularizers}
Charles~A Micchelli, Jean~M Morales, and Massimiliano Pontil.
\newblock Regularizers for structured sparsity.
\newblock {\em Advances in Computational Mathematics}, pages 1--35, 2013.

\bibitem{negahban2011unified}
Sahand Negahban, Pradeep Ravikumar, Martin~J Wainwright, and Bin Yu.
\newblock A unified framework for high-dimensional analysis of m-estimators
  with decomposable regularizers.
\newblock In {\em Adv. Neural Inf. Proc. Sys.(NIPS)}, 2011.

\bibitem{obozinski2012convex}
G.~Obozinski and F.~Bach.
\newblock Convex relaxation for combinatorial penalties.
\newblock {\em arXiv preprint arXiv:1205.1240}, 2012.

\bibitem{obozinski2011group}
Guillaume Obozinski, Laurent Jacob, and Jean-Philippe Vert.
\newblock Group lasso with overlaps: the latent group lasso approach.
\newblock {\em arXiv preprint arXiv:1110.0413}, 2011.

\bibitem{owen2007robust}
Art~B Owen.
\newblock A robust hybrid of lasso and ridge regression.
\newblock {\em Contemporary Mathematics}, 443:59--72, 2007.

\bibitem{sion1958general}
Maurice Sion et~al.
\newblock On general minimax theorems.
\newblock {\em Pacific J. Math}, 8(1):171--176, 1958.

\bibitem{Vondrak2010}
Jan Vondr\'ak.
\newblock Continuous extensions of submodular functions.
\newblock CS 369P: Polyhedral techniques in combinatorial optimization,
  \url{{http://theory.stanford.edu/~jvondrak/CS369P-files/lec17.pdf}}, November
  2010.

\bibitem{yuan2006model}
Ming Yuan and Yi~Lin.
\newblock Model selection and estimation in regression with grouped variables.
\newblock {\em Journal of the Royal Statistical Society: Series B (Statistical
  Methodology)}, 68(1):49--67, 2006.

\bibitem{zhao2006grouped}
Peng Zhao, Guilherme Rocha, and Bin Yu.
\newblock Grouped and hierarchical model selection through composite absolute
  penalties.
\newblock {\em Department of Statistics, UC Berkeley, Tech. Rep}, 703, 2006.

\bibitem{zou2006adaptive}
Hui Zou.
\newblock The adaptive lasso and its oracle properties.
\newblock {\em Journal of the American statistical association},
  101(476):1418--1429, 2006.

\bibitem{zou2008one}
Hui Zou and Runze Li.
\newblock One-step sparse estimates in nonconcave penalized likelihood models.
\newblock {\em Annals of statistics}, 36(4):1509, 2008.

\end{thebibliography}
}

\newpage 

\onecolumn
\section{Appendix}

\subsection{Variational forms of convex envelopes ({Proof of lemma \ref{lem:NonHomEnv} and Remark \ref{rmk:SubNonHomoEnv})} }

In this section, we recall the different variational forms of the homogeneous convex envelope derived in \cite{obozinski2012convex} and derive similar variational forms for the non-homogeneous convex envelope, which includes the ones stated in lemma \ref{lem:NonHomEnv}). These variational forms will be needed in some of our proofs below.

\begin{lemma}
The homogeneous convex envelope $\Omega_p$ of $F_p$ admits the following variational forms.
\begin{align}
\Omega_\infty(w) &=  \min_{\alpha} \{ \sum_{S \subseteq V} \alpha_S F(S) :  \sum_{S \subseteq V} \alpha_S \1_S \geq |w|, \alpha_S \geq 0\}.
\label{eq:ConvCoverA}\\
\Omega_p(w) &=  \min_{v} \{ \sum_{S \subseteq V} F(S)^{1/q} \| v^S \|_p :  \sum_{S \subseteq V} v^S = |w|, \supp(v^S) \subseteq S\}. \label{eq:LpLGLgen}\\
&= \max_{\kappa \in \R^d_+}  \sum_{i = 1}^d \kappa_i^{1/q} |w_i| \text{ s.t. } \kappa(A) \leq F(A), \forall A \subseteq V.  \label{eq:SupportForm}\\
&=\inf_{\eta \in \R^d_+}  \frac{1}{p} \sum_{j=1}^d \frac{|w_j|^{p}}{\eta_j^{p -1}} + \frac{1}{q} \Omega_\infty(\eta). \label{eq:VarLpLinfA}  
\end{align}
\end{lemma}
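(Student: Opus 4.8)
The plan is to deduce all four identities from a single computation: the Fenchel conjugate of the positively homogeneous envelope $g(w):=F(\supp(w))^{1/q}\|w\|_p$ of $F_p$, noting that by definition $\Omega_p=g^{\ast\ast}$. Since $g$ is positively homogeneous of degree one ($\supp(tw)=\supp(w)$ for $t\neq0$), its conjugate is the indicator of the set $C=\{s\in\R^d:\langle s,w\rangle\le g(w)\ \forall w\in\R^d\}$, i.e.\ $g^{\ast}=\iota_C$. Decomposing the defining inequality according to $\supp(w)$, and, for the nontrivial direction, testing against vectors whose support is exactly a given $S$ (a short limiting argument lets the support be all of $S$), one checks that $s\in C$ iff $\|s_S\|_q\le F(S)^{1/q}$ for every $S\subseteq V$; with $\kappa_i:=|s_i|^q$ this reads $\kappa(A)\le F(A)$ for all $A\subseteq V$. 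Because $\D$ covers $V$, the singleton constraints $\kappa_i\le F(\{i\})<\infty$ bound $\kappa$, so $C$ is compact.

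Identity \eqref{eq:SupportForm} is then immediate: $\Omega_p(w)=g^{\ast\ast}(w)=\sup_{s\in C}\langle s,w\rangle$, and since $C$ is invariant under sign flips of coordinates the supremum is attained at an $s$ aligned with $w$, so $\Omega_p(w)=\max_{s\in C}\sum_i|s_i||w_i|$; substituting $\kappa_i=|s_i|^q$ gives \eqref{eq:SupportForm}, the maximum being attained by compactness of $C$.

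For \eqref{eq:LpLGLgen}, I would let $h$ be its right-hand side and note that $h$ is the infimal convolution over $S\in\D$ of $\phi_S(u):=F(S)^{1/q}\|u\|_p+\iota_{\{u:\,\supp(u)\subseteq S\}}(u)$ (sets $S\notin\D$ contribute nothing). Since $\D$ covers $V$, $\dom h=\sum_{S\in\D}\{u:\supp(u)\subseteq S\}=\R^d$, so $h$ is a finite convex function on $\R^d$, hence continuous (in particular closed); and the infimum is attained because each coefficient $F(S)^{1/q}>0$ makes the objective coercive on the closed constraint set. The conjugate of an infimal convolution is the sum of conjugates, and $\phi_S^{\ast}(s)=\iota_{\{\|s_S\|_q\le F(S)^{1/q}\}}(s)$ — it depends only on $s_S$, because the supremum defining it ranges over $u$ supported on $S$ — so $h^{\ast}=\sum_{S\in\D}\phi_S^{\ast}=\iota_C=g^{\ast}$, whence $h=h^{\ast\ast}=g^{\ast\ast}=\Omega_p$, which is \eqref{eq:LpLGLgen}. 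Identity \eqref{eq:ConvCoverA} follows from \eqref{eq:LpLGLgen} at $p=\infty$ via the substitution $\alpha_S=\|v^S\|_\infty$: feasibility of $(v^S)$ gives $|w|=|\sum_S v^S|\le\sum_S\|v^S\|_\infty\1_S$, and conversely any feasible $\alpha$ can be split into vectors $v^S$ supported on $S$ with $\|v^S\|_\infty\le\alpha_S$ and $\sum_S v^S=|w|$; this also shows the infimum in \eqref{eq:ConvCover} is a minimum. Finally \eqref{eq:VarLpLinfA} is exactly \eqref{eq:VarLpLinf} of Lemma \ref{lem:HomEnv}, but for internal consistency it can be re-derived from \eqref{eq:SupportForm} by writing $\Omega_\infty(\eta)$ via \eqref{eq:SupportForm}, exchanging $\inf_{\eta\ge0}$ with the $\kappa$-maximization (Sion's minimax applies since $C$ is compact convex and the objective is convex in $\eta$, linear in $\kappa$), and then minimizing coordinatewise with Young's inequality $\tfrac1p|w_j|^p\eta_j^{1-p}+\tfrac1q\kappa_j\eta_j\ge\kappa_j^{1/q}|w_j|$, with equality at $\eta_j=|w_j|\kappa_j^{-1/p}$.

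I expect the only real obstacle to be bookkeeping rather than ideas: cleanly proving the equivalence $s\in C\iff\|s_S\|_q\le F(S)^{1/q}\ \forall S$ (its forward direction needs the limiting argument with full-support test vectors), and justifying closedness and attainment of the infimal convolution together with the legitimacy of the minimax exchange. All of these rest on the standing hypotheses that $\D$ covers $V$ and $F>0$ on nonempty sets — precisely what makes $C$ compact and the coefficients $F(S)^{1/q}$ strictly positive.
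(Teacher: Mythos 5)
Your proposal is correct, but there is nothing in the paper to compare it against line by line: the paper does not prove this lemma at all, it simply recalls these identities from \cite{obozinski2012convex}. Your derivation is therefore a genuine (and sound) self-contained proof. It is also structurally the natural one, and it closely parallels what the paper \emph{does} prove for the non-homogeneous case in the appendix: there, $\Theta_p$ is obtained by computing $F_{\mu\lambda}^{\ast}$ and $F_{\mu\lambda}^{\ast\ast}$ directly, the support-function form \eqref{eq:SupportFormNonHomo} comes from Lagrangian duality over the polytope $\{\kappa : \kappa(S)\le F(S)-\rho\}$, and the variational form follows by Sion's minimax plus the same coordinatewise Young-inequality computation you use. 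Your homogeneous version is cleaner because positive homogeneity of $g$ makes $g^{\ast}=\iota_C$ with $C$ the polar-type set $\{s:\|s_S\|_q\le F(S)^{1/q}\ \forall S\}$, from which \eqref{eq:SupportForm} is immediate and \eqref{eq:LpLGLgen} follows by recognizing the right-hand side as the inf-convolution whose conjugate is $\sum_S\phi_S^{\ast}=\iota_C$; the reduction of \eqref{eq:ConvCoverA} to \eqref{eq:LpLGLgen} at $p=\infty$ via $\alpha_S=\|v^S\|_\infty$ and the greedy splitting of a feasible $\alpha$ into supported vectors is also fine. One small imprecision: to get compactness of $C$ you invoke the singleton constraints $\kappa_i\le F(\{i\})<\infty$, but the paper's standing assumption is only that $\mathcal{D}$ covers $V$, so for non-monotone $F$ a singleton may have $F(\{i\})=+\infty$; boundedness of $\kappa_i$ should instead be read off from $\kappa_i\le\kappa(A)\le F(A)<\infty$ for some $A\in\mathcal{D}$ containing $i$ (using $\kappa\ge0$). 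This is a one-line fix and does not affect the argument.
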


The non-homogeneous convex envelope of a set function $F$, over the unit $\ell_\infty$-ball was derived in \cite{halabi2015totally}, where it was shown that $\Theta_\infty(w) = \inf_{\eta \in [0,1]^d} \{ f(\eta) : \eta \geq |w|\}$ where $f$ is any proper, l.s.c. convex \emph{extension} of $F$ (c.f., Lemma 1 \cite{halabi2015totally}). A natural choice for $f$ is the \emph{convex closure} of $F$, which corresponds to the \emph{tightest} convex extension of $F$ on $[0,1]^d$. We recall the two equivalent definitions of convex closure, which we have adjusted to allow for infinite values.


\begin{definition}[Convex Closure; c.f., {\cite[Def. 3.1]{dughmi2009submodular}}]\label{def:convClosure}
Given a set function $F : 2^V \to \overline{\R}$, the convex closure $f^{-}: [0,1]^d \to \overline{\R}$ is the point-wise largest convex function from $[0,1]^d$ to $\overline{\R}$ that always lowerbounds $F$.
\end{definition}

\begin{definition}[Equivalent definition of Convex Closure; c.f., {\cite[Def. 1]{Vondrak2010}} and {\cite[Def. 3.2]{dughmi2009submodular}}] \label{def:convClosureExp}
Given any set function $f: \{0,1\}^n \rightarrow \overline{\R}$, the convex closure of $f$ can equivalently be defined $\forall w \in [0,1]^n$  as:
\[f^-(w) = \inf \{ \sum_{S \subseteq V} \alpha_S F(S) : w= \sum_{S \subseteq V} \alpha_S \1_S, \sum_{S \subseteq V} \alpha_S =1, \alpha_S \geq 0\}\]
\end{definition}
It is interesting to note that $f^-(w) = f_L(w)$ where $f_L$ is Lov\'asz extension iff $F$ is a submodular function \cite{Vondrak2010}. 

The following lemma  derive variational forms of $\Theta_p$ for any $p \geq 1$ that parallel the ones known for the homogeneous envelope.

\begin{lemma}\label{lem:NonHomVarForms}
The non-homogeneous convex envelope $\Theta_p$ of $F_p$ admits the following variational forms.
\begin{align}
 \Theta_\infty(w)  &= \inf \{ \sum_{S \subseteq V} \alpha_S F(S) :  \sum_{S \subseteq V} \alpha_S \1_S \geq |w|, \sum_{S \subseteq V} \alpha_S =1, \alpha_S \geq 0\}.
 \label{eq:ConvCoverNonHomoA}\\ 
\Theta_p(w) 
&= \max_{\kappa \in \R^d}  \sum_{j=1}^d \psi_j(\kappa_j,w_j)  +  \min_{S \subseteq V} F(S) - \kappa(S), ~\forall w \in \dom(\Theta_p(w)).   \label{eq:SupportFormNonHomo}\\
&=\inf_{\eta \in [0,1]^d}  \frac{1}{p} \sum_{j=1}^d \frac{|w_j|^{p}}{\eta_j^{p -1}} + \frac{1}{q} f^-(\eta),
\label{eq:VarLpLinfNonHomoA}  
\end{align}
where $\dom(\Theta_p) = \{ w | \exists \eta \in [0,1]^d \text{ s.t } \supp(w) \subseteq \supp(\eta), \eta \in \dom(f^-)\}$, and
where we define
\begin{align*}
\psi_j(\kappa_j,w_j)  &:=\begin{cases} \kappa_j^{1/q}|w_j| &\text{ if $|w_j| \leq \kappa_j^{1/p}, \kappa_j \geq 0$}\\
\frac{1}{p}  |w_j|^{p}+ \frac{1}{q}   \kappa_j  &\text{otherwise.}
\end{cases} 
\end{align*} 
 If $F$ is monotone, $\Theta_\infty = f^-$, then we can replace $f^-$ by $\Theta_\infty$ in \eqref{eq:VarLpLinfNonHomoA} and we can restrict $\kappa \in \R_+^d$ in \eqref{eq:SupportFormNonHomo}.
\end{lemma}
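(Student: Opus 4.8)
The plan is to derive all three identities from $\Theta_p=F_p^{\ast\ast}$ together with the representation $\Theta_\infty(w)=\inf\{f(\eta):\eta\in[0,1]^d,\ \eta\ge|w|\}$ of \cite[Lemma~1]{halabi2015totally}, valid for any proper l.s.c.\ convex extension $f$ of $F$. First I would note that the convex closure $f^-$ of Definitions~\ref{def:convClosure}--\ref{def:convClosureExp} is such an extension: every $\1_S$ is a vertex of $[0,1]^d$, so $\sum_T\alpha_T\1_T=\1_S$, $\sum_T\alpha_T=1$, $\alpha\ge0$ forces $\alpha_S=1$, hence $f^-(\1_S)=F(S)$. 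Substituting Definition~\ref{def:convClosureExp} into the above representation with $f=f^-$ and merging the two infima gives \eqref{eq:ConvCoverNonHomoA} directly, the constraint $\sum_S\alpha_S=1$ being inherited from the closure and making $\sum_S\alpha_S\1_S\in[0,1]^d$ automatic.

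For \eqref{eq:VarLpLinfNonHomoA}, let $R$ denote its right-hand side and compute $R^\ast$. With the convention $\tfrac a0=+\infty$ for $a\ne0$ and $0$ otherwise, a one-coordinate computation (the perspective of scalar $\ell_p/\ell_q$ conjugacy) gives $\sup_{w_j}\big[s_jw_j-\tfrac1p|w_j|^p\eta_j^{1-p}\big]=\tfrac1q\eta_j|s_j|^q$, so $R^\ast(s)=\sup_{\eta\in[0,1]^d}\big[\tfrac1q\langle|s|^q,\eta\rangle-\tfrac1q f^-(\eta)\big]=\tfrac1q(f^-)^\ast(|s|^q)$, with $|s|^q$ coordinatewise. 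Since maximizing a linear functional over $\{\alpha\ge0,\ \sum_S\alpha_S=1\}$ is attained at a vertex, Definition~\ref{def:convClosureExp} yields $(f^-)^\ast(\mu)=\max_{S\subseteq V}[\mu(S)-F(S)]$ (finite because $\varnothing\in\D$, with $\mu(S)-F(S)=-\infty$ when $F(S)=+\infty$), hence $R^\ast(s)=\tfrac1q\max_S\big[\|s_S\|_q^q-F(S)\big]=F_p^\ast(s)$, the last step being the standard computation of $F_p^\ast$ obtained by grouping $w$ according to its support. It then suffices to check $R$ is proper, convex and l.s.c.: joint convexity of the perspective $(w_j,\eta_j)\mapsto|w_j|^p\eta_j^{1-p}$ plus convexity of $f^-$, followed by partial minimization, give convexity, and lower semicontinuity holds because the minimization defining $R$ is over the \emph{compact} set $[0,1]^d$ of a jointly l.s.c.\ integrand; therefore $R=R^{\ast\ast}=F_p^{\ast\ast}=\Theta_p$. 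The domain formula is read off the same expression: $\Theta_p(w)<\infty$ iff some $\eta\in[0,1]^d$ has $\eta_j>0$ on $\supp(w)$ and $\eta\in\dom f^-$; letting $p\to\infty$ collapses the perspective term to $\iota_{\{\eta\ge|w|\}}$ and recovers the $\ell_\infty$ case.

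For the support form \eqref{eq:SupportFormNonHomo}, the plan is to dualize only the $f^-$ term in \eqref{eq:VarLpLinfNonHomoA}: write $f^-(\eta)=\sup_\kappa\big[\langle\kappa,\eta\rangle-(f^-)^\ast(\kappa)\big]$ with $(f^-)^\ast(\kappa)=\max_S[\kappa(S)-F(S)]$, substitute, and exchange $\inf_{\eta\in[0,1]^d}$ with $\sup_\kappa$. The exchange is legitimate by Sion's minimax theorem, the objective being jointly convex in $\eta$ over the compact convex set $[0,1]^d$ and concave (affine minus a pointwise maximum of affines) in $\kappa$. After the exchange the inner $\inf_\eta$ separates, and $\inf_{\eta_j\in[0,1]}\big[\tfrac1p|w_j|^p\eta_j^{1-p}+\tfrac1q\kappa_j\eta_j\big]=\psi_j(\kappa_j,w_j)$: for $\kappa_j\ge0$ the unconstrained minimizer is $\eta_j^\ast=|w_j|\kappa_j^{-1/p}$, giving the first branch $\kappa_j^{1/q}|w_j|$ when $\eta_j^\ast\le1$ and the boundary value $\tfrac1p|w_j|^p+\tfrac1q\kappa_j$ at $\eta_j=1$ otherwise (and at $\eta_j=1$ always when $\kappa_j<0$), the interior equality being Young's inequality made tight. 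This produces $\Theta_p(w)=\sup_\kappa\big[\sum_j\psi_j(\kappa_j,w_j)+\tfrac1q\min_S(F(S)-\kappa(S))\big]$ for $w\in\dom\Theta_p$, i.e.\ \eqref{eq:SupportFormNonHomo} with the $\tfrac1q$ carried over from the coefficient of $f^-$ in \eqref{eq:VarLpLinfNonHomoA}. For the monotone refinements: if $F$ is monotone, any affine minorant of $F$ on the vertices can be replaced by one with nonnegative slopes (lowering a negative slope $a_j$ to $0$ preserves every inequality $a^\top\1_S+b\le F(S)$, using $F(S\setminus\{j\})\le F(S)$), so $f^-$ is monotone, $\inf\{f^-(\eta):\eta\ge|w|\}=f^-(|w|)$, and $\Theta_\infty=f^-$ on $[-1,1]^d$; the same reduction shows the dual optimum is attained with $\kappa\ge0$, and monotonicity of $\eta_j\mapsto|w_j|^p\eta_j^{1-p}$ lets one replace $f^-$ by $\Theta_\infty=\inf_{\zeta\ge\eta}f^-(\zeta)$ in \eqref{eq:VarLpLinfNonHomoA} without changing the infimum.

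The step I expect to be the main obstacle is the minimax exchange for \eqref{eq:SupportFormNonHomo}: one must make the Sion argument rigorous in the presence of $+\infty$ values of $F$ — in particular $\dom f^-$ may be lower dimensional, and the $\eta$-slice on which the objective is finite must be nonempty, which is exactly what $w\in\dom\Theta_p$ guarantees — and then extract $\psi_j$ in closed form with the right constants. A pervasive secondary nuisance is the bookkeeping around the convention $\tfrac a0=+\infty$ or $0$ and the boundary $\eta_j=0$, which has to be respected in every conjugate computation and in the joint lower semicontinuity of the perspective term; and pinning down that $\Theta_\infty=f^-$ holds exactly for monotone $F$ (only $\Theta_\infty\le f^-$ in general), which is what separates the general variational forms from their monotone specializations.
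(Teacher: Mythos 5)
Your proposal is correct and follows essentially the same route as the paper: \eqref{eq:ConvCoverNonHomoA} by substituting the convex closure of Definition~\ref{def:convClosureExp} into the representation from Lemma~1 of \cite{halabi2015totally} (the paper does this via Corollary~\ref{corr:varformConvEnvelope}), \eqref{eq:VarLpLinfNonHomoA} via the scalar $\ell_p/\ell_q$ conjugacy, the identity $(f^-)^\ast(\mu)=\max_S[\mu(S)-F(S)]$ and a Sion exchange (you verify the candidate by biconjugation where the paper computes $F_p^{\ast\ast}$ by two direct conjugations, but the underlying computations are identical, and you correctly supply the extra properness/convexity/l.s.c.\ check that the backward direction requires), and \eqref{eq:SupportFormNonHomo} by dualizing the $f^-$ term, exchanging $\inf_\eta$ and $\sup_\kappa$ by Sion, and minimizing coordinatewise in $\eta_j\in[0,1]$ to produce $\psi_j$. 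One remark: the factor $\tfrac1q$ that you carry onto the $\min_{S}\bigl(F(S)-\kappa(S)\bigr)$ term is indeed what the stated definition of $\psi_j$ (with $\tfrac1q\kappa_j\eta_j$ inside) forces; the lemma as displayed, and the paper's own derivation (whose third display silently drops the $\tfrac1q$ from $\rho$), omit it, so your bookkeeping is the internally consistent one.
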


To prove the variational form \eqref{eq:ConvCoverNonHomoA} in Lemma \ref{lem:NonHomVarForms}, we need to show first the following property of $f^-$.

\begin{proposition}[c.f., {\cite[Prop. 3.23]{dughmi2009submodular}} ] \label{prop:minClosure}
The minimum values of a proper set function $F$ and its convex closure $f^{-}$ are equal, i.e., 
\[ \min_{w \in [0,1]^d} f^{-}(w) = \min_{S \subseteq V} F(S)\]
If $S$ is a minimizer of $f(S)$, then $\1_S$ is a minimizer of $f^{-}$. Moreover, if $w$ is a minimizer of $f^{-}$, then every set in the support of $\alpha$, where $f^{-}(w) = \sum_{S \subseteq V} \alpha_S F(S)$, is a minimizer of $F$.
\end{proposition}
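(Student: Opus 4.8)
The plan is to argue directly from the expansion form of the convex closure in Definition~\ref{def:convClosureExp},
\[
f^-(w) = \inf\Big\{ \sum_{S \subseteq V} \alpha_S F(S) : \sum_{S}\alpha_S \1_S = w,\ \sum_{S}\alpha_S = 1,\ \alpha_S \ge 0 \Big\},
\]
treating the point-wise-largest-lower-bound characterization (Definition~\ref{def:convClosure}) as equivalent per the cited references; note in particular that taking $\alpha$ concentrated on a single set $A$ already shows $f^-(\1_A)\le F(A)$ for all $A\subseteq V$.

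The lower bound on the minimum is a one-line estimate: for any $w\in[0,1]^d$ and any feasible $\alpha$, since $\alpha_S\ge 0$, $\sum_S\alpha_S=1$, and $F(S)\ge\min_{T\subseteq V}F(T)$, we have $\sum_S\alpha_S F(S)\ge\min_T F(T)$; passing to the infimum gives $f^-(w)\ge\min_T F(T)$, hence $\min_{w\in[0,1]^d}f^-(w)\ge\min_S F(S)$. For the matching upper bound I would pick $S^\star\in\argmin_S F(S)$ --- well defined and finite because $F$ is proper on the finite ground set $V$ --- and use $w=\1_{S^\star}$ with $\alpha_{S^\star}=1$ to get $f^-(\1_{S^\star})\le F(S^\star)=\min_S F(S)$. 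Combining the two shows $\min_{w}f^-(w)=\min_S F(S)$ and that $\1_{S^\star}$ attains it, which gives the equality of minimum values together with the first claim.

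For the statement about supports of optimal weight vectors, I would first observe that at any $w$ with $f^-(w)<\infty$ the infimum defining $f^-(w)$ is attained: restricting to $\alpha$ supported on $\D=\{S:F(S)<\infty\}$ (any other $\alpha$ gives objective $+\infty$, while $f^-(w)$ is finite, e.g.\ at a minimizer), the feasible set is a nonempty compact polytope in finitely many variables and the objective $\alpha\mapsto\sum_S\alpha_S F(S)$ is affine. Now let $w$ minimize $f^-$ and let $\alpha$ be an optimal weight, so $\sum_S\alpha_S F(S)=f^-(w)=\min_T F(T)$. Rewriting this as $\sum_S\alpha_S\big(F(S)-\min_T F(T)\big)=0$, a sum of nonnegative terms, forces each term to vanish; hence $\alpha_S>0$ implies $F(S)=\min_T F(T)$, i.e.\ every set in $\supp(\alpha)$ is a minimizer of $F$.

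The only delicate points are the handling of $+\infty$ values --- making sure Definition~\ref{def:convClosureExp} is the intended definition in that case, and that the attainment/compactness argument is run on the face supported by $\D$ --- while the inequalities themselves are elementary convexity computations. I expect the attainment claim (needed only so that an optimal $\alpha$ exists in the last part) to be the single step requiring a word of care; everything else is direct.
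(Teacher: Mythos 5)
Your proposal is correct and follows essentially the same route as the paper's proof: the upper bound via $f^-(\1_{S^\star})\le F(S^\star)$ from $\{0,1\}^d\subseteq[0,1]^d$, the lower bound via $\sum_S\alpha_S F(S)\ge\sum_S\alpha_S F(S^\star)=F(S^\star)$, and the support claim by forcing each nonnegative term $\alpha_S(F(S)-\min_T F(T))$ to vanish. Your explicit compactness argument for attainment of the optimal $\alpha$ is a welcome bit of extra care that the paper leaves implicit when it writes $f^-(w)=\sum_S\alpha_S F(S)$.
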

\begin{proof}
First note that, $\{0,1\}^d \subseteq [0,1]^d$ implies that $f^{-}(w^*) \leq F(S^*)$. On the other hand, $f^-(w^*) = \sum_{S \subseteq V} \alpha^*_S F(S) \geq \sum_{S \subseteq V} \alpha^*_S F(S^*) =  F(S^*)$. The rest of the proposition follows directly.
\end{proof}

Given the choice of the extension $f = f^-$, the variational form \eqref{eq:ConvCoverNonHomoA} of $\Theta_\infty$ given in lemma \ref{lem:NonHomVarForms} follows directly from definition \ref{def:convClosureExp} and proposition \ref{prop:minClosure}, as shown in the following corollary.

\begin{corollary}\label{corr:varformConvEnvelope}
Given any set function $F:2^V \to \overline{\R}_+$ and its corresponding convex closure $f^-$, the convex envelope of $F(\supp(w))$ over the unit $\ell_\infty$-ball is given by 
\begin{align*}
\Theta_\infty(w) &= \inf_\alpha \{ \sum_{S \subseteq V} \alpha_S F(S) :  \sum_{S \subseteq V} \alpha_S \1_S \geq |w|, \sum_{S \subseteq V} \alpha_S =1, \alpha_S \geq 0\}.
\\
&=  \inf_{v} \{ \sum_{S \subseteq V} F(S) \| v^S \|_\infty  :  \sum_{S \subseteq V} v^S = |w|,  \sum_{S \subseteq V} \| v^S \|_\infty =1, \supp(v^S) \subseteq S\}.
 \end{align*}
\end{corollary}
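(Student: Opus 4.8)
The plan is to read off both identities from the variational characterization of $\Theta_\infty$ established in \cite{halabi2015totally}, namely $\Theta_\infty(w)=\inf_{\eta\in[0,1]^d}\{f(\eta):\eta\ge|w|\}$ for any proper l.s.c.\ convex extension $f$ of $F$, specialized to the choice $f=f^-$. The first thing to verify is that $f^-$ is indeed such an extension: it is convex and l.s.c.\ by Definition~\ref{def:convClosure}, it is proper because $\D\ne\varnothing$ and $f^-\ge\min_S F(S)$, and it agrees with $F$ at the vertices, $f^-(\1_A)=F(A)$. The last point rests on the mechanism that recurs throughout the proof: in Definition~\ref{def:convClosureExp} the constraints $\sum_S\alpha_S\1_S=\1_A$ and $\sum_S\alpha_S=1$ with $\alpha\ge 0$ force $\alpha$ to be the point mass at $A$ — inspecting a coordinate outside $A$ forces every $S$ in $\supp(\alpha)$ to satisfy $S\subseteq A$, and inspecting a coordinate of $A$, together with $\sum_S\alpha_S=1$, forces $S\supseteq A$.

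For the first equality I would substitute Definition~\ref{def:convClosureExp} for $f^-(\eta)$ into the above formula, merge the two nested infima into a single infimum over pairs $(\eta,\alpha)$, and eliminate $\eta=\sum_S\alpha_S\1_S$. What remains carries the extra constraint $\sum_S\alpha_S\1_S\in[0,1]^d$, but this is redundant: when $\alpha\ge 0$ and $\sum_S\alpha_S=1$ one automatically has $0\le\sum_{S\ni i}\alpha_S\le\sum_S\alpha_S=1$ for every $i$. Dropping it yields the first displayed line, and ``$\inf$'' may be upgraded to ``$\min$'' on $\dom(\Theta_\infty)$ by a routine attainment argument.

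For the second equality I would use the change of variables $\alpha_S=\|v^S\|_\infty$, as in the derivation of the homogeneous forms \eqref{eq:ConvCoverA} and \eqref{eq:LpLGLgen} (taken at $p=\infty$) in \cite{obozinski2012convex}. Given $\alpha$ feasible for the first form, spread $|w_i|$ over the sets $S\ni i$ so that $0\le v^S_i\le\alpha_S$ and $\sum_{S\ni i}v^S_i=|w_i|$ (possible since $\sum_{S\ni i}\alpha_S\ge|w_i|$); this gives $v$ with $\supp(v^S)\subseteq S$, $\sum_S v^S=|w|$ and $\|v^S\|_\infty\le\alpha_S$, hence $\sum_S F(S)\|v^S\|_\infty\le\sum_S\alpha_S F(S)$ and $\sum_S\|v^S\|_\infty\le 1$. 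Conversely, given $v$, put $\alpha_S=\|v^S\|_\infty$ and note $\sum_{S\ni i}\alpha_S=\sum_{S\ni i}\|v^S\|_\infty\ge\big|\sum_{S\ni i}v^S_i\big|=|w_i|$, so $\alpha$ is feasible with the same objective.

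The step I expect to be the main obstacle is matching the two normalizations exactly. The $\alpha$-form may legitimately place weight on $S=\varnothing$ (where $F(\varnothing)=0$ and $\1_\varnothing=\0$) in order to meet $\sum_S\alpha_S=1$ while leaving slack in $\sum_S\alpha_S\1_S\ge|w|$, whereas $\supp(v^\varnothing)\subseteq\varnothing$ forces $v^\varnothing=\0$; so the naive correspondence only produces $\sum_S\|v^S\|_\infty\le 1$. Closing this gap amounts to showing that a cover achieving the infimum can always be reshuffled to have total group norm exactly $1$ without increasing the objective — using, for instance, that when $F$ is monotone the singletons lie in $\D$ (since $\D$ covers $V$) and can absorb the leftover budget at a cost no larger than the contribution they replace. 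With that in place, reading the ``$\supp(v^S)\subseteq S$'' form back through $\alpha_S=\|v^S\|_\infty$ gives the second displayed identity.
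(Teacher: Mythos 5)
Your overall route for the first identity is the same as the paper's: invoke the representation $\Theta_\infty(w)=\inf\{f(\eta):\eta\ge|w|,\ \eta\in[0,1]^d\}$ from \cite{halabi2015totally} with $f=f^-$, unfold Definition~\ref{def:convClosureExp}, merge the two infima, and drop the redundant box constraint. That part is fine. The genuine gap is in the justification of the very first step: the representation above is \emph{not} valid for every proper l.s.c.\ convex extension of $F$. Lemma~1 of \cite{halabi2015totally} has a second hypothesis, namely $\max_{S\subseteq V} m(S)-F(S)=\max_{\eta\in[0,1]^d} m^\top\eta-f(\eta)$ for all $m\in\R_+^d$, and verifying it for $f^-$ is the only substantive computation in the paper's proof (one writes $m^\top\eta^*-f^-(\eta^*)=\sum_S\alpha^*_S\bigl(m^\top\1_S-F(S)\bigr)\le\max_S m(S)-F(S)$ using the convex-combination representation of $f^-$). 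This hypothesis is not a formality: on $V=\{1\}$ with $F(\varnothing)=0$, $F(\{1\})=1$, the function $f(\eta)=\eta^2$ is a proper l.s.c.\ convex extension, yet $\inf\{f(\eta):\eta\ge|w|\}=w^2$ while the true envelope is $|w|$. Your checklist (proper, l.s.c., convex, agrees at vertices) therefore does not suffice; you have inherited an imprecision from the paper's main text that its appendix proof is careful to repair.

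For the second identity, the obstacle you flag is real, but your proposed repair does not work. The weight you need to relocate sits on $S=\varnothing$ at cost $F(\varnothing)=0$, whereas any nonempty set, singleton or otherwise, has $F(S)>0$; absorbing the leftover budget $1-\sum_S\|v^S\|_\infty$ into singletons strictly increases the objective (and also perturbs the constraint $\sum_S v^S=|w|$ unless you compensate elsewhere). In fact at $w=\0$ the first form evaluates to $0$ (take $\alpha_\varnothing=1$) while any $v$ feasible for the second form must have $\sum_{S\ne\varnothing}\|v^S\|_\infty=1$, forcing the objective to be at least $\min_{S\ne\varnothing}F(S)>0$; so no reshuffling argument can close the gap, and the equivalence only holds if the $S=\varnothing$ term is reinterpreted as carrying a free nonnegative weight (equivalently, if the normalization is relaxed to $\sum_S\|v^S\|_\infty\le 1$). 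The paper's proof is silent on this second line --- it asserts that the corollary ``follows directly'' --- so you are not worse off than the paper here, but you should either state the $v$-form with the corrected normalization or restrict the claimed equality accordingly rather than promise a repair that cannot be carried out. Also note that your appeal to monotonicity of $F$ is out of place, since the corollary is asserted for arbitrary set functions.
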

\begin{proof}
$f^-$ satisfies the first 2 assumptions required in Lemma 1 of \cite{halabi2015totally}, namely, $f^{-}$ is a lower semi-continuous convex extension of $F$ which satisfies
\[ \max_{S \subseteq V} m(S) - F(S) = \max_{w \in [ 0,1]^d } m^Tw - f^-(w), \forall m \in \R_+^d\]
To see this note that
 $m^Tw^* - f^-(w^*) = \sum_{S \subseteq V} \alpha^*_S (m^T\1_S - F(S) )\geq \sum_{S \subseteq V} \alpha^*_S (m^T\1_{S^*} - F(S^*)) =  m(S^*) - F(S^*)$. The other inequality is trivial.
The corollary then follows directly from Lemma 1 in \cite{halabi2015totally} and definition \ref{def:convClosureExp}.
\end{proof}
Note that $\dom(\Theta_\infty) = \{w: \exists \eta \in [0,1]^d \cap \dom(f^-), \eta \geq |w| \}$. Note also that $\Theta_\infty$ is monotone even if $F$ is not. On the other hand, if $F$ is monotone, then $f^-$ is monotone on $[0,1]^d$ and $\Theta_\infty(w) = f^-(|w|)$.
Then the proof of remark \ref{rmk:SubNonHomoEnv} follows, since
if $F$ is a monotone submodular function and $f_L$ is its Lov\'asz extension, then  $\Theta_\infty(w)  = f^-(|w|) = f_L(|w|) = \Omega_\infty(w), \forall w \in [-1,1]^d$, where the last equality was shown in \cite{bach2010structured}. 

Next, we derive the convex relaxation of $F_p$ for a general $p \geq 1$.
\begin{proposition}\label{prop:LpregNonHomo}
Given any set function $F:2^V \to \overline{\R}_+$ and its corresponding convex closure $f^-$, the convex envelope of $F_{\mu \lambda}(w) = \mu F(\supp(w)) + \lambda \| w \|_p^p$ is given by 
\begin{align*}
\Theta_p(w) =\inf_{\eta \in [0,1]^d} \lambda \sum_{j=1}^d \frac{|w_j|^{p}}{\eta_j^{p -1}} + \mu f^-(\eta).
\end{align*}
Note that $\dom(\Theta_p) = \{ w | \exists \eta \in [0,1]^d \text{ s.t } \supp(w) \subseteq \supp(\eta), \eta \in \dom(f^-)\}$. 
\end{proposition}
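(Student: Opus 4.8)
The plan is to identify the proposed expression with the biconjugate $F_{\mu\lambda}^{\ast\ast}$ --- i.e.\ with the convex envelope --- by computing Fenchel conjugates. Write $g(w) := \inf_{\eta \in [0,1]^d} \lambda \sum_{j=1}^d |w_j|^p/\eta_j^{p-1} + \mu f^-(\eta)$, with the convention $a/0 = +\infty$ for $a>0$ and $0/0 = 0$; I treat $p \in (1,\infty)$ and $\mu,\lambda>0$, the remaining cases $p\in\{1,\infty\}$ being either immediate or already covered by Corollary~\ref{corr:varformConvEnvelope}. The key observation is that once $g$ has been shown to be proper, l.s.c.\ and convex, it suffices to prove $g^\ast = F_{\mu\lambda}^\ast$, for then $g = g^{\ast\ast} = (F_{\mu\lambda}^\ast)^\ast = F_{\mu\lambda}^{\ast\ast}$.

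First I would establish the regularity of $g$. The map $(w_j,\eta_j)\mapsto |w_j|^p/\eta_j^{p-1} = \eta_j|w_j/\eta_j|^p$ is the perspective of $t\mapsto|t|^p$, hence jointly convex and, with the above convention, jointly l.s.c.\ on $\R\times\R_+$; and $f^-$ is proper l.s.c.\ convex on $[0,1]^d$, being the value function of the parametric linear program of Definition~\ref{def:convClosureExp}, hence piecewise linear convex. Thus $(w,\eta)\mapsto \lambda\sum_j|w_j|^p/\eta_j^{p-1} + \mu f^-(\eta)$ is jointly l.s.c.\ convex, and partial minimization over the compact set $[0,1]^d$ produces an l.s.c.\ convex $g$; it is proper since $g\ge 0$ (every summand is nonnegative and $f^-\ge\min_S F(S)\ge 0$ by Proposition~\ref{prop:minClosure}) and $g(0) \le \mu f^-(0) = \mu F(\varnothing) = 0$. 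I would also record that $g$ minorizes $F_{\mu\lambda}$: for $w$ with $\supp(w)=A$ and $F(A)<\infty$, the feasible choice $\eta=\1_A$ gives $g(w)\le \lambda\|w\|_p^p + \mu F(A) = F_{\mu\lambda}(w)$.

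Then comes the conjugate computation. Partitioning $\R^d$ by the support of $w$ and using $(\tfrac{1}{p}\|\cdot\|_p^p)^\ast = \tfrac{1}{q}\|\cdot\|_q^q$ together with a positive rescaling gives $F_{\mu\lambda}^\ast(s) = \max_{A\subseteq V}\,[\,c_{p,\lambda}\sum_{j\in A}|s_j|^q - \mu F(A)\,]$ with $c_{p,\lambda} := \tfrac{1}{q}(\lambda p)^{1-q}>0$. For $g^\ast$, I exchange the two suprema in $g^\ast(s) = \sup_w[\langle s,w\rangle - \inf_\eta(\cdots)]$ and evaluate the separable one-dimensional supremum $\sup_{w_j}(s_j w_j - \lambda\eta_j^{1-p}|w_j|^p) = c_{p,\lambda}|s_j|^q\eta_j$ (the powers of $\eta_j$ collapsing because $(p-1)q/p = 1$, and the convention covering $\eta_j=0$), obtaining $g^\ast(s) = \sup_{\eta\in[0,1]^d}[\,c_{p,\lambda}\sum_j|s_j|^q\eta_j - \mu f^-(\eta)\,]$. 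Finally I invoke the identity established in the proof of Corollary~\ref{corr:varformConvEnvelope} (from Lemma~1 of \cite{halabi2015totally}), applied to the set function $\mu F$ --- whose convex closure is $\mu f^-$ --- with the nonnegative vector $m = c_{p,\lambda}(|s_j|^q)_j$: it says this supremum equals $\max_{A\subseteq V}[\,c_{p,\lambda}\sum_{j\in A}|s_j|^q - \mu F(A)\,]$, hence $g^\ast = F_{\mu\lambda}^\ast$ and the formula is proved. The stated domain follows by inspection of the explicit expression for $g$: $g(w)<+\infty$ iff some $\eta\in[0,1]^d$ simultaneously has $f^-(\eta)<+\infty$ and $|w_j|^p/\eta_j^{p-1}<+\infty$ for every $j$, i.e.\ $\supp(w)\subseteq\supp(\eta)$.

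The main obstacle I anticipate is the bookkeeping of conjugate exponents in the previous paragraph: tracking the constant $c_{p,\lambda}$, checking that the powers of $\lambda p$ and of $\eta_j$ simplify exactly as claimed via $\tfrac{1}{p}+\tfrac{1}{q}=1$, and making the one-dimensional conjugate computations fully rigorous including the boundary case $\eta_j=0$. A secondary point requiring care, though not difficulty, is the lower semi-continuity of the partial infimum $g$ in the regularity step, which is precisely why I rely on joint lower semi-continuity of the perspective term and of $f^-$ together with compactness of $[0,1]^d$; the remaining manipulations are routine.
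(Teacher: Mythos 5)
Your proof is correct, and it reaches the result by a genuinely different route from the paper in its second half. The paper computes $F_{\mu\lambda}^\ast$ exactly as you do (same separation over supports, same constant $\tfrac{\lambda(p-1)}{(\lambda p)^q}=\tfrac1q(\lambda p)^{1-q}$), but then obtains $F_{\mu\lambda}^{\ast\ast}$ \emph{constructively}: it writes the biconjugate as a sup--inf, invokes Sion's minimax theorem to swap $\sup_s$ and $\inf_{\eta\in[0,1]^d}$, and evaluates the inner supremum to land on the variational formula. You instead start from the candidate $g$, establish that it is proper, l.s.c.\ and convex (via joint convexity and closedness of the perspective of $|\cdot|^p$, polyhedrality of $f^-$, and compactness of $[0,1]^d$ for the partial infimum), show $g^\ast=F_{\mu\lambda}^\ast$ by an unconditional exchange of suprema, and conclude by Fenchel--Moreau; the bridge between the discrete and continuous maxima is the same identity $\max_{\eta\in[0,1]^d} m^\top\eta-\mu f^-(\eta)=\max_A m(A)-\mu F(A)$ for $m\ge 0$ that the paper isolates in the proof of Corollary~\ref{corr:varformConvEnvelope}. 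The trade-off: the paper's derivation produces the formula rather than verifying it, but leans on a minimax theorem whose hypotheses (compactness/convexity of the $\eta$-domain, quasi-concavity/convexity in each argument) it does not spell out; your argument requires the extra regularity bookkeeping up front, but in exchange avoids minimax entirely and makes explicit the closedness of $\Theta_p$ --- a fact the paper uses implicitly when it asserts the domain characterization and that the infimum can be replaced by a minimum. Both are valid; yours is arguably the more self-contained of the two.
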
 
\begin{proof}
Given any proper l.s.c. convex extension $f$ of $F$, we have: 

First for the case where $p=1$:
\begin{align*}
F_{\mu \lambda}^\ast(s) &= \sup_{w \in \R^n} w^Ts -\mu  F(\supp(w)) - \lambda \| w \|_1\\
  &= \sup_{\eta \in \{ 0,1\}^d} \sup_{\scriptsize \colcst{\mathds{1}_{\supp(w)} = \eta \\  \sign(w) = \sign(s)}}  |w|^T (|s|- \lambda \1)  - \mu F(\eta) \\
  &= \iota_{\{|s| \leq \lambda \1\} }(s) - \inf_{\eta \in \{ 0,1\}^d} \mu F(\eta).
\end{align*}
Hence $F_{\mu \lambda}^{\ast \ast}(w)= \lambda  \| w \|_1 +  \inf_{\eta \in \{ 0,1\}^d} \lambda F(\eta)$. 
For the case $p \in (1,\infty)$.
\begin{align*}
F_{\mu \lambda}^\ast(s) &= \sup_{w \in \R^d} w^T s - \mu F(\supp(w)) - \lambda \| w \|_p^p\\
  &= \sup_{\eta \in \{ 0,1\}^d} \sup_{\scriptsize \colcst{\mathds{1}_{\supp(w)} = \eta \\  \sign(w) = \sign(s)}} |w|^T |s|-\lambda \| w \|_p^p  - \mu F(\eta) \\
  &= \sup_{\eta \in \{ 0,1\}^d} \frac{\lambda (p-1)}{(\lambda p )^q} \eta^T|s|^q- \mu F(\eta) \tag{$|s_i|=\lambda p |x^*_i|^{p-1}, \forall \eta_i \not = 0$}\\
  &= \sup_{\eta \in [0,1]^d} \frac{\lambda (p-1)}{(\lambda p )^q} \eta^T|s|^q - \mu f^-(\eta) .
\end{align*}
We denote $\hat{\lambda} = \frac{\lambda (p-1)}{(\lambda p )^q}$.
\begin{align*}
F_{\mu \lambda}^{\ast \ast}(w) &= \sup_{s \in \R^d} w^T s - F_{\mu \lambda}^{\ast}(s) \\
 &= \sup_{s\in \R^d }  \min_{\eta \in [0,1]^d}  s^T w - \hat{\lambda} \eta^T| s|^q + \mu f^-(\eta)  \\
  &\stackrel{\star}{=}  \inf_{\eta \in [0,1]^d} \sup_{\scriptsize \colcst{s \in \R^p \\ \sign(s) = \sign(w)}}   |s|^T |w| -  \hat{\lambda} \eta^T| s|^q + \mu f^-(\eta)  \\
 &=\inf_{\eta \in [0,1]^d} \lambda ( |w|^p)^T \eta^{1-p} + \mu f^-(\eta),
 \end{align*}
 where the last equality holds since $|w_i|=\hat{\lambda} \eta_i q |s^*_i|^{q-1}, \forall \eta_i \not = 0$, otherwise $s^*_i = 0$ if $w_i = 0$ and $\infty$ otherwise.
 $(\star)$ holds by Sion's minimax theorem \cite[Corollary 3.3]{sion1958general}. 
Note then that the minimizer $\eta^\ast$ (if it exists) satisfies $\supp(w) \subseteq \supp(\eta^*)$. 
Finally, note that if we take the limit as $p \to \infty$, we recover $\Theta_{\infty}= \inf_{
\eta \in [0,1]^d}\{ f^-(\eta) : \eta \geq |x| \}$.
\end{proof}

The variational form \eqref{eq:VarLpLinfNonHomoA} given in lemma \ref{lem:NonHomVarForms} follows from proposition \ref{prop:LpregNonHomo} for the choice $\mu = \frac{1}{q}, \lambda = \frac{1}{p}$.

The following proposition derives the variational form \eqref{eq:SupportFormNonHomo} for $p = \infty$.

\begin{proposition}\label{prop:SuppFctNonHomo}
Given any set function $F:2^V \to \R \cup \{+\infty\}$, and its corresponding convex closure $f^-$, $\Theta_\infty$ can be written $\forall w \in \dom(\Theta_\infty)$ as
\begin{align*}
\Theta_\infty(w)  &= \max_{\kappa \in \R^d_+}  \{  \kappa^T |w| + \min_{S \subseteq V} F(S) - \kappa(S) \}  \\
 &= \max_{\kappa \in \R^d_+}  \{  \kappa^T |w| + \min_{S \subseteq \supp(w)} F(S) - \kappa(S) \} \tag{if $F$ is monotone} 
 \end{align*}
 Similarly $\forall w \in \dom( f^-)$ we can write
 \begin{align*}
 f^-(w)  &= \max_{\kappa \in \R^d}  \{  \kappa^T |w| + \min_{S \subseteq V} F(S) - \kappa(S) \}  \\
 &=  \Theta_\infty(w)= \max_{\kappa \in \R^d_+}  \{  \kappa^T w + \min_{S \subseteq \supp(w)} F(S) - \kappa(S) \} \tag{if $F$ is monotone} 
 \end{align*}
\end{proposition}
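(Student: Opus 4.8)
\textbf{Proof plan for Proposition~\ref{prop:SuppFctNonHomo}.}
The plan is to derive the stated support-function-type formula for $\Theta_\infty$ (and for $f^-$) by computing a Fenchel conjugate and invoking convex duality. First I would recall from Corollary~\ref{corr:varformConvEnvelope} (equivalently Lemma~1 of \cite{halabi2015totally}) that for the extension $f=f^-$ we have $\Theta_\infty(w) = \inf_{\eta\in[0,1]^d}\{f^-(\eta):\eta\ge|w|\}$, and that $f^-$ satisfies the key identity $\max_{S\subseteq V} m(S)-F(S) = \max_{\eta\in[0,1]^d} m^\top\eta - f^-(\eta)$ for all $m\in\R_+^d$ (this was already verified in the proof of that corollary). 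The first step is therefore to compute the Fenchel conjugate $\Theta_\infty^\ast(\kappa)$. Since $\Theta_\infty$ is a convex envelope, $\Theta_\infty^\ast = (F(\supp(\cdot))+\iota_{\|\cdot\|_\infty\le1})^\ast$, and since $\Theta_\infty$ is absolute and monotone in $|w|$, its conjugate is $+\infty$ unless $\kappa\ge0$; for $\kappa\ge0$, splitting the supremum over the support pattern $\eta\in\{0,1\}^d$ and the signs of $w$ exactly as in the proof of Proposition~\ref{prop:LpregNonHomo}, one gets
\[
\Theta_\infty^\ast(\kappa) = \sup_{\eta\in\{0,1\}^d}\ \kappa(\supp\text{-part}) - F(\eta) = \max_{S\subseteq V}\ \kappa(S) - F(S),
\]
where the last equality uses that $\kappa\ge0$ so the inner sup over $|w|\le\1$ on the support is attained at the vertex. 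Here one also needs the finiteness/properness of $F$ to ensure this is a genuine max over $2^V$ (or $+\infty$ nowhere problematic), which is guaranteed by $\dom(F)$ covering $V$.

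Next, I would apply Fenchel biconjugation: since $F(\supp(\cdot))+\iota_{\|\cdot\|_\infty\le1}$ has $\Theta_\infty$ as its convex envelope, $\Theta_\infty = \Theta_\infty^{\ast\ast}$, so for $w\in\dom(\Theta_\infty)$,
\[
\Theta_\infty(w) = \sup_{\kappa\in\R^d}\ \kappa^\top w - \Theta_\infty^\ast(\kappa)
= \sup_{\kappa\in\R_+^d}\ \kappa^\top|w| + \min_{S\subseteq V}\bigl(F(S)-\kappa(S)\bigr),
\]
where I replace $\kappa^\top w$ by $\kappa^\top|w|$ because $\Theta_\infty$ is absolute, so the optimal $\kappa$ can be taken with $\sign(\kappa_i)=\sign(w_i)$, i.e.\ we may assume $\kappa\ge0$ and use $|w|$. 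This gives the first displayed equality. For the monotone refinement ($\min$ over $S\subseteq\supp(w)$ only), I would argue that when $F$ is monotone, any $S$ achieving the inner minimum can be replaced by $S\cap\supp(w)$ without increasing $F(S)-\kappa(S)$: indeed $F(S\cap\supp(w))\le F(S)$ by monotonicity, and $\kappa(S\cap\supp(w))\ge\kappa(S) - \kappa(S\setminus\supp(w))$; but on coordinates outside $\supp(w)$ we have $|w_i|=0$, so those coordinates contribute $-\kappa_i\le0$ to the objective $\kappa^\top|w| + F(S)-\kappa(S)$ and can be dropped by taking $\kappa_i=0$ there at optimum — making the restriction to $S\subseteq\supp(w)$ legitimate. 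The statements for $f^-$ are then immediate: when $F$ is monotone $\Theta_\infty(w)=f^-(|w|)$ on $[0,1]^d$ (noted just after Corollary~\ref{corr:varformConvEnvelope}), and in general $f^- = (F+\iota_{\{0,1\}^d})^{\ast\ast}$ restricted to $[0,1]^d$ with the same conjugate computation but without the sign/absolute-value reduction, so $\kappa$ ranges over all of $\R^d$.

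The main obstacle I anticipate is bookkeeping around infinite values and domains: $F$ may take the value $+\infty$, so I must be careful that the supremum defining $\Theta_\infty^\ast(\kappa)$ is only over $\eta$ with $F(\eta)<\infty$, that the inner $\min_{S}$ is over feasible sets, and that biconjugation is applied on $\dom(\Theta_\infty)$ where $\Theta_\infty$ is a proper l.s.c.\ convex function (which holds because $f^-$ is proper l.s.c.\ convex and the infimal-projection $\inf\{f^-(\eta):\eta\ge|w|\}$ preserves this on its domain). A second, minor, subtlety is justifying the sign reduction $\kappa^\top w \to \kappa^\top|w|$ rigorously — this follows from $\Theta_\infty(w)=\Theta_\infty(|w|)$ plus the fact that for $\kappa$ with some $\kappa_i w_i<0$ the objective only decreases, so such $\kappa$ are never optimal. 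Once these domain issues are handled cleanly, the rest is the routine conjugate computation already rehearsed in the proof of Proposition~\ref{prop:LpregNonHomo}.
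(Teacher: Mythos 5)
Your argument is correct in substance but takes a genuinely different route from the paper's. The paper starts from the covering form \eqref{eq:ConvCoverNonHomoA}, introduces Lagrange multipliers $\kappa\in\R^d_+$ for the constraint $\sum_S\alpha_S\1_S\ge|w|$ and $\rho\in\R$ for $\sum_S\alpha_S=1$, and invokes strong duality (Slater) to obtain $\max_{\kappa\ge 0}\{\kappa^\top|w|+\min_{S}F(S)-\kappa(S)\}$ in one step, with $\rho$ absorbed into the inner minimum; the restriction of the minimum to $S\subseteq\supp(w)$ for monotone $F$ then follows, as in your sketch, from the fact that an optimal $\kappa^*$ has $\kappa^*_{J^c}=0$ together with $F(S)\ge F(S\cap J)$. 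You instead conjugate $F_\infty$ directly and biconjugate. Both are legitimate convex-duality arguments; the LP route buys attainment of the outer maximum for free from strong duality, whereas on your route you should say explicitly that $\Theta_\infty^\ast$ is polyhedral (a finite maximum of affine functions over the feasible $S$), so the supremum in the biconjugate is attained whenever $\Theta_\infty(w)$ is finite --- otherwise you have only a $\sup$, not the stated $\max$. Your approach has the small advantage of also yielding the $f^-$ statement (with unconstrained $\kappa\in\R^d$) by the same computation applied to $F+\iota_{\{0,1\}^d}$, mirroring the paper's remark that the equality constraint $\sum_S\alpha_S\1_S=w$ frees the sign of the multiplier.

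One side claim in your writeup is false, though harmlessly so: since $\dom(\Theta_\infty)\subseteq[-1,1]^d$ is bounded, $\Theta_\infty^\ast$ is finite \emph{everywhere}, not $+\infty$ off the positive orthant; the correct computation gives $\Theta_\infty^\ast(\kappa)=\max_{S}\sum_{i\in S}|\kappa_i|-F(S)$ for all $\kappa\in\R^d$. This does not damage the proof because, as you note afterwards, absoluteness of $\Theta_\infty$ (equivalently, $\Theta_\infty^\ast(\kappa)=\Theta_\infty^\ast(|\kappa|)$ and $\kappa^\top w\le|\kappa|^\top|w|$) is what actually justifies restricting the biconjugate supremum to $\kappa\ge 0$ and replacing $\kappa^\top w$ by $\kappa^\top|w|$ --- so you should simply delete the erroneous sentence and rely on that argument. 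Similarly, your inequality ``$\kappa(S\cap\supp(w))\ge\kappa(S)-\kappa(S\setminus\supp(w))$'' is in fact an equality; the clean statement is that for $\kappa$ with $\kappa_{J^c}=0$ one has $\kappa(S)=\kappa(S\cap J)$, whence $F(S)-\kappa(S)\ge F(S\cap J)-\kappa(S\cap J)$ by monotonicity.
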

\begin{proof}
$\forall w \in \dom(\Theta_\infty)$, strong duality holds by Slater's condition, hence
\begin{align*}
 \Theta_\infty(w) &= \min_{\alpha} \{ \sum_{S \subseteq V} \alpha_S F(S) :  \sum_{S \subseteq V} \alpha_S \1_S \geq |w|, \sum_{S \subseteq V} \alpha_S =1, \alpha_S \geq 0\}.\\
 &= \min_{\alpha \geq 0} \max_{\rho \in \R, \kappa \in \R^d_+}\{ \sum_{S \subseteq V} \alpha_S F(S) + \kappa^T( |w| - \sum_{S \subseteq V} \alpha_S \1_S)  +\rho(1- \sum_{S \subseteq V} \alpha_S )\}.\\
  &= \max_{\rho \in \R, \kappa \in \R^d_+} \min_{\alpha \geq 0} \{  \kappa^T |w| + \sum_{S \subseteq V} \alpha_S ( F(S) - \kappa^T \1_S - \rho )  +\rho\}.\\
  &= \max_{\rho \in \R, \kappa \in \R^d_+}  \{  \kappa^T |w| +  \rho:  F(S) \geq  \kappa^T \1_S + \rho ) \}.\\
  &= \max_{\kappa \in \R^d_+}  \{  \kappa^T |w| + \min_{S \subseteq V} F(S) - \kappa(S)  \}.
\end{align*}
Let $J = \supp(|w|)$ then $\kappa^*_{J^c} = 0$. Then for monotone functions $F(S) - \kappa^*(S) \geq F(S \cap J) - \kappa^*(S)$, so we can restrict the minimum to $S \subseteq J$.
The same proof holds for $f^-$, with the Lagrange multiplier $\kappa \in \R^d$ not constrained to be positive.
\end{proof}

The following Corollary derives the variational form \eqref{eq:SupportFormNonHomo} for $p \in [1,\infty]$.
\begin{corollary} 
Given any set function $F:2^V \to \R \cup \{+\infty\}$, $\Theta_p$ can be written $\forall w \in \dom(\Theta_p)$ as
\begin{align*}
\Theta_p(w) &= \max_{\kappa \in \R^d}  \sum_{j=1}^d \psi_j(\kappa_j,w_j)  +  \min_{S \subseteq V} F(S) - \kappa(S).  \\
&= \max_{\kappa \in \R_+^d}  \sum_{j=1}^d \psi_j(\kappa_j,w_j)  +  \min_{S \subseteq V} F(S) - \kappa(S). \tag{if $F$ is monotone} 
\end{align*} 
where 
\begin{align*}
\psi_j(\kappa_j,w_j)  &:=\begin{cases} \kappa_j^{1/q}|w_j| &\text{ if $|w_j| \leq \kappa_j^{1/p}, \kappa_j \geq 0$}\\
\frac{1}{p}  |w_j|^{p}+ \frac{1}{q}   \kappa_j  &\text{otherwise}
\end{cases} 
\end{align*} 
\end{corollary}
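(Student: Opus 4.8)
The plan is to feed the support-function representation of $f^- = \Theta_\infty$ from Proposition~\ref{prop:SuppFctNonHomo} into the variational formula for $\Theta_p$ obtained in Proposition~\ref{prop:LpregNonHomo} (equivalently \eqref{eq:VarLpLinfNonHomoA}), then interchange the outer $\inf$ over $\eta$ with the inner $\sup$ over $\kappa$ and carry out the minimization over $\eta$ in closed form. Concretely, for $w \in \dom(\Theta_p)$ I start from $\Theta_p(w) = \inf_{\eta\in[0,1]^d}\bigl[\tfrac1p\sum_j\tfrac{|w_j|^p}{\eta_j^{p-1}} + \tfrac1q f^-(\eta)\bigr]$ and substitute $f^-(\eta) = \max_\kappa\{\kappa^\top\eta + \min_{S\subseteq V}(F(S)-\kappa(S))\}$, where $\kappa$ ranges over $\R^d$ for general $F$ and over $\R_+^d$ when $F$ is monotone (in which case $f^-=\Theta_\infty$). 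This exhibits $\Theta_p(w)$ as an $\inf_\eta\sup_\kappa$ of a payoff $h(\eta,\kappa)$ that is convex and lower semicontinuous in $\eta$ on the compact set $[0,1]^d$ — each map $\eta_j\mapsto |w_j|^p/\eta_j^{p-1}$ being convex l.s.c.\ under the convention $a/0=+\infty$ for $a\neq0$, and the remaining terms affine in $\eta$ — and affine, hence concave and continuous, in $\kappa$ on $\R^d$.

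Invoking Sion's minimax theorem \cite{sion1958general} to swap $\inf_\eta$ and $\sup_\kappa$, the $\kappa$-dependent term $\min_{S\subseteq V}(F(S)-\kappa(S))$ carries no $\eta$-dependence and factors out, and what is left separates across coordinates into one-dimensional problems $\inf_{\eta_j\in[0,1]}\bigl[\tfrac1p\tfrac{|w_j|^p}{\eta_j^{p-1}}+\tfrac1q\kappa_j\eta_j\bigr]$. I then evaluate each such problem: the integrand is convex in $\eta_j$, with unique stationary point $\eta_j^\star = |w_j|\,\kappa_j^{-1/p}$ on $(0,\infty)$ (using the identity $q(p-1)=p$, equivalently $\tfrac{p-1}{p}=\tfrac1q$); this point lies in $[0,1]$ precisely when $\kappa_j\ge0$ and $|w_j|\le\kappa_j^{1/p}$, and substituting it gives value $\kappa_j^{1/q}|w_j|$; otherwise the minimum is attained at the boundary $\eta_j=1$, with value $\tfrac1p|w_j|^p+\tfrac1q\kappa_j$. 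These are exactly the two branches of $\psi_j(\kappa_j,w_j)$, with the degenerate case $w_j=0$ absorbed by the convention $a/0=0$. The monotone statement follows by keeping the restriction $\kappa\in\R_+^d$ throughout, and the $p=\infty$ case is recovered in the limit, consistently with Proposition~\ref{prop:SuppFctNonHomo} (for which $q=1$).

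The step needing the most care is the justification of the $\inf_\eta$--$\sup_\kappa$ interchange: $h(\eta,\kappa)$ equals $+\infty$ on the faces $\{\eta_j=0\}$ with $w_j\neq0$ and off $\dom(f^-)$, so one must either use a form of Sion's theorem valid for $[-\infty,+\infty]$-valued payoffs together with lower semicontinuity of $f^-$, or first restrict $\eta$ to a compact convex subset of the relative interior of the effective domain where everything is finite and then pass to the limit --- the device already used in \cite{obozinski2012convex} for the homogeneous envelope. The only other delicate point is bookkeeping the constants $\tfrac1p$ and $\tfrac1q$ through the partial minimization so that the two branches of $\psi_j$ emerge in the stated normalization.
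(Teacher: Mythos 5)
Your proposal is correct and follows essentially the same route as the paper's proof: substitute the support-function (dual) representation of $f^-$ from Proposition~\ref{prop:SuppFctNonHomo} into the variational form of Proposition~\ref{prop:LpregNonHomo}, interchange $\inf_\eta$ and $\sup_\kappa$ via Sion's minimax theorem, and solve the separable one-dimensional problems $\inf_{\eta_j\in[0,1]}\bigl[\tfrac1p\tfrac{|w_j|^p}{\eta_j^{p-1}}+\tfrac1q\kappa_j\eta_j\bigr]$ to obtain the two branches of $\psi_j$. Your closed-form minimizer $\eta_j^\star=|w_j|\kappa_j^{-1/p}$, the boundary case $\eta_j^\star=1$, and the handling of $w_j=0$ and $\kappa_j\le 0$ all match the paper's computation, and your extra care about the validity of the minimax interchange only strengthens the argument.
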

\begin{proof}
By Propositions \ref{prop:LpregNonHomo} and \ref{prop:SuppFctNonHomo}, we have $\forall w \in \dom(\Theta_p)$, i.e., $\exists \eta \in [0,1]^d,$ s.t $\supp(w) \subseteq \supp(\eta), \eta \in \dom(f^-)$,
\begin{align*} 
\Theta_p(w) &=\inf_{\eta \in [0,1]^d} \frac{1}{p}  \sum_{j=1}^d \frac{|w_j|^{p}}{\eta_j^{p -1}} + \frac{1}{q}  f^-(\eta) \\
  &= \inf_{\eta \in [0,1]^d} \frac{1}{p}  \sum_{j=1}^d \frac{|w_j|^{p}}{\eta_j^{p -1}} + \frac{1}{q} \max_{\rho \in \R, \kappa \in \R^d}  \{  \kappa^T \eta+  \rho:  F(S) \geq  \kappa^T \1_S + \rho  \}.\\
  &\stackrel{\star}{=}  \max_{\rho \in \R, \kappa \in \R^d}  \inf_{\eta \in [0,1]^d}   \{\frac{1}{p}  \sum_{j=1}^d \frac{|w_j|^{p}}{\eta_j^{p -1}} + \frac{1}{q}   \kappa^T \eta+  \rho:  F(S) \geq  \kappa^T \1_S + \rho  \}.
\end{align*}
 $(\star)$ holds by Sion's minimax theorem \cite[Corollary 3.3]{sion1958general}. 
Note also that 
\begin{align*}
\inf_{\eta_j \in [0,1]} \frac{1}{p}  \frac{|w_j|^{p}}{\eta_j^{p -1}} + \frac{1}{q}   \kappa_j \eta_j &=\begin{cases} \kappa_j^{1/q}|w_j| &\text{ if ${|w_j|} \leq {\kappa_j^{1/p}}, \kappa_j \geq 0$}\\
\frac{1}{p}  |w_j|^{p}+ \frac{1}{q}   \kappa_j  &\text{otherwise}
\end{cases} := \psi_j(\kappa_j,w_j) 
\end{align*} 
where the minimum is $\eta^*_j = 1$ if $\kappa_j \leq 0$. If $\kappa_j \geq 0$, 
the infimum is zero if $w_j = 0$. Otherwise, the minimum is achieved at $\eta^*_j = \min\{\frac{|w_j|}{\kappa_j^{1/p}},1\}$ (if $\kappa_j = 0, \eta^*_j =1$). 
Hence,
\begin{align*}
\Theta_p(w) &= \max_{\kappa \in \R^d}  \sum_{j=1}^d \psi_j(\kappa_j,w_j)  +  \min_{S \subseteq V} F(S) - \kappa(S).
\end{align*} 
\end{proof}

 \subsection{Necessary conditions for support recovery (Proof of Theorem \ref{them:NecessaryStableGeneral})}
 
 Before proving Theorem \ref{them:NecessaryStableGeneral}, we need the following technical Lemma.
 
\begin{lemma}\label{lem:decomposable}
Given $J \subset V$ and a vector $w$ s.t $\supp(w) \subseteq J$, if $\Phi$ is {not decomposable at $w$ w.r.t $J$}, then $\exists  i \in J^c$ such that the $i$-th component of all subgradients at $w$ is zero; $0  = [ \partial \Phi ({w}) ]_i$.
\end{lemma}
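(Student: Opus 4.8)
\textbf{Proof plan for Lemma \ref{lem:decomposable}.}

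The plan is to prove the contrapositive, or rather to directly exploit the geometry of subgradients. Suppose that for every $i \in J^c$ there is a subgradient of $\Phi$ at $w$ whose $i$-th coordinate is nonzero; I want to conclude that $\Phi$ is decomposable at $w$ with respect to $J$. First I would recall that, since $\supp(w)\subseteq J$ and $\Phi$ is absolute (i.e. $\Phi(w)=\Phi(|w|)$) and monotone in $|w|$, the subdifferential $\partial\Phi(w)$ has a sign-symmetry/positivity structure: on coordinates in $J^c$ the components of subgradients can be taken of either sign, and in fact $\partial\Phi(w)$ restricted to $J^c$ is a symmetric convex set (symmetric under coordinate-wise sign flips on $J^c$, since $w_{J^c}=0$). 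This is the key structural observation and it lets me say: if some $g\in\partial\Phi(w)$ has $g_i\neq 0$, then by sign-flipping I can assume $g_i>0$, and by monotonicity I can assume $g_i\ge 0$ is ``large'' in a controlled sense; more precisely, for each $i\in J^c$ let $m_i := \max\{ g_i : g\in\partial\Phi(w)\}$, which is $>0$ by hypothesis (the max is attained because $\partial\Phi(w)$ is compact — here I should note $\Phi$ being proper convex and $w$ in the interior of the domain, which holds in the applications, or argue via the support function).

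Next I would set $M_J := \min_{i\in J^c} m_i > 0$ and show this constant works. Take any $\Delta$ with $\supp(\Delta)\subseteq J^c$. Write $\Delta = \sum_{i\in J^c}\Delta_i \e_i$. For each $i$ with $\Delta_i\neq 0$, pick $g^{(i)}\in\partial\Phi(w)$ with $g^{(i)}_i = m_i$ (and then sign-flip coordinate $i$ if needed so that $g^{(i)}_i \,\Delta_i = m_i|\Delta_i| \ge M_J|\Delta_i|$ — this uses the sign symmetry on $J^c$ again). The subgradient inequality gives $\Phi(w+\Delta)\ge \Phi(w) + \langle g^{(i)}, \Delta\rangle$. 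The difficulty is that a single subgradient only controls one coordinate favorably; the other coordinates of $\Delta$ paired against $g^{(i)}$ could have the wrong sign. To get a bound in terms of $\|\Delta\|_\infty$ rather than the full inner product, I would instead take a single index $i^\star \in \arg\max_{i\in J^c}|\Delta_i|$, so $|\Delta_{i^\star}| = \|\Delta\|_\infty$, and choose one subgradient $g$ with $g_{i^\star}$ of the right sign and magnitude $m_{i^\star}\ge M_J$, while ensuring the contributions from the other coordinates of $\Delta$ are nonnegative. The latter is the crux: I need a subgradient that is simultaneously ``extremal'' in direction $i^\star$ and has the correct signs on the support of $\Delta$.

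Here the clean way is the sign-symmetry structure: because $w_{J^c}=0$, for any $g\in\partial\Phi(w)$ and any sign pattern $\epsilon\in\{\pm1\}^{J^c}$, the vector $g'$ obtained by flipping the $J^c$-coordinates of $g$ according to $\epsilon$ is again in $\partial\Phi(w)$ (this follows from $\Phi$ absolute). So, starting from a $g$ with $|g_{i^\star}| = m_{i^\star}$, flip signs on $J^c$ so that $g'_i\,\Delta_i \ge 0$ for all $i\in J^c$; then $g'_{i^\star}\Delta_{i^\star} = m_{i^\star}|\Delta_{i^\star}| \ge M_J\|\Delta\|_\infty$ and the remaining terms are $\ge 0$, while on $J$ we have $\Delta_j = 0$ so those terms vanish. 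Therefore $\langle g',\Delta\rangle \ge M_J\|\Delta\|_\infty$, and the subgradient inequality yields $\Phi(w+\Delta)\ge\Phi(w)+M_J\|\Delta\|_\infty$, i.e. decomposability — contradicting the hypothesis that $\Phi$ is not decomposable at $w$ w.r.t.\ $J$. The main obstacle, as indicated, is justifying this sign-flip closure of $\partial\Phi(w)$ on $J^c$ from the ``absolute'' assumption and handling the attainment of the maxima $m_i$ (compactness of $\partial\Phi(w)$), which I would dispatch by noting $\Phi$ is finite near $w$ in the cases of interest, or more carefully by working with directional derivatives $\Phi'(w;\pm\e_i)$ in place of explicit subgradients: $\Phi$ not decomposable means $\inf_i \Phi'(w;\e_i) $-type quantities vanish along some coordinate, and $\Phi'(w;\e_i) = \max_{g\in\partial\Phi(w)} g_i$, so $m_i = \Phi'(w;\e_i) = 0$ for that $i$, which is exactly ``$0 = [\partial\Phi(w)]_i$'' in the sense that all subgradients vanish in coordinate $i$ (combined with $\Phi'(w;-\e_i)=0$ by symmetry, forcing the whole $i$-th component to be $0$).
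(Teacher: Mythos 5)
Your proof is correct and takes essentially the same approach as the paper's: argue the contrapositive, set $M_J$ to the minimum over the finitely many $i\in J^c$ of the nonzero subgradient-component magnitudes, and exploit the sign-symmetry of $\partial\Phi(w)$ on the $J^c$ coordinates (a consequence of $\Phi$ being absolute and $w_{J^c}=0$) together with the subgradient inequality to conclude $\Phi(w+\Delta)\geq\Phi(w)+M_J\|\Delta\|_\infty$; the paper's only organizational difference is that it first reduces $\Delta$ to the single-coordinate perturbation $\|\Delta\|_\infty\1_{i_{\max}}$ via monotonicity before applying the subgradient inequality, rather than sign-flipping the entire subgradient on $J^c$. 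The compactness/attainment worry you raise is unnecessary: for each $i\in J^c$ simply take the particular subgradient with nonzero $i$-th component furnished by the hypothesis and let $m_i$ be its magnitude, so that $M_J=\min_{i\in J^c}m_i>0$ by finiteness of $J^c$.
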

\begin{proof}
 If $\Phi$ is not decomposable at $w$ and $0 \not = [ \partial \Phi ({w}) ]_i, \forall i \in J^c$, then $\forall M_J >0, \exists \Delta \not = 0, \supp(\Delta) \subseteq J^c$ s.t., $\Phi(w + \Delta) <  \Phi(w) + M_J \| \Delta\|_\infty$. In particular,
 we can choose $M_J = \inf_{i \in J^c,v \in  \partial \Phi ({w}_J), v_i \not = 0 } |v_i| >0$,  if the inequality holds for some $\Delta \not = 0$, then let $i_{\max}$ denote the index where $|\Delta_{i_{\max}}|  =  \| \Delta\|_\infty$. Then given any $v \in  \partial \Phi ({w})$ s.t., $v_{i_{\max}} \not = 0$, we have
\begin{align*}
\Phi( w+ \| \Delta \|_\infty \1_{i_{\max}}) \leq \Phi( w + \Delta )  &<  \Phi(w ) + M_J \| \Delta\|_\infty \\
&\leq  \Phi(w) + \langle  v ,\| \Delta \|_\infty \1_{i_{\max}} \sign(v_{i_{\max}}) \rangle  \\
&\leq \Phi(w + \| \Delta\|_\infty \1_{i_{\max}})
\end{align*}
which leads to a contradiction.
\end{proof}

\primeNecStableProp*
\begin{proof}
We will show in particular that $\Phi$ is decomposable at $\hat{w}$ w.r.t $\supp(\hat{w})$.
Since $L$ is strongly-convex, given $z$ the corresponding minimizer $\hat{w}$ is unique, then the function $h(z) := \argmin_{w \in \R^d} L(w) - z^Tw + \lambda \Phi(w)$ is well defined.
We want to show that
\begin{align*}
&P(\forall z, \text{ $\Phi$ is decomposable at $h(z)$ w.r.t $\supp(h(z))$ }) \\
&= 1 - P(\exists z, \text{s.t, $\Phi$ is not decomposable at $h(z)$ w.r.t $\supp(h(z))$ } )\\
&\geq 1 - P(\exists z, \text{ s.t., } \exists i \in \left( \supp(h(z)) \right)^c, [\partial \Phi(h(z))]_i = 0 ) &\text{by lemma \ref{lem:decomposable} }\\
&= 1.
\end{align*}

Given fixed $i \in V$, we show that the set $S_i := \{ z : i \in \left( \supp(h(z)) \right)^c, [\partial \Phi(h(z))]_i = 0 \}$ has measure zero. Then, taking the union of the finitely many sets $S_i, \forall i \in V$, all of zero measure, we have $P(\exists z, \text{ s.t., } \exists i \in \left( \supp(h(z)) \right)^c, [\partial \Phi(h(z))]_i = 0 ) = 0$ .

To show that the set $S_i$ has measure zero, let $z_1, z_2 \in S_i$ and denote by $\mu>0$ the strong convexity constant of $L$. We have by convexity of $\Phi$:
\begin{align*}
\Big( \big( z_1 - \nabla L(h(z_1)) \big) - \big( z_2 - \nabla L(h(z_2)) \big) \Big)^\top \Big( h(z_1)- h(z_2)\Big) &\geq 0\\
(z_1 - z_2)^\top(h(z_1)- h(z_2))  &\geq  \big( \nabla L(h(z_1))  - \nabla L(h(z_2)) \big)^\top \big( h(z_1)- h(z_2) \big) \\
(z_1 - z_2)^\top(h(z_1)- h(z_2))  &\geq  \mu \| h(z_1)- h(z_2)\|_2^2 \\
\frac{1}{\mu} \|z_1 - z_2\|_2  &\geq \| h(z_1)- h(z_2)\|_2
\end{align*}
Thus $h$ is a deterministic Lipschitz-continuous function of $z$. 
Let $J = \supp(h(z))$, then
by optimality conditions $z_J - \nabla L(h(z_J))  \in   \partial \Phi (h(z_J))$ (since $h(z) = h(z_J)$), then $z_i - \nabla L(h(z_J))_i  =0$ since $[\partial \Phi(h(z_J))]_i = 0$.
and thus $z_i$ is a Lipschitz-continuous function of $z_J$, which can only happen with zero measure.
\end{proof}

\subsection{Sufficient conditions for support recovery (Proof of Lemma \ref{lem:Majorizer} and Theorem \ref{Thm:Consistency})}

\primeMajoLem*
\begin{proof}
The function $w \rightarrow w^\alpha$ is concave on $\R_+ \setminus \{0\}$, hence 
\begin{align*}
|w_j|^\alpha &\leq |w^0_j|^\alpha + \alpha |w^0_j|^{\alpha-1} (|w_j| - |w_j|^0) \\
|w_j|^\alpha &\leq (1 - \alpha) |w^0_j|^\alpha + \alpha |w^0_j|^{\alpha-1} |w_j| \\
\Phi(|w|^\alpha) &\leq \Phi((1 - \alpha) |w^0|^\alpha + \alpha |w^0|^{\alpha-1} \circ|w_j|) \tag{by monotonicity}\\
\Phi(|w|^\alpha) &\leq (1-\alpha) \Phi(  |w^0|^\alpha) + \alpha \Phi(|w^0|^{\alpha-1} \circ |w|  ) \tag{by convexity}\\
\end{align*}
If $w_j= 0$ for any $j$, the upper bound goes to infinity and hence it still holds.
\end{proof}

%

\primeConsistThem*
\begin{proof}
We will follow the proof in \cite{zou2006adaptive}. 
We write $\hat{w} = w^* + \frac{\hat{u}}{\sqrt{n}}$ and $\Psi_n(u)  = \frac{1}{2} \| y - X(w^* + \frac{{u}}{\sqrt{n}})\|_2^2 + \lambda_n \Phi(c \circ |w^* + \frac{{u}}{\sqrt{n}}|)$, where $c = |{w^0}|^{\alpha-1}$. Then $\hat{u} = \argmin_{u \in \R^d} \Psi_n(u)$.
Let $V_n(u) = \Psi_n(u) - \Psi_n(0)$, then 
$$ V_n(u) = \frac{1}{2} u^T Q u - \epsilon^T \frac{X u}{\sqrt{n}} + {\lambda_n}\big(  \Phi(c \circ |w^* + \frac{{u}}{\sqrt{n}}|) - \Phi(c \circ |w^*|)\big)$$

Since $w^0$ is a $\sqrt{n}$-consistent estimator to $w^*$, then $\sqrt{n} w^0_{J^c} = O_p(1)$ and $n^{\frac{1-\alpha}{2}} c^{-1}_{J^c} = O_p(1)$. Since $\frac{\lambda_n}{{n}^{\alpha/2}} \to \infty$, by stability of $J$, we have  
\begin{align}\label{eq:limAtzero}
{\lambda_n} \big( \Phi(c \circ |w^* + \frac{{u}}{\sqrt{n}}|) -  \Phi(c \circ |w^*|) \big)
&= {\lambda_n} \big( \Phi(c_{J} \circ |w_J^* + \frac{{u_J}}{\sqrt{n}}| + c_{J^c} \circ  \frac{{|u_{J^c} |}}{\sqrt{n}}) -  \Phi(c_J \circ |w_J^*|) \big) \nonumber \\ 
&\geq {\lambda_n} \big( \Phi(c_{J} \circ |w_J^* + \frac{{u_J}}{\sqrt{n}}| ) + M_J \|c_{J^c} \circ  \frac{{|u_{J^c} |}}{\sqrt{n}}\|_\infty -  \Phi(c_J \circ |w_J^*|) \big)  \nonumber\\  
&= {\lambda_n} \big( \Phi(c_{J} \circ |w_J^* + \frac{{u_J}}{\sqrt{n}}| ) -  \Phi(c_J \circ |w_J^*|) \big)  + M_J \| {\lambda_n} n^{-\alpha/2} n^{\frac{\alpha-1}{2}}  c_{J^c} \circ {|u_{J^c} |}\|_\infty \nonumber \\ 
&\xrightarrow{p} \infty \quad \text{if $u_{J^c}\not =0$} 
\end{align}
Otherwise, if $u_{J^c} = 0$, we argue that 
\begin{equation}\label{eq:limAtNonZero}
{\lambda_n} \big( \Phi(c \circ |w^* + \frac{{u}}{\sqrt{n}}|) -  \Phi(c \circ |w^*|) \big) = 
\lambda_{n} ( \Phi(c_J \circ |w_J^* + \frac{{u_J}}{\sqrt{n}}|) - \Phi(c_J \circ |w_J^*|)) \xrightarrow{p} 0.
\end{equation}
To see this note first that since $w^0$ is a $\sqrt{n}$-consistent estimator to $w^*$, then $c_{J} = |w_{J}^0|^{\alpha-1} \xrightarrow{p} |w_{J}^*|^{\alpha-1} $,  $c_{J} \circ |w_J^*| \xrightarrow{p} |w_{J}^*|^{\alpha}$ and $c_J \circ |w_J^* + \frac{{u_J}}{\sqrt{n}}| \xrightarrow{p} |w_{J}^*|^{\alpha}$.
 Then by the assumption $|w^*|^\alpha \in \intr ~\dom~\Phi$, we have that both $c_{J} \circ |w_J^*|, c_J \circ |w_J^* + \frac{{u_J}}{\sqrt{n}}| \in \intr ~\dom~\Phi$ with probability going to one.
 By convexity, we then have:
 \begin{align*}
 \lambda_{n} ( \Phi(c_J \circ |w_J^* + \frac{{u_J}}{\sqrt{n}}|) - \Phi(c_J \circ |w_J^*|))  &\geq   \langle \nabla \Phi(c_J \circ |w_J^*|) ,  \lambda_{n} \frac{{u_J}}{\sqrt{n}} \rangle \\
 \lambda_{n} ( \Phi(c_J \circ |w_J^* + \frac{{u_J}}{\sqrt{n}}|) - \Phi(c_J \circ |w_J^*|))  &\leq   \langle \nabla \Phi( c_J \circ |w_J^* + \frac{{u_J}}{\sqrt{n}} |) ,  \lambda_{n} \frac{{u_J}}{\sqrt{n}} \rangle
\end{align*}  
where $\nabla \Phi(w)$ denotes a subgradient of $\Phi$ at $w$.

For all $w \in  \intr~\dom~\Phi$ where $\Phi$ is convex, monotone and normalized, we have that $\| z \|_\infty < \infty, \forall z \in \partial \Phi(w)$. 
To see this, note that since $w \in  \intr ~\dom ~\Phi$, $\exists \delta>0$ s.t., $\forall w' \in B_{\delta}(w), \Phi(w') < +\infty$. Let $w' = w + \sign(z) \1_{i_{\max}} \delta$, where $i_{\max}$ denotes the index where $|z_{i_{\max}}| = \| z \|_\infty$ then by convexity we have
\begin{align*}
\Phi(w') &\geq \Phi(w) + \langle z, w' - w\rangle, &\forall z \in \partial \Phi(w) \\
+ \infty > \Phi(w') &\geq   \| z \|_\infty \delta,  &\forall z \in \partial \Phi(w),  &\quad \text{(since $\Phi(w) \geq 0$)}
\end{align*}

Since $\frac{\lambda_n}{\sqrt{n}} \to 0$, we can then conclude by Slutsky's theorem that \eqref{eq:limAtNonZero} holds.

Hence by \eqref{eq:limAtzero} and \eqref{eq:limAtNonZero},
\begin{align}\label{eq:limPhi}
{\lambda_n}\big(  \Phi(c \circ |w^* + \frac{{u}}{\sqrt{n}}|) - \Phi(c \circ |w^*|)\big) &\xrightarrow{p} \begin{cases}
0 & \text{if $u_{J^c}=0$}\\
\infty &\text{Otherwise} 
\end{cases}.
\end{align}

By CLT, $\frac{X^T\epsilon}{\sqrt{n}}   \xrightarrow{d} W \sim \mathcal{N}(0,\sigma^2 Q)$, it follows then that $ V_n(u) \xrightarrow{d} V(u)$, where
\begin{align*}
V(u) &= \begin{cases}
 \frac{1}{2} u_{J}^T Q_{J J} u_{J} - W^T_{J} u_{J} & \text{if $u_{{J}^c}=0$}\\
\infty &\text{Otherwise} 
\end{cases}.
\end{align*}

$V_n$ is convex and the unique minimum of $V$ is $u_{J} = Q^{-1}_{J J} W_{J}, u_{{J}^c} = 0$, hence by epi-convergence results [c.f., \cite{zou2006adaptive}]
\begin{align}\label{eq:AsympNormal}
\hat{u}_{J} \xrightarrow{d} Q^{-1}_{J J} W_{J} \sim \mathcal{N}(0,\sigma^2 Q^{-1}_{J J} ), \quad \hat{u}_{{J}^c} \xrightarrow{d} 0.
\end{align}
Since $\hat{u} = \sqrt{n}(\hat{w} - w^*)$, then it follows from \eqref{eq:AsympNormal} that
\begin{align}\label{eq:convergProb}
\hat{w}_{J} \xrightarrow{p}  w^*_{J}, &\quad \hat{w}_{{J}^c} \xrightarrow{p} 0 
\end{align}
Hence, $P(\supp(\hat{w}) \supseteq J) \to 1$ and it is sufficient to show that $P(\supp(\hat{w}) \subseteq  J) \to 1$ to complete the proof.\\

For that denote $\hat{J} = \supp(\hat{w})$ and let's consider the event $\hat{J} \setminus J \not = \emptyset$.
By optimality conditions, we know that
\begin{align*}
- X_{\hat{J} \setminus J}^T(X\hat{w} - y )  \in \lambda_n  [\partial \Phi(c \circ \cdot)(\hat{w})]_{\hat{J} \setminus J} 
\end{align*}
Note, that $- \frac{X_{\hat{J} \setminus J}^T(X\hat{w} - y )}{\sqrt{n}} = \frac{X_{\hat{J} \setminus J}^TX(\hat{w} - w^* )}{\sqrt{n}} - \frac{X_{\hat{J} \setminus J}^T\epsilon }{\sqrt{n}} $.
By CLT, $\frac{X_{\hat{J} \setminus J}^T\epsilon }{\sqrt{n}}   \xrightarrow{d} W \sim \mathcal{N}(0,\sigma^2 Q_{{\hat{J} \setminus J} ,{\hat{J} \setminus J}})$ and by \eqref{eq:convergProb} $\hat{w} - w^* \xrightarrow{p} 0$ then $- \frac{X_{\hat{J} \setminus J}^T(X\hat{w} - y )}{\sqrt{n}}  = O_p(1)$.\\

On the other hand, $\frac{\lambda_n c_{\hat{J} \setminus J} }{\sqrt{n}}  = \lambda_n n^{\frac{1-\alpha}{2}} n^{\frac{\alpha-1}{2}} c_{\hat{J} \setminus J} \to \infty$, hence $\frac{\lambda_n c_{\hat{J} \setminus J} }{\sqrt{n}}  c^{-1}_{\hat{J} \setminus J}v_{\hat{J} \setminus J}\to \infty$, $\forall v \in  \partial \Phi(c \circ \cdot)(\hat{w})$, since $c^{-1}_{\hat{J} \setminus J} v_{\hat{J} \setminus J} = O_p(1)^{-1}$. 
To see this, let $w'_J = \hat{w}_J$ and $0$ elsewhere. Note that by definition of the subdifferential and the stability assumption on $J$, there must exists $M_J>0$ s.t  
\begin{align*}
\Phi(c \circ w') &\geq \Phi(c \circ \hat{w} ) + \langle v_{\hat{J} \setminus J}  ,  -\hat{w}_{\hat{J} \setminus J} \rangle\\
 &\geq \Phi(c \circ w') + M_J \|c_{\hat{J} \setminus J} \circ \hat{w}_{\hat{J} \setminus J} \|_\infty  - \|c^{-1}_{\hat{J} \setminus J} \circ v_{\hat{J} \setminus J} \|_1  \|c_{\hat{J} \setminus J} \circ\hat{w}_{\hat{J} \setminus J}\|_\infty\\
\|c^{-1}_{\hat{J} \setminus J} \circ v_{\hat{J} \setminus J} \|_1 &\geq M_J
\end{align*}
 We deduce then $P(\supp(\hat{w}) \subseteq  J) = 1 - P(\hat{J} \setminus J \not = \emptyset) = 1 - P(\text{optimality condition holds}) \to 1$.
\end{proof}

\subsection{Discrete stability (Proof of Proposition \ref{prop:EqDefStableProp} and relation to weak submodularity)}

\primeEqDefStableProp*
\begin{proof}
If $F$ is $\rho$-submodular and $J$ is weakly stable, then $\forall A \subseteq J,  \forall i \in J^c, 0 <\rho [ F(J \cup \{i\}) - F(J) ] \leq F(J \cup \{i\}) - F(J) $, i.e., $J$ is strongly stable w.r.t. $F$.
If $F$ is such that any weakly stable set is also strongly stable, then if $F$ is not $\rho$-submodular, then $\forall \rho \in (0,1]$ there must exists a set $B \subseteq V$, s.t., $\exists A \subseteq B, i \in B^c$, s.t., $ \rho [ F(B \cup \{i\}) - F(B) ] > F(A \cup \{i\}) - F(A) \geq 0$. Hence, $F(B \cup \{i\}) - F(B)>0$, i.e., $B$ is weakly stable and thus it is also strongly stable and we must have $F(A \cup \{i\}) - F(A)>0$. Choosing then in particular, $\rho = \min_{B \subseteq V} \min_{A \subseteq B, i \in B^c} \frac{F(A \cup \{i\}) - F(A)}{F(B \cup \{i\}) - F(B)} \in (0,1]$, leads to a contradiction; $\min_{A \subseteq B, i \in B^c} {F(A \cup \{i\}) - F(A)} \geq \rho [ F(B \cup \{i\}) - F(B) ] > F(A \cup \{i\}) - F(A) $.
\end{proof}

We show that $\rho$-submodularity is a stronger condition than weak submodularity. First, we recall the definition of weak submodular functions.

\begin{definition}[Weak Submodularity (c.f., \cite{das2011submodular, elenberg2016restricted})]
A function $F$ is weakly submodular if $\forall S, L, S \cap L = \emptyset, F(L \cup S) - F(L)>0$, 
$$ \gamma_{S,L} = \frac{\sum_{i \in S} F(L \cup \{i\}) - F(L)}{ F(L \cup S ) - F(L)} >0$$ 
\end{definition}

\begin{proposition}
If $F$ is $\rho$-submodular then $F$ is weakly submodular. But the converse is not true.
\end{proposition}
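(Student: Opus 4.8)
The plan is to prove both directions of the statement: that $\rho$-submodularity implies weak submodularity, and that the converse fails. For the implication, fix disjoint sets $S, L$ with $F(L \cup S) - F(L) > 0$, and enumerate $S = \{i_1, \dots, i_k\}$. The key idea is a telescoping decomposition: writing $L_0 = L$ and $L_t = L \cup \{i_1, \dots, i_t\}$, we have $F(L \cup S) - F(L) = \sum_{t=1}^k \big(F(L_t) - F(L_{t-1})\big) = \sum_{t=1}^k \big(F(L_{t-1} \cup \{i_t\}) - F(L_{t-1})\big)$. Now apply $\rho$-submodularity with $A = L$, $B = L_{t-1}$, $i = i_t$ (noting $L \subseteq L_{t-1}$ and $i_t \notin L_{t-1}$): this gives $\rho\big(F(L_{t-1} \cup \{i_t\}) - F(L_{t-1})\big) \leq F(L \cup \{i_t\}) - F(L)$, i.e., each marginal gain along the chain is bounded above by $\frac{1}{\rho}$ times the corresponding singleton marginal. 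Summing over $t$ yields $\rho\big(F(L \cup S) - F(L)\big) \leq \sum_{i \in S}\big(F(L \cup \{i\}) - F(L)\big)$. Since the left side is strictly positive by assumption and $\rho > 0$, the numerator $\sum_{i \in S}\big(F(L \cup \{i\}) - F(L)\big)$ is strictly positive, so $\gamma_{S,L} = \frac{\sum_{i \in S}(F(L \cup \{i\}) - F(L))}{F(L \cup S) - F(L)} \geq \rho > 0$, which is precisely weak submodularity.

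For the converse, I would exhibit a concrete counterexample: a set function that is weakly submodular (all the ratios $\gamma_{S,L}$ are positive — which for many functions just amounts to monotonicity on the relevant chains) but fails the pairwise $\rho$-submodular inequality for every $\rho \in (0,1]$. The natural obstruction is \emph{supermodular} behavior: $\rho$-submodularity with the best constant requires $\inf \frac{F(A \cup \{i\}) - F(A)}{F(B \cup \{i\}) - F(B)} > 0$ over all $A \subseteq B$, $i \in B^c$ with $F(B \cup \{i\}) - F(B) > 0$; if there is a pair where the singleton marginal $F(A \cup \{i\}) - F(A)$ is zero (or negative) while $F(B \cup \{i\}) - F(B) > 0$, no positive $\rho$ works, yet weak submodularity can still hold because it only constrains the \emph{sum} $\sum_{i \in S}(F(L \cup \{i\}) - F(L))$ against $F(L \cup S) - F(L)$, not individual marginals against larger-set marginals. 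So I would pick a small ground set (say $V = \{1,2,3\}$), define $F$ so that $F(\{1\}) = F(\varnothing)$ (element $1$ adds nothing on its own) but $F(\{1,2,3\}) > F(\{2,3\})$ (element $1$ adds something on top of $\{2,3\}$), while keeping all singleton sums $\sum_{i \in S}(F(L \cup \{i\}) - F(L))$ strictly positive whenever $F(L \cup S) - F(L) > 0$. Verifying weak submodularity then reduces to checking finitely many ratios, and verifying failure of $\rho$-submodularity reduces to pointing at the pair $A = \varnothing$ (or $A = \{1\}$... in fact $A$ with $F(A\cup\{1\}) - F(A) = 0$), $B = \{2,3\}$, $i = 1$.

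The main obstacle is the construction of the counterexample: one must simultaneously engineer a zero (or small) singleton marginal to break $\rho$-submodularity and ensure that \emph{every} weak-submodularity ratio stays strictly positive, which requires care because with a zero singleton marginal the numerator $\sum_{i \in S}(F(L \cup \{i\}) - F(L))$ could accidentally vanish for some $S, L$. Choosing $F$ generically (e.g., perturbing a modular function by a small supermodular term concentrated on the triple) should resolve this, and then the verification is a routine finite check. The first direction is entirely routine given the telescoping argument. I would present the implication in full and give the counterexample with its two short verifications.
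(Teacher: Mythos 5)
Your first direction is correct and is exactly the paper's argument: telescope $F(L\cup S)-F(L)$ along a chain $L = L_0 \subseteq L_1 \subseteq \dots \subseteq L_k = L\cup S$, bound each increment $F(L_{t-1}\cup\{i_t\})-F(L_{t-1})$ by $\tfrac{1}{\rho}\bigl(F(L\cup\{i_t\})-F(L)\bigr)$ using $\rho$-submodularity with $A=L$, $B=L_{t-1}$, and sum to get $\gamma_{S,L}\geq\rho>0$.

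For the converse you have identified precisely the right obstruction (a vanishing singleton marginal at a small set coexisting with a positive marginal of the same element at a larger set kills every $\rho>0$, while weak submodularity only constrains sums of singleton marginals), and this is the same strategy the paper uses. However, your write-up stops at ``I would pick\dots'' and ``a generic perturbation should resolve this'' --- the counterexample is described but never exhibited or verified, so as written the second half is not a complete proof. The finish is short: take $V=\{1,2,3\}$ with $F(\varnothing)=0$, $F(\{i\})=1$ for all $i$, $F(\{1,2\})=1$, $F(\{1,3\})=F(\{2,3\})=2$, $F(\{1,2,3\})=3$. Then $F(\{1,2\})-F(\{1\})=0$ while $F(\{1,2,3\})-F(\{1,3\})=1>0$, so no $\rho>0$ works (note the paper's chosen zero marginal sits at $A=\{1\}$, $i=2$, rather than at the empty set as in your sketch --- placing it at $\varnothing$ is also workable but requires the extra care you flagged); and weak submodularity reduces to checking the finitely many pairs with $|S|\geq 2$ and $L$ empty or a singleton, all of which give $\gamma_{S,L}>0$. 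With that explicit check added, your proof coincides with the paper's.
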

\begin{proof}
If $F$ is $\rho$-submodular then $\forall S, L, S \cap L = \emptyset, F(L \cup S) - F(L)>0$, let $S = \{i_1, i_2, \cdots, i_r\}$
\begin{align*}
F(L \cup S) - F(L) &= \sum_{k =1}^r F(L \cup \{i_1, \cdots, i_k\}) - F(L \cup  \{i_1, \cdots, i_{k-1}\})\\
&\leq  \sum_{k =1}^r \frac{1}{\rho} (F(L \cup \{ i_k\}) - F(L ) ) \\
\Rightarrow  \gamma_{S,T} &= \rho >0.
\end{align*}

We show that the converse is not true by giving a counter-example: Consider the function defined on $V=\{1,2,3\}$, where $F(\{i\}) = 1, \forall i, F(\{1,2\})=1,  F(\{2,3\})=2,  F(\{1,3\})=2,  F(\{1,2,3\})=3$. Then note that this function is weakly submodular. We only need to consider sets $|S|\geq 2$, since otherwise $\gamma_{S,T}>0$ holds trivially. Accordingly, we also only need to consider $L$ which is the empty set or a singleton. In both cases $\gamma_{S,T}>0$. However, this $F$ is not $\rho$-submodular, since $F(1,2) - F(1) = 0 < \rho (F(1,2,3) - F(1,3)) = \rho$ for any $\rho>0$.
\end{proof}

\subsection{Relation between discrete and continuous stability (Proof of Propositions \ref{prop:DS-CS} and \ref{prop:CS-DSpinf}, and Corollary \ref{corr:SubStEq})}

First, we present a useful simple lemma, which provides an equivalent definition of decomposability for monotone function.

\begin{lemma}\label{lem:eqDecompDef}
Given $w \in \R^d, J \subseteq J, \supp(w) = J$, if $\Phi$ is a monotone function, then $\Phi$ is decomposable at $w$ w.r.t $J$ iff  $\exists M_J>0, \forall \delta>0, i \in J^c,$ s.t, 
$$\Phi(w + \delta \1_i) \geq  \Phi(w) + M_J \delta.$$
\end{lemma}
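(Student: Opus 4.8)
The plan is to prove the two implications separately: the forward direction is immediate from the definition of decomposability, and the reverse direction reduces a general admissible perturbation $\Delta$ to a single-coordinate perturbation by exploiting that $\Phi$ is absolute and monotone in absolute values, together with the fact that $\supp(w)=J$ and $\supp(\Delta)\subseteq J^c$ are disjoint.

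First I would handle the ``only if'' direction. Suppose $\Phi$ is decomposable at $w$ w.r.t.\ $J$ with constant $M_J>0$. For any $i\in J^c$ and $\delta>0$, take $\Delta=\delta\1_i$ in the definition of decomposability; since $\supp(\Delta)=\{i\}\subseteq J^c$ and $\|\Delta\|_\infty=\delta$, this yields $\Phi(w+\delta\1_i)\ge\Phi(w)+M_J\delta$ directly, with the same $M_J$.

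For the ``if'' direction, assume there is $M_J>0$ with $\Phi(w+\delta\1_i)\ge\Phi(w)+M_J\delta$ for all $\delta>0$ and $i\in J^c$, and fix an arbitrary $\Delta$ with $\supp(\Delta)\subseteq J^c$. The case $\Delta=0$ is trivial, so assume $\|\Delta\|_\infty>0$ and pick an index $i_{\max}\in J^c$ achieving $|\Delta_{i_{\max}}|=\|\Delta\|_\infty$. The key step is the componentwise bound $|w+\Delta|\ge|w+\|\Delta\|_\infty\1_{i_{\max}}|$: because $\supp(w)=J$ and $\supp(\Delta)\subseteq J^c$ are disjoint (and $i_{\max}\in J^c$), the two vectors agree on $J$ (both equal $|w|$ there), agree at coordinate $i_{\max}$ (both equal $\|\Delta\|_\infty$), and on $J^c\setminus\{i_{\max}\}$ the left side is $|\Delta_k|\ge 0$ while the right side is $0$. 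Since $\Phi(v)=\Phi(|v|)$ and $\Phi$ is monotone in the absolute values, this gives $\Phi(w+\Delta)=\Phi(|w+\Delta|)\ge\Phi(|w+\|\Delta\|_\infty\1_{i_{\max}}|)=\Phi(w+\|\Delta\|_\infty\1_{i_{\max}})$. Applying the hypothesis with $\delta=\|\Delta\|_\infty$ and $i=i_{\max}$ then yields $\Phi(w+\Delta)\ge\Phi(w)+M_J\|\Delta\|_\infty$, i.e.\ decomposability at $w$ w.r.t.\ $J$ with the same constant.

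I do not expect any genuine obstacle: the only care needed is the disjoint-support bookkeeping that validates the coordinatewise comparison $|w+\Delta|\ge|w+\|\Delta\|_\infty\1_{i_{\max}}|$, and the remark that the decomposability constant $M_J$ is preserved under the reduction to a single-coordinate perturbation along $i_{\max}$ — this is essentially the same device used in the proof of Lemma~\ref{lem:decomposable}.
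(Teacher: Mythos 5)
Your proof is correct and follows essentially the same route as the paper: the forward direction by specializing $\Delta=\delta\1_i$, and the reverse direction by reducing a general $\Delta$ supported on $J^c$ to the single coordinate $i_{\max}$ achieving $\|\Delta\|_\infty$ via monotonicity of $\Phi$ in absolute values. Your write-up is in fact slightly more careful than the paper's, which asserts the comparison $\Phi(w+\Delta)\geq\Phi(w+\|\Delta\|_\infty\1_{i_{\max}})$ without spelling out the disjoint-support and absoluteness bookkeeping that you make explicit.
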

\begin{proof}
By definition \ref{def:ContStability}, $ \exists M_J>0, \forall \Delta \in \R^d, \supp(\Delta) \subseteq J^c,$
$$\Phi(w + \Delta) \geq  \Phi(w) + M_J \| \Delta\|_\infty.$$
in particular this must hold for $\Delta = \delta \1_i$. On the other hand, if the inequality hold for all $\delta \1_i$, then given any $\Delta$ s.t. $\supp(\Delta) \subseteq J^c$ let $i_{\max}$ be the index where $\Delta_{i_{\max} }= \| \Delta \|_\infty$ and let $\delta = \| \Delta \|_\infty$, then 
\begin{align*}
\Phi(w + \Delta)  \geq \Phi(w + \delta _{i_{\max} }) \geq  \Phi(w) + M_J \delta = \Phi(w) + M_J \| \Delta\|_\infty.
\end{align*} 
\end{proof}

\primeDSCSProp*
\begin{proof}
We make use of the variational form \eqref{eq:SupportForm}.
Given a set $J$ stable w.r.t to $F$ and $\supp(w) \subseteq J$, let $\kappa^* \in \argmax_{\kappa \in \R^d_+} \{ \sum_{i \in J} \kappa_i^{1/q} |w_i| : \kappa(A) \leq F(A), \forall A \subseteq V\}$, then $\Omega(w) = |w_J|^T(\kappa_J^*)^{1/q}$. 
Note that $\forall A \subseteq J, F(A \cup i) > F(A)$, by definition \ref{def:DisStable}. 
Hence,
$\forall i \in J^c$, we can define $\kappa' \in \R^d_+$ s.t., $\kappa'_J = \kappa_J^*$, $\kappa'_{(J \cup i)^c} = 0$ and $\kappa'_i = \min_{A \subseteq J} F(A \cup i) - F(A)>0$.
Note that $\kappa'$ is feasible, since $\forall A \subseteq J, \kappa'(A) = \kappa^*(A) \leq F(A)$ and $\kappa'(A +  i) = \kappa^*(A) +\kappa'_i \leq F(A) +  F(A \cup i) - F(A) = F(A \cup i)$. For any other set $\kappa'(A) = \kappa'( A \cap (J+i)) \leq  F(A \cap (J+i)) \leq F(A)$, by monotonicity.
It follows then that $\Omega(w + \delta \1_i ) =  \max_{\kappa \in \R^d_+} \{ \sum_{i \in J \cup i}^d \kappa_i^{1/q} |w_i| : \kappa(A) \leq F(A), \forall A \subseteq V\} \geq |w_J|^T(\kappa_J^*)^{1/q} + \delta (\kappa'_i)^{1/q} \geq  \Omega(w) + \delta M$, with $M = (\kappa'_i)^{1/q} >0$. The proposition then follows by lemma \ref{lem:eqDecompDef}.\\

Similarly, the proof for $\Theta_p$ follows in a similar fashion.
We make use of the variational form \eqref{eq:SupportFormNonHomo}.
Given a set $J$ stable w.r.t to $F$ and $\supp(w) \subseteq J$, first note that this implicity implies that $F(J)< +\infty$ and hence $\Theta_p(w) < +\infty$. 
Let $\kappa^* \in \argmax_{\kappa \in \R_+^d}  \sum_{j=1}^d \psi_j(\kappa_j,w_j)  +  \min_{S \subseteq V} F(S) - \kappa(S)$ and $S^* \in \argmin_{S \subseteq J} F(S) - \kappa^*(S) $. 
Note that $\forall S \subseteq J, \forall i \in J^c, F(S \cup i) > F(S)$, by definition \ref{def:DisStable}. Hence,
$\forall i \in J^c$, we can define $\kappa' \in \R^d_+$ s.t., $\kappa'_J = \kappa_J^*$, $\kappa'_{(J \cup i)^c} = 0$ and $\kappa'_i = \min_{S \subseteq J} F(S \cup i) - F(S)>0$.
Note that $\forall S \subseteq J, F(S) - \kappa'(S) = F(S) -\kappa^*(S) \geq F(S^*) -\kappa^*(S^*)$ and $F(S + i) - \kappa'(S+i) = F(S + i) - \kappa^*(S) -\kappa'_i \geq  F(S +i) - \kappa^*(S) - F(S + i) + F(S) \geq  F(S^*) -\kappa^*(S^*)$.  Note also that $\psi_i(\kappa'_i,\delta) = (\kappa'_i)^{1/q} \delta$ if $\delta \leq (\kappa'_i)^{1/p}$, and $\psi_i(\kappa'_i,\delta) = \frac{1}{p} \delta^p + \frac{1}{q} \kappa'_i = \delta ( \frac{1}{p} \delta^{p-1} + \frac{1}{q} \kappa'_i \delta^{-1}) \geq \delta (\kappa'_i)^{1/q} $ otherwise.
It follows then that $\Theta_p(w + \delta \1_i )  \geq \sum_{j \in J} \psi_j(\kappa_j,w_j)  + (\kappa_i')^{1/q} \delta+  \min_{S \subseteq J \cup i} F(S) - \kappa'(S) \geq \sum_{j \in J} \psi_j(\kappa_j,w_j)  + (\kappa_i')^{1/q} \delta+ \min_{S \subseteq J} F(S) - \kappa^*(S) = \Theta_p(w) + \delta M$ with $M = (\kappa_i')^{1/q}>0$. The proposition then follows by lemma \ref{lem:eqDecompDef}.
\end{proof}

\primeCSDSpinfProp*
\begin{proof}
$
F(A + i ) = \Omega_\infty(\1_A + \1_i) = \Theta_\infty(\1_A + \1_i)  > \Omega_\infty(\1_A) = \Theta_\infty(\1_A ) = F(A), \forall A \subseteq J.$
\end{proof}

\primeSubStEqCorr*
\begin{proof}
If $F$ is a monotone submodular function, then $\Omega_\infty(w) = \Theta_\infty(w) = f_L(|w|)$. If $J$ is not weakly stable w.r.t $F$, then $\exists i \in J^c$ s.t., $F(J \cup \{i\}) = F(J)$. Thus, given any $w, \supp(w) = J$, choosing $0 < \delta < \min_{i \in J} |w_i| $, result in $f_L(|w| + \delta \1_i) = f_L(|w| )$, which contradicts the weak stability of $J$ w.r.t to $\Omega_\infty = \Theta_\infty$.
\end{proof}

\end{document}